\documentclass[10pt]{article}
\usepackage[utf8]{inputenc}
\usepackage{authblk}
\usepackage{amsfonts}
\usepackage{graphicx}
\usepackage{bm}
\usepackage{tikz}
\usepackage{enumitem}
\setlist[enumerate]{leftmargin=1.2cm}
\usepackage[authoryear]{natbib}
\usepackage{multirow}
\usepackage{amsmath,amsthm,amssymb}
\usepackage{algorithm,algorithmic}
\usepackage{hyperref}
\usepackage{makecell}
\hypersetup{
    colorlinks=true,
    citecolor=blue,
    urlcolor=blue,
    linkcolor=blue
}
\usepackage{rotating}
\usepackage{float}
\usepackage{dirtytalk}
\usepackage{lscape}
\usepackage{appendix}

\newtheorem{theorem}{Theorem}[section]
\newtheorem{lemma}{Lemma}[section]
\newtheorem{corollary}{Corollary}[theorem]
\newtheorem{prop}{Proposition}[section]

\theoremstyle{definition}
\newtheorem{remark}{\textbf{Remark}}[section]
\newtheorem{example}{Example}[section]

\usepackage[a4paper, total={6in, 9in}]{geometry}
\usepackage{subcaption}

\DeclareMathOperator*{\argmin}{arg\,min}
\DeclareMathOperator*{\vecc}{vec}

\title{Provably robust learning of regression neural networks \\using $\beta$-divergences}

\author[1]{\Large{Abhik Ghosh\footnote{Corresponding author: abhik.ghosh@isical.ac.in}}}
\author[1]{\Large{Suryasis Jana}}
\affil[1]{Indian Statistical Institute, Kolkata, India}

\begin{document}
\maketitle

\large
\begin{abstract}
Regression neural networks (NNs) are most commonly trained by minimizing  the mean squared prediction error,
which is highly sensitive to outliers and data contamination.
Existing robust training methods for regression NNs are often limited in scope and rely primarily on empirical validation, 
with only a few offering partial theoretical guarantees.
In this paper, we propose a new robust learning framework for regression NNs based on the $\beta$-divergence 
(also known as the density power divergence) which we call `rRNet'.  
It applies to a broad class of regression NNs, including models with non-smooth activation functions and error densities, 
and recovers the classical maximum likelihood learning as a special case. 
The rRNet is implemented via an alternating optimization scheme, 
for which we establish convergence guarantees to stationary points under mild, verifiable conditions.
The (local) robustness of rRNet is theoretically characterized through the influence functions of both the parameter estimates 
and the resulting  rRNet predictor, which are shown to be bounded for suitable choices of the tuning parameter $\beta$, 
depending on the error density. We further prove that rRNet attains the optimal 50\% asymptotic breakdown point at the assumed model 
for all $\beta\in(0, 1]$, providing a strong global robustness guarantee that is largely absent for existing NN learning methods.
Our theoretical results are complemented by simulation experiments and real-data analyses, 
illustrating practical advantages of rRNet over existing approaches in both function approximation problems 
and prediction tasks with noisy observations. 
\\

%This divergence depends on a tuning parameter $\beta$ controlling the balance between the robustness and efficiency of the resulting training procedure. 

\noindent\textbf{Keywords:} Machine learning, regression, robust statistical learning, density power divergence, multi-layer perceptrons, outliers, 
influence function, breakdown point. 
\end{abstract}

% \tableofcontents

\section{Introduction}\label{intro}

Over the past decades, neural networks (NNs) have become central tools in machine learning and artificial intelligence, 
largely due to their excellent ability to learn a wide variety of complex data patterns. 
Classical results by \cite{hornik1989multilayer}, \cite{leshno1993multilayer} and others 
established that feedforward NNs with at least one hidden layer are indeed  `universal approximators', 
in the sense that they can approximate any continuous function on compact subsets of $\mathbb{R}^d$ arbitrarily well, 
given a sufficient number of neurons and suitable non-polynomial activation functions \citep{goodfellow2016deep}.
More recent works have shown that the ReLU (rectified linear unit) networks, with appropriate depth 
(number of hidden layers) and size (number of independent model parameters), can efficiently approximate fundamental arithmetic operations
(e.g., squaring, multiplication, etc.), and hence broad classes of analytic functions, including polynomials and trigonometric functions 
\citep{yarotsky2017error, schwab2019deep}. 
A comprehensive review of these approximation-theoretic results is provided in \cite{petersen2024mathematical}.
They highlight the flexibility of NN models for complex regression and function approximation, 
making them practically useful in settings where no prior structural information about 
the underlying input-output relationship is available \citep{bauer2019deep,schmidt2020nonparametric,shen2021robust}.

Despite their expressive power, practical performances of NNs depend critically on 
the learning algorithm used to estimate its model parameters. 
The most commonly used learning algorithms for training regression NNs  are based on the least squares (LS) estimation \citep{white1989learning},
which coincides with the maximum likelihood (ML) estimation under standard assumptions of 
additive independent and identically distributed (IID) Gaussian noise. 
However, these procedures are known to be adversely affected by even a small fraction of contaminating observations, 
leading to instability in both estimation and prediction.
With the increasing prevalence of noisy or contaminated data in modern applications,
this has necessitated growing interest in the development of robust learning algorithms for NN training.  
Early contributions in this direction include the least mean log square (LMLS) estimator of \cite{liano1996robust}, 
which assumes a heavy-tailed Cauchy distribution for the errors. While effective under strong tail assumptions, 
such approaches can suffer efficiency loss when the underlying noise distribution is light-tailed (e.g., Gaussian) 
except for a small fraction of outliers. 
Another widely used attempt to improve robustness of NN training without explicit distributional assumptions 
is to minimize the mean absolute error (MAE) loss, corresponding to the least absolute deviation regression \citep{goodfellow2016deep}.  
Although MAE is more resistant to heavy-tailed noises than the LS estimation, 
it provides limited robustness to leverage points and lacks strong robustness guarantees in complex nonlinear models. 
Trimming-based methods, such as the least trimmed squares (LTS) and least trimmed absolute deviations (LTA), 
were subsequently studied for NN regression by \cite{rusiecki2007robust} and \cite{rusiecki2013robust}, respectively. 
More generally, classical robust regression techniques based on M-estimation \citep{huber1981robust, hampel1986robust} 
have been adapted to NN training and empirically compared for multilayer perceptrons (MLPs) and radial basis function networks 
\citep{kalina2020regression, kalina2022combining}. 
Recent work has also explored alternative criteria such as $K$-divergence based estimators for NN regression \citep{sorek2024robust}. 

Nevertheless, the literature on noise-robust training of regression NNs remains comparatively limited 
and scattered relative to the severity of the problem. 
Most existing robust strategies for regression NNs remain highly context specific and architecture dependent, 
with theoretical analyses either absent or only partially available confined to narrow settings.
The adaptation of classical robustness measures,  such as influence functions and breakdown points, 
to the context of learning NN models remains limited both in scope and rigor, particularly for models with non-smooth architectures 
\citep[see, e.g.,][]{koh2017understanding,basu2020influence,bae2022if}.
In particular, convex M-estimator based learning procedures, including MAE-based training, 
have zero asymptotic breakdown point in regression problems with leverage points, 
and therefore do not provide global robustness guarantees \citep{werner2024global}.
Although trimming-based approaches can achieve higher breakdown points, their efficiency–robustness trade-offs varies across problems 
and a comprehensive theoretical treatment for general NN architectures is still lacking.

In this article, we develop a new robust learning framework for general regression NNs 
with rigorous theoretical guarantees of both local  and global robustness. 
For this purpose, we propose to use the popular \textit{$\beta$-divergences}, 
originally introduced by \cite{basu1998robust} under the name \textit{density power divergence} (DPD). 
Owing to its favorable properties and connections to information-theoretic and statistical principles, 
this divergence has  been extensively adopted in robust inference and machine learning \citep{basu2011statistical,basu2026}.
%it is commonly referred to as the $\beta$-divergence in ML  and as the DPD in robust statistical inference. 
Following the original terminology, we refer to it as the DPD throughout this paper, 
while denoting its tuning parameter by $\beta\in[0,1]$ (see Eq.~\eqref{dpd-def}).
The DPD is successfully applied in a wide range of settings,  with well-established asymptotic and robustness properties, 
including  regression problems \citep{ghosh2013robust,ghosh2016robust,jana2024robust},
% generalized linear models \citep{ghosh2016robust}, nonlinear regression \citep{jana2024robust}, as well as other machine learning problems  such as 
robust singular value decomposition \citep{roy2021rsvddpd} and  principal component analysis \citep{roy2024robust}. 
These make the DPD a natural statistically motivated candidate for constructing provably robust learning algorithms for NN models. 
A brief background on the DPD based estimators is provided in Appendix~\ref{bg} for completeness.

Here we focus on regression NNs with scalar outputs (formally defined in Section~\ref{RegNN-Setup}) for both  
function approximation and prediction of continuous responses under data possible contamination. 
Viewing general NN models with continuous outputs and additive errors as nonlinear regression models, 
where nonlinearity is introduced through activation functions in its hidden layers, 
we define minimum DPD estimators (MDPDEs) for the NN model parameters (weights and biases) by extending the idea from \cite{jana2024robust}. 
The resulting learning framework, which we formally define in Section \ref{rRnet} and refer to as the `{rRNet}', 
provides a statistically grounded alternative to LS/ML-based learning, explicitly designed to mitigate the adversarial effects of outliers 
and contaminated observations by suitably down-weighting their contributions in the estimation.
Importantly, our proposal is not restricted to smooth architectures or specific noise models, 
and remains applicable to most modern networks employing non-smooth activations, such as ReLU, 
and general (possibly non-smooth) log-concave error densities.
%Our proposal can accommodate a wide class of regression NNs, including possibly non-smooth activation functions (e.g., ReLU) and error densities.
%substantially extending existing robust NN learning approaches that are typically restricted to specific architectures or losses.
It include the classical ML based learning as a special case at $\beta=0$, 
while providing its robust generalizations for  $\beta>0$.

Although it may appear conceptually straightforward to extend the definition of MDPDEs from the existing literature to NN settings, 
their computation and rigorous theoretical analysis in general regression NNs are highly nontrivial, 
requiring substantial technical developments to accommodate non-smooth, non-convex and possibly overparameterized NN architectures
with inherent identifiability issues, which together constitute the core contributions of this paper.  
We propose a computationally tractable alternating optimization algorithm for estimating both NN parameters and the error scale in rRNet,
which are subsequently used to produce robust predictions from the fitted networks. Under mild verifiable conditions, 
we establish its monotone descent and convergence to stationary points for both smooth and non-smooth settings via Clarke subdifferential calculus.
Theoretical robustness of our proposal is guaranteed rigorously through both the influence function (IF) analysis of the rRNet functionals 
and their asymptotic breakdown point against most practically relevant contaminating distributions. 
We show that, for all $\beta >0$, the population IFs of both the parameter estimators and the induced regression predictor are bounded, 
establishing local B-robustness of rRNet.  These influence analyses have also been suitably extended for non-smooth cases 
through the smoothing-function approach, with an application to ReLU networks.
Moreover, the proposed rRNet is shown to achieve the maximum possible asymptotic breakdown point (50\%) at the assumed model for all $\beta \in (0, 1]$,
providing a strong global robustness guarantee, which is largely absent in the existing NN literature.
In contrast, we formally demonstrate the unbounded IFs of the classical ML based training (at $\beta=0$) under light-tailed noises
and zero asymptotic breakdown point of the resulting predictions, thereby quantifying its intrinsic non-robustness; 
these also cover the case of standard LS based learning as a special case with Gaussian error distribution.
Through extensive simulation studies on function approximation tasks and applications to real-world regression datasets 
with possibly contaminated observations, we demonstrate that rRNet consistently improves predictive stability of the regression NNs  
compared with existing robust NN training methods in most studied scenarios. 
These results demonstrate that the proved (asymptotic) theoretical advantages indeed translate into improved stability and predictive performance 
of rRNet in practical finite-sample applications.

The rest of the paper is organized as follows. 
Section \ref{proposed} introduces the model framework and the proposed rRNet, along with necessary notation and assumptions. 
Algorithmic convergence of rRNet is then established in Section \ref{optimization}.
Theoretical robustness guarantees are rigorously proved in Sections \ref{robustness} and \ref{BP}, 
covering the IFs and the asymptotic breakdown derivation, respectively. 
Results from our simulation studies and real data applications are presented in Section \ref{empirical},
where we also discuss a concrete implementation of rRNet, with the Gaussian noise assumption, that has been used in our empirical studies. 
Finally, some concluding remarks are given in Section \ref{conclusion}, 
with the broad scope and flexibility of rRNet framework being discussed in Appendix \ref{APP:assumptionms}. 
All technical proofs and additional empirical results are deferred to Appendices \ref{pf}-\ref{add-results}.

\section{The rRNet: Robust learning of regression neural networks} \label{proposed}

\subsection{Regression NN: Model setup and notation} \label{RegNN-Setup}

Let us consider a set of training samples $\mathcal{D}_n = \{(y_i, \bm{x}_i): i=1,\hdots,n\}$ of size $n$, 
where $y_i$ is a real-valued response (outcome) and $\bm{x}_i=(x_{i1},\hdots,x_{ip})^\top \in \mathcal{X} \subseteq \mathbb{R}^p$ 
is the associated covariate vector (input feature) for each $i$. We model the relationship between the outcome  
and the input covariates through a NN function $\mu(\boldsymbol{x}, \boldsymbol{\theta}) : \mathcal{X} \times \Theta \mapsto \mathbb{R}$, 
where $\boldsymbol{\theta}\in\Theta \subseteq\mathbb{R}^d$ is the $d$-dimensional vector of NN model parameters 
(often weights and biases) and $\Theta$ is the allowed parameter space. 

A commonly used regression NN model architecture is the fully connected MLP with $L$ hidden layers, 
formally defined through the recursive relations:
\begin{eqnarray} \label{mlp}
\left. \begin{array}{rl}
\mu(\bm{\bm{x},\theta}) =& 
%(1 ~\phi_L(\bm{z}_L^\top))\bm{w}^{out}, ~~~~
w_0^{out} + \sum\limits_{k=1}^{K_L} w_k^{out} \phi_L({z}_{Lk}),
\\[1em]
~~~\bm{z}_l  =& 
\bm{W}_l\phi_{l-1}(\bm{z}_{l-1}) + \bm{b}_l, 
~~l=1, \ldots, L,~~ 
%\\[.3em]
%~~~~~~~~~~~&\mbox{ with } ~
\bm{z}_0 = \phi_0(\bm{z}_0) = \boldsymbol{x}, 
\end{array}\right\}
\end{eqnarray}
where  $\bm{z}_l = (z_{l1}, \ldots, z_{lK_l})^\top$,  $\bm{W}_l = ((w_{jk}^{(l)}))$, and $\bm{b}_l = (b_{l1}, \ldots, b_{lK_l})^\top$, 
with	

$$
\begin{array}{rl}
	K_l, ~\phi_l :& \mbox{the number of nodes and activation function in the $l$-th hidden layer}, 
%	\\
%	\phi_l = & \mbox{activation function in the $l$-th hidden layer}, 
	\\
	z_{lk}, b_{lk} : & \mbox{net input and bias term to the $k$-th node in the $l$-th hidden layer, }
	\\
	w_{jk}^{(l)} :& \mbox{weight form the $j$-th node of $(l-1)$-th layer to the $k$-th node of $l$-th layer,}
%	\\
%	b_{lj} = & \mbox{bias to the $j$-th node in the $l$-th hidden layer, }
\end{array}
$$
for $j=1, \ldots, K_{l-1}$, $k=1, \ldots, K_l$, at $l=1, \ldots, L$,
and $\bm{w}^{out} = (w_0^{out}, w_1^{out},\ldots,w_{K_L}^{out})^\top$ be the vector of weights 
from the last ($L$-th) hidden layer to the output layer. 
Here, the activation functions ($\phi_l$) are assumed to operate element-wise on input vectors ($\bm{z}_{l-1}$). 
The full parameter vector is 
$\bm{\theta} = (\vecc^\top(\bm{W}_1),\bm{b}_1^\top, \ldots,\vecc^\top(\bm{W}_L),\bm{b}_{L}^\top, \bm{w}^{out\top})^\top$
with $\vecc(\bm{A})$ denoting the $mn$-vector constructed from the entries of an $m\times n$ matrix $\bm{A}$ stacked column-wise, 
and the resulting parameter dimension is 
$d = (p+1)K_1 + (K_1+1)K_2 + \hdots + (K_{L-1} + 1)K_L + (K_L + 1)$. 
The structure of such an example MLP network with $L=2$ hidden layers is illustrated in Figure \ref{fig:mlp}.

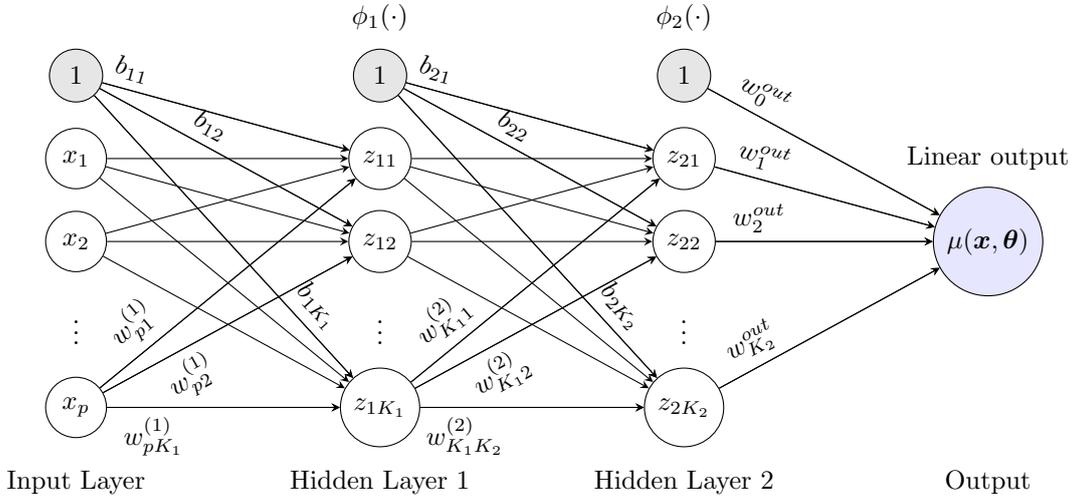
\begin{figure}[!h]
\centering
\begin{tikzpicture}[x=2cm, y=1.1cm, >=stealth]

%---------------- Input Layer ----------------%
\node at (0, -0.9) {Input Layer};
\node[draw, circle, minimum size=0.8cm] (x1) at (0,3) {$x_{1}$};
\node[draw, circle, minimum size=0.8cm] (x2) at (0,2) {$x_{2}$};
\node at (0,1) {$\vdots$};
\node[draw, circle, minimum size=0.8cm] (xp) at (0,0) {$x_{p}$};
% bias input
\node[draw, circle, minimum size=0.7cm, fill=gray!20] (b0) at (0,4) {$1$};

%---------------- Hidden Layer 1 ----------------%
\node at (2, -0.9) {Hidden Layer 1};
\node[draw, circle, minimum size=0.8cm] (h11) at (2,3) {$z_{11}$};
\node[draw, circle, minimum size=0.8cm] (h12) at (2,2) {$z_{12}$};
\node at (2,1) {$\vdots$};
\node[draw, circle, minimum size=0.8cm] (h1K) at (2,0) {$z_{1K_1}$};
% bias 1
\node[draw, circle, minimum size=0.7cm, fill=gray!20] (b1) at (2,4) {$1$};

%---------------- Hidden Layer 2 ----------------%
\node at (4, -.9) {Hidden Layer 2};
\node[draw, circle, minimum size=0.8cm] (h21) at (4,3) {$z_{21}$};
\node[draw, circle, minimum size=0.8cm] (h22) at (4,2) {$z_{22}$};
\node at (4,1) {$\vdots$};
\node[draw, circle, minimum size=0.8cm] (h2K) at (4,0) {$z_{2K_2}$};
% bias 2
\node[draw, circle, minimum size=0.7cm, fill=gray!20] (b2) at (4,4) {$1$};

%---------------- Hidden Layer L ----------------%
% \node at (6, -0.7) {Hidden Layer $L$};
% \node[draw, circle, minimum size=0.8cm] (hL1) at (6,3) {$z^{(L)}_{i1}$};
% \node[draw, circle, minimum size=0.8cm] (hL2) at (6,2) {$z^{(L)}_{i2}$};
% \node at (6,1) {$\vdots$};
% \node[draw, circle, minimum size=0.8cm] (hLK) at (6,0) {$z^{(L)}_{iK_L}$};

% % bias L
% \node[draw, circle, minimum size=0.7cm, fill=gray!20] (bL) at (6,4) {$1$};

%---------------- Output Layer ----------------%
\node at (6, -0.9) {Output};
\node[draw, circle, minimum size=1cm, fill=blue!10] (yhat) at (6,2) {$\mu(\bm{x},\bm{\theta})$};

%---------------- Connections ----------------%
% Input → Layer 1
\foreach \a in {x1, x2, xp, b0}{
    \foreach \b in {h11, h12, h1K}{
        \draw[->] (\a) -- (\b);
    }
}
% Layer 1 → Layer 2
\foreach \a in {h11, h12, h1K, b1}{
    \foreach \b in {h21, h22, h2K}{
        \draw[->] (\a) -- (\b);
    }
}
% % Layer 2 → Layer L
% \foreach \a in {h21, h22, h2K, b2}{
%     \foreach \b in {hL1, hL2, hLK}{
%         \draw[->] (\a) -- (\b);
%     }
% }
% Layer L → Output
\foreach \a in {h21, h22, h2K, b2}{
    \draw[->] (\a) -- (yhat);
}

% Labels describing activations
\node at (2,4.7) {$\phi_1(\cdot)$};
\node at (4,4.7) {$\phi_2(\cdot)$};
% \node at (6,4.7) {$\phi_L(\cdot)$};
\node at (6,3) {Linear output};

% --------------- Edge weights ---------------
\draw[->](b0) -- node[pos=0.1,above,sloped] {$b_{11}$} (h11);
\draw[->](b0) -- node[pos=0.4,above,sloped] {$b_{12}$} (h12);
\draw[->](b0) -- node[pos=0.8,above,sloped] {$b_{1K_1}$} (h1K);
\draw[->] (xp) -- node[pos=0.2,above,sloped] {$w^{(1)}_{p1}$} (h11);
\draw[->] (xp) -- node[pos=0.3,below,sloped] {$w^{(1)}_{p2}$} (h12);
\draw[->] (xp) -- node[pos=0.2,below,sloped] {$w^{(1)}_{pK_1}$} (h1K);

\draw[->](b1) -- node[pos=0.1,above,sloped] {$b_{21}$} (h21);
\draw[->](b1) -- node[pos=0.4,above,sloped] {$b_{22}$} (h22);
\draw[->](b1) -- node[pos=0.8,above,sloped] {$b_{2K_2}$} (h2K);
\draw[->](h1K) -- node[pos=0.2,above,sloped] {$w^{(2)}_{K_1 1}$} (h21);
\draw[->](h1K) -- node[pos=0.3,below,sloped] {$w^{(2)}_{K_1 2}$} (h22);
\draw[->](h1K) -- node[pos=0.2,below,sloped] {$w^{(2)}_{K_1 K_2}$} (h2K);

\draw[->] (b2)  -- node[pos=0.2,above,sloped] {$w^{out}_{0}$} (yhat);
\draw[->] (h21) -- node[pos=0.2,above,sloped] {$w^{out}_{1}$} (yhat);
\draw[->] (h22) -- node[pos=0.2,above,sloped] {$w^{out}_{2}$} (yhat);
\draw[->] (h2K) -- node[pos=0.2,above,sloped] {$w^{out}_{K_2}$} (yhat);

\end{tikzpicture}
\caption{The fully connected MLP with 2 hidden layers with the respective number of hidden nodes being $K_1, K_2$, 
	and the activation functions being $\phi_1, \phi_2$. The output layer has linear activation.}
\label{fig:mlp}
\end{figure}

Although the MLP serves as a canonical example, our theoretical developments apply to 
general regression NN functions satisfying the following minimal assumptions in (N0)--(N1).
The former is necessary to establish a well-defined population-level target for our proposed robust learning algorithm,
while the latter ensures its tractability over successive iterations. 
These are satisfied by a broad class of NN architectures including suitable MLPs; 
see Appendix \ref{APP:NN_assumptionms} for related discussions and examples. 

\begin{itemize}
	\item[(N0)] The regression NN function $\bm{x} \mapsto \mu(\bm{x}, \cdot)$ is measurable for all $\bm{\theta}\in\Theta$, 
	and the parameterization $\boldsymbol{\theta} \mapsto \mu(\cdot, \boldsymbol{\theta})$ 
	is identifiable in $\boldsymbol{\theta}$ up to the usual NN symmetries, i.e., 
	$$
%	\mbox{i.e., }~~~~~~~~~
\mu(\bm{x},\bm{\theta}_1) = \mu(\bm{x},\bm{\theta}_2)~~ a.s.~~~~~
	\Rightarrow ~~~\bm{\theta}_1 = g \cdot \bm{\theta}_2,~~~~~~~~~~~~~~~~~~~~~~~~
	$$
	for some $g$ in a known symmetry group $\mathcal{G}$ acting on the parameter space $\Theta$,
	which is measurable and non-empty. 

	\item[(N1)] For almost all $\bm{x}\in\mathcal{X}$, the NN model parametrization 
	$\boldsymbol{\theta} \mapsto \mu(\bm{x}, \boldsymbol{\theta})$ is locally Lipschitz continuous on $\Theta$.
%	and is either semi-algebraic or globally subanalytic. 
%	everywhere and differentiable almost everywhere in $\bm{\theta}\in\Theta$. 

\end{itemize}

In practice, the use of nonlinear activation functions (e.g., ReLU, sigmoid, etc.) in the NN model makes 
$\mu(\bm{x}, \bm{\theta})$ in \eqref{mlp} a nonlinear function in the parameters. 
Consequently, assuming random (IID) mean-zero additive noises, $\varepsilon_i$s, in the responses $y_i$s, 
we express the regression NN model as a general NLR model given by
\begin{equation}\label{mlp-mean-fn}
    y_i = \mu(\bm{x}_i, \bm{\theta}) + \varepsilon_i,\ i = 1,2,\ldots,n.
\end{equation}
Then, training this regression NN model \eqref{mlp-mean-fn} based on a given dataset $\mathcal{D}_n$ 
involves finding the optimal model weights (parametrized in $\bm{\theta}$) by minimizing a suitably chosen loss function. 
The most common and widely used loss function is the mean squared error (MSE) loss given by \citep{white1989learning} 
\begin{equation}\label{mse-loss}
\mathcal{L}_n(\bm{\theta}|\mathcal{D}_n) = \frac{1}{n} \sum_{i=1}^n (y_i - \mu(\bm{x}_i,\bm{\theta}))^2.
\end{equation}
This loss is often minimized using a suitable (stochastic) gradient based optimization method, with the gradients being computed via backpropagation, 
to obtain the resulting LS estimate (LSE) of $\bm{\theta}$. 
This LSE is then used to predict the response at any given input vector $\bm{x}$ 
and subsequently to estimate the noise variance \citep{goodfellow2016deep}.
Note that the LSEs are independent of the distribution of random noises, but coincide with the ML estimates for Gaussian error distribution. 
This standard NN training yields statistically optimal results under pure (clean) data
but is known to exhibit severe instability under even  a small amount  of data contamination. 

%
%As noted in the introduction, there have been attempts to robustly train the regression NN model in \eqref{mlp-mean-fn} 
%to obtain stable parameter estimates and predictions under different types of data contamination 
%by using alternative loss functions or heavy tailed error distributions. Here, in the following subsection,  
This work introduces a robust training algorithm for regression NNs using the DPD ($\beta$-divergences) 
which we call `rRNet'  --  the short form of \textit{(r)obust learning of (R)egression neural (Net)works}.
The associated DPD-loss function, defined formally in Eq.~\eqref{dpd-loss-gen} below, 
crucially depends on the model density of random errors to downweight the effects of contaminating observations,
characterized by their low probability of occurrence under the assumed model.  
In this respect, we make the following  assumption on the allowed error distributions within our rRNet framework.

\begin{itemize}
	\item[(A0)]  The random errors $\varepsilon_1, \ldots, \varepsilon_n$ are IID with density $\frac{1}{\sigma}f(\frac{\varepsilon}{\sigma})$,
	where $f$ is a continuous univariate density with mean $0$ and variance $1$ 
	that is strictly log-concave almost everywhere (a.e.) on its support $\mathbb{R}$. 
%	The error variance $\sigma^2$ is bounded away from zero, i.e., $\sigma \geq \sigma_0$ for some $\sigma_0>0$.
%	Additionally $\int f^{1+\beta} < \infty$ for the tuning parameter $\beta>0$ defining the rRNet.   
\end{itemize}

Note that, Assumption (A0) is satisfied by most practical choices of error densities under regression settings, 
such as Gaussian, Laplace, logistic, generalized Gaussian, Gumbel, etc.,
thereby ensuring the flexibility of  the proposed rRNet under all such possible noise/response distributions. 
For any such error density $f$, we then define its score function  $u$ as any measurable subgradient of $\log f$, 
i.e., $u(\varepsilon) \in \partial \log f(\varepsilon)$, the Clarke subdifferential of $\log f$ at $\varepsilon$
\citep{clarke1990optimization}. Under (A0), this score function $u$ is unique a.e. on $\mathbb{R}$, 
except only at kinks of $f$ which are measure-zero points. 
We refer to Appendix \ref{APP:NN_assumptionms} for further important properties of $f$ and $u$ under (A0).

Additionally, throughout the paper, we will use the following notation. 
We say a function is $\mathcal{C}^k$ ($k$-smooth) if it is $k$-times continuously differentiable, for any $k=1, 2, \ldots$. 
The first and second order partial derivatives of suitably differentiable functions 
with respect to any parameter vector $\bm{\omega}$ (which can be any part of $\bm{\theta}$, $\sigma$ or both)
are denoted by $\nabla_{\bm{\omega}}$ and $\nabla^2_{\bm{\omega}}$, respectively. For a possibly non-smooth function, 
its the Clarke subdifferential with respect to any parameter $\bm{\omega}$ is denoted by  $\partial_{\bm{\omega}}$. 
Also, $\bm{A}^+$ and $Ker(\bm{A})$ denotes the Moore–Penrose pseudo-inverse and the kernel (null-space) for any matrix $\bm{A}$,
while its usual inverse is denoted by $\bm{A}^{-1}$. Finally, we define 
\begin{eqnarray}
	&&C^{(\beta)}_{i,j} = \int s^i u^j(s) f^{1+\beta}(s) ds, ~~~~~ \mbox{ for }i, j = 0, 1, 2, \ldots, 
	~~\mbox{and any } \beta\geq 0.
	\nonumber
%	\\  
%	&&\bm{\dot{\mu}}_n(\bm{\theta}) = [\nabla_{\bm{\theta}}\mu(\bm{x}_1,\bm{\theta})~ 
%	\nabla_{\bm{\theta}}\mu(\bm{x}_2,\bm{\theta})~\cdots~ \nabla_{\bm{\theta}}\mu(\bm{x}_n,\bm{\theta})], ~~\mbox{ and }~~
%	\bm{J}_n(\bm{\theta}) = \frac{1}{n}\bm{\dot{\mu}}_n(\bm{\theta})^\top \bm{\dot{\mu}}_n(\bm{\theta}). \nonumber
\end{eqnarray}

\subsection{DPD-based robust learning via rRNet}\label{rRnet}

The DPD between two densities $g$ and $f$, defined with respect to a common dominating measure $\lambda$ 
(which is the Lebesgue measure here), is given by \citep{basu1998robust}
\begin{equation} \label{dpd-def}
    d_\beta(g,f) =
    \begin{cases}
        \int \left\{f^{1+\beta} - \left(1+\frac{1}{\beta}\right)f^\beta g + \frac{1}{\beta}g^{1+\beta}\right\} d\lambda, & \mbox{if}~~~ \beta>0,\\
        \int g \ln\left(\frac{g}{f}\right) d\lambda, & \mbox{if}~~~ \beta = 0,
    \end{cases}
\end{equation}
where $\beta\geq 0$ is a tuning parameter controlling the trade-off between robustness and efficiency. 
At $\beta=0$, we have $\lim_{\beta\downarrow 0}d_\beta(g,f) = d_0(g,f)$, which is the same as the classical Kullback-Leibler divergence (KLD). 
On the other hand, it gives the squared $L_2$ distance at $\beta=1$; see \cite{basu2011statistical,basu2026} for more details. 

Since the MLE is known to be a minimizer of the empirical KLD measure
between the data and the assumed model density, it is natural to seek for a robust extension of 
the MLE by minimizing any suitable alternative divergence measure instead of the KLD, 
and the DPD is shown to be very effective for this purpose in several statistical models \citep{basu2026}. 
Motivated by their success, our proposed learning algorithm \texttt{rRNet} estimates the NN model parameters $\boldsymbol{\theta}$
(along with the error scale $\sigma$) by minimizing the DPD-loss function given by
\begin{equation}\label{dpd-loss-gen}
\mathcal{L}_{n,\beta}(\bm{\eta}) =   \mathcal{L}_{n,\beta}(\bm{\eta}|\mathcal{D}_n) 
    = \frac{1}{n}\sum_{i=1}^n V_{\beta}(y_i,\bm{x}_i|\bm{\eta}), \mbox{ for all }\beta\geq 0,
\end{equation}
with respect to both $\bm{\theta}$ and $\sigma$ simultaneously, where $\boldsymbol{\eta} = (\bm{\theta}^\top, \sigma)^\top$ and 
\begin{equation*}
    V_{\beta}(y,\bm{x}|\bm{\eta}) = 
    \begin{cases}
        \frac{1}{\sigma^\beta}C^{(\beta)}_{0,0}  - \left(1 + \frac{1}{\beta}\right) \frac{1}{\sigma^\beta} 
        f^\beta\left(\frac{y - \mu(\bm{x},\bm{\theta})}{\sigma}\right) + \frac{1}{\beta}, & \mbox{if}~~ \beta>0,\\
        \ln \sigma - \ln f\left(\frac{y - \mu(\bm{x},\bm{\theta})}{\sigma}\right), & \mbox{if}~~ \beta=0.
    \end{cases}
\end{equation*}
The derivation of this DPD-loss function (\ref{dpd-loss-gen}) from (\ref{dpd-def})  follows directly from 
the definition of the MDPDE under a general non-homogeneous setup discussed in Appendix \ref{bg}.
We just need to note that, given the input features $\bm{x}_i$, each  $y_i$ is independent having model density 
$f_{i,\boldsymbol{\eta}}(y) = f_{\bm{\eta}}(y|\boldsymbol{x}_i) = \frac{1}{\sigma}f((y - \mu(\bm{x}_i,\bm{\theta}))/\sigma)$, 
for $i=1,\ldots, n$, under Assumption (A0).
Following the literature on DPD based inference, 
we refer to the resulting parameter estimate, a minimizer of \eqref{dpd-loss-gen} obtained under rRNet, 
as the MDPDE with tuning parameter $\beta\geq 0$ and denote it as 
$\widehat{\bm{\eta}}_{n,\beta} = (\widehat{\bm{\theta}}_{n,\beta}^\top, \widehat{\sigma}_{n,\beta})^\top$. 
The corresponding fitted regression function, $\widehat{\mu}_{n,\beta}(\bm{x}) = \mu(\bm{x}, \widehat{\bm{\theta}}_{n,\beta})$
at a given feature value $\bm{x}$, will be referred to as the `rRNet predictor',
while the whole trained NN model will be termed the `fitted rRNet'.
Throughout the paper, we will write both $\bm{\eta}$ and $(\bm{\theta}, \sigma)$ interchangeably 
for the arguments of the DPD-loss function as the context requires.

Note that, at $\beta=0$, the simplified DPD-loss function, obtained as $\beta\downarrow 0$, 
reduces to the negative log-likelihood. Thus, the MDPDE at $\beta=0$ is nothing but the MLE, 
which is also the LSE in the case of Gaussian error distribution
(see Example \ref{EX:Gaussian_error}).

\begin{example}[Gaussian error case]\label{EX:Gaussian_error}
If $f$ is the density of a standard normal distribution, i.e., 
when $\varepsilon_i$s are IID Gaussian with mean zero and variance $\sigma^2$, 
the DPD-loss function \eqref{dpd-loss-gen}, at any $\beta>0$,  turns out to be
    \begin{equation} \label{dpd-loss}
        \mathcal{L}_{n,\beta}(\bm{\eta}|\mathcal{D}_n) = \frac{1}{(\sigma\sqrt{2\pi})^\beta\sqrt{1+\beta}} - \frac{1+\beta}{\beta}\frac{1}{n(\sigma\sqrt{2\pi})^\beta}\sum_{i=1}^n e^{-\frac{\beta}{2\sigma^2}(y_i-\mu(\bm{x}_i, \bm{\theta}))^2} + \frac{1}{\beta}.
%         ~~ \mbox{for } \beta>0,
    \end{equation}
At $\beta=0$, it again coincides with the negative log-likelihood function given by 
$\mathcal{L}_{n,0}(\bm{\eta}|\mathcal{D}_n) = \ln(\sqrt{2\pi}\sigma)  + \frac{1}{2\sigma^2}\mathcal{L}_n(\bm{\theta}|\mathcal{D}_n)$;
minimizing this loss is thus equivalent the standard training of the regression NN using the MSE loss $\mathcal{L}_n$
and estimating $\sigma$ from the trained model. 
It may be noted that Gaussian error distribution satisfies the required Assumption (A0).
\qed
\end{example}

\begin{remark}\label{REM:1}
Although the DPD-loss function, and hence the resulting rRNet, is defined for all $\beta\geq 0$, 
and many of our theoretical results are proved for all $\beta>0$, 
the choice $\beta>1$ is never suggested in practice because of significant loss in efficiency under pure data.
See \cite{basu2011statistical,basu2026} for details on the role of $\beta$ in the MDPDEs computed under different parametric statistical models.  
Accordingly, we also restrict  ourselves to the choices $\beta \in[0,1]$ for all empirical studies here. 
%and also in some theoretical results.
\qed 
\end{remark}

% In all the numerical experiments presented in this paper, we have employed the above DPD-based NN learning algorithm with the assumption of Gaussian errors, i.e., used the loss function given in \eqref{dpd-loss} instead of the general form in \eqref{dpd-loss-gen}. Our implementation is available publicly at our GitHub repository\footnote{\url{https://github.com/Suryasis124/DPD-based-Robust-regression-neural-network-.git}}, which can be used by anyone to reproduce our empirical results and also to analyze their datasets robustly. 
Since it is not straightforward to minimize the DPD-loss with $\beta>0$ for a given NN architecture (not even for Gaussian error), 
we propose to solve this problem via an alternating optimization strategy --- 
minimizing $\mathcal{L}_{n,\beta}(\bm{\theta},\sigma|\mathcal{D}_n)$ cyclically with respect to $\bm{\theta}\in\Theta$ for a given $\sigma$ 
and with respect to $\sigma$ for a given  $\bm{\theta}$. 
For theoretical stability of the DPD-loss and its optimization dynamics, throughout the paper, 
we assume the error variance $\sigma^2$ to be bounded away from zero, i.e., $\sigma \geq \sigma_0$ for some $\sigma_0>0$.
In practice, $\sigma_0$ may be chosen to be a small positive number based on the underlying context. 
Accordingly the feasible parameter space for $\sigma$ is $[\sigma_0, \infty)$
while the same ($\Theta$) for $\bm{\theta}$ should be chosen based on the assumed NN architecture. 
Algorithm \ref{Alg-dpd-nn} below presents this basic schema pf the rRNet,
which is further detailed and theoretically justified in Section \ref{optimization}.
%a possible implementation used in our numerical experiments,  for the Gaussian error case is provided in Section \ref{implementation}.

\begin{center}
\begin{minipage}{0.85\textwidth} 
\begin{algorithm}[H]
\caption{The \texttt{rRNet} (basic schema)}
\label{Alg-dpd-nn}
\begin{algorithmic}[1]
	\makeatletter
	\setcounter{ALC@line}{-1}
	\makeatother
	
    \STATE \textbf{Input:} Training data $\mathcal{D}_n$, tuning parameter $\beta\geq 0$ and a tolerance limit $\varepsilon>0$. 
    \STATE \textit{Initialization:} Set $k = 0$. Fix suitable initial values $(\widehat{\bm{\theta}}^{(0)}, \widehat{\sigma}^{(0)})$. 
    \vspace{.6em}
    \item[] \hspace*{-1.2em} 	\textbf{do} \\[0.3em]
	\begin{ALC@g}
		\STATE Update the value of $\bm{\theta}$ as $\widehat{\bm{\theta}}^{(k+1)} = \argmin\limits_{\bm{\theta}\in\Theta} \mathcal{L}_{n,\beta}(\bm{\theta},\widehat{\sigma}^{(k)})$.
    	\STATE Update the value of $\sigma$ as $\widehat{\sigma}^{(k+1)} = \argmin\limits_{\sigma\geq\sigma_0} \mathcal{L}_{n,\beta}(\widehat{\bm{\theta}}^{(k+1)}, \sigma)$.
    %\item 4. If $\left| L_n^{\beta} \left( \widehat{\bm{\theta}}^{(k+1)}, \widehat{\sigma}^{(k+1)} \right) - L_n^{\beta} \left( \widehat{\bm{\theta}}^{(k)}, \widehat{\sigma}^{(k)} \right) \right| < \epsilon$ \\\\
    %\hspace{1cm} Stop;
    %\item ~~~~Else set $k \gets k+1$, go to step 2.
    	\STATE Set $k \gets k+1$. \\[0.5em] %, and repeat Steps 2 and 3 until the fitted model stabilizes (up to the $\varepsilon$-limit).
    \end{ALC@g}
    \item[] \hspace*{-1.2em} \textbf{while} the fitted model is not stabilized (up to the $\varepsilon$-limit).  \\[0.6em]
	\STATE \textbf{Return:} The rRNet estimates 
	$\widehat{\bm{\theta}}_{n,\beta} = \widehat{\bm{\theta}}^{(k+1)}$ and 	$\widehat{\sigma}_{n,\beta}= \widehat{\sigma}^{(k+1)}$.
\end{algorithmic}
\end{algorithm}
\end{minipage}
\end{center}

%\subsection{Statistical Target of the rRNet}
\subsection{Population-Level Target and Identifiability of rRNet}
\label{identifiability}

Before studying the optimization dynamics of the proposed rRNet,
we first establish that it indeed aims to estimate a meaningful and well-defined population-level target, 
%This is crucial because all subsequent robustness guarantees of rRNet are valid and interpretable 
%only when the underlying statistical target itself is well posed.
%This characterization of the population objective allows us to rigorously identify the statistical target of the rRNet, 
which  forms the basis for all subsequent optimization and robustness guarantees.

For this purpose, we note that the population-level DPD-loss functional for the regression NN model \eqref{mlp-mean-fn} is given by 
$\mathcal{L}_{\beta}^*(\bm{\eta}|G) = E_{G}\left[V_{\beta}(Y,\bm{X}|\bm{\eta})\right]$, for any $\beta\geq 0$,
where the expectation is taken with respect to the (true) joint distribution $G$ of the response variable ($Y$) 
and the feature vector ($\bm{X}$) that generated  the sample data $\mathcal{D}_n$.
Then, the associated minimum DPD functional (MDPDF) at a given $\beta\geq 0$, denoted by 
$\bm{T}_\beta(G) = (\bm{T}_{\beta}^{\bm{\theta}}(G)^\top, \bm{T}_{\beta}^\sigma(G))^\top$,
is defined as a minimizer of $\mathcal{L}_{\beta}^*(\bm{\theta},\sigma|G)$, 
and  the population-level rNet predictor   is given by  
$\mu_\beta^*(\boldsymbol{x}) =  \mu(\bm{x}, \bm{T}_\beta^{\bm{\theta}}(G))$ at $\boldsymbol{X}=\bm{x}$.

However, the above definitions are meaningful when features are assumed to be stochastic, and do not directly cover the fixed design cases. 
Since it is not essential to explicitly specify the distribution of features in our case, 
instead of $G$, we will work with the conditional joint distribution $\overline{G}_n$, given the observed feature values $\bm{x}_1, \ldots, \bm{x}_n$. 
We  can formally express $\overline{G}_n$ as  
$$
\overline{G}_n(y, \bm{x}) = \frac{1}{n}\sum_{i=1}^n \wedge_{\boldsymbol{x}_i}(\bm{x}) G_i(y), 
$$
where $\wedge_{\boldsymbol{x}_i}$ and $G_i$ denote the degenerate distribution function at the point $\bm{x}_i$ and  
the true conditional distribution of $Y$ given $\bm{X}=\bm{x}_i$ (under $G$), respectively, for each $i=1, \ldots, n$.
This construction unifies both fixed and random design settings:  
$\overline{G}_n(y, \bm{x})$ is the deterministic design-weighted distribution function under fixed design models
while it is a stochastic plug-in estimate of $G$ avoiding the specification of the distribution of $\bm{X}$ for random design cases. 
%Thus, it unifies both the set-ups allowing us to work only with the conditional distribution of the response given the covariates 
%even at the population level.  
Adopt this conditional framework,  we consider the population-level DPD-loss function evaluated at $\overline{G}_n$ as given by 
\begin{eqnarray}
\mathcal{L}_{n,\beta}^*(\bm{\eta}) = E_{\overline{G}_n}\left[V_{\beta}(Y,\bm{X}|\bm{\eta})\right] 
= \frac{1}{n} \sum_{i=1}^n E_{G_i}\left[V_{\beta}(Y,\bm{x}_i|\bm{\eta})\right],
\label{dpd-loss-pop}
\end{eqnarray}
and study the MDPDF of the model parameters at $\overline{G}_n$ defined as
\begin{eqnarray}
\bm{T}_\beta(\overline{G}_n) = (\bm{\theta}_{n,\beta}^{*\top}, \sigma_{n,\beta}^*)^\top 
= \arg\min_{(\bm{\theta}, \sigma)} \mathcal{L}_{n,\beta}^*(\bm{\theta},\sigma) 
= \arg\min_{\bm{\eta}} \frac{1}{n} \sum_{i=1}^n d_\beta(g_i, f_{i, \bm{\eta}}),
\label{EQ:MDPDF}
\end{eqnarray} 
where $g_i$ denotes the (true) data generating density of $G_i$ for each $i=1, \ldots, n$,
and we put $\bm{\theta}_{n,\beta}^* = \bm{T}_\beta^{\bm{\theta}}(\overline{G}_n)$
$\sigma_{n,\beta}^* = {T}_\beta^{\sigma}(\overline{G}_n)$ for simplicity of notation. 
Thus, the statistical target (population-level predictor) of our rRNet with tuning parameter $\beta\geq 0$ is given by
$\mu_{n,\beta}^*(\boldsymbol{x}) =  \mu(\bm{x}, \bm{\theta}_{n,\beta}^*)$.
It may be noted that, studying the proposed rRNet with respect to these estimates 
$(\bm{\theta}_{n,\beta}^*, \sigma_{n,\beta}^*)$ and the target $\mu_{n,\beta}^*$ is enough even under random design setups, 
since they  converge almost surely to $(\bm{T}_{\beta}^{\bm{\theta}}(G), \bm{T}_{\beta}^\sigma(G))$ and  $\mu_\beta^*$, respectively, 
under mild ergodicity condition on the distribution of $\bm{X}$.

Now, under Assumption (N0), the NN mapping  $\bm{\theta} \mapsto \mu(\cdot, \bm{\theta})$ is identifiable  up to a known symmetry group $\mathcal{G}$, 
and the family of model error densities is also identifiable in $\sigma$ under (A0). 
%These conditions are mild and are satisfied by a broad class of NN architectures 
%and error distributions used in practice (see Appendix \ref{APP:NN_assumptionms}). However, they 
Despite their generality and broad applicability (see Appendix \ref{APP:NN_assumptionms}), 
these assumptions are indeed sufficient to formalize the well-definedness of the population-level rRNet target, 
as presented in the following theorem.

\begin{theorem}[Identifiability of the rRNet objective]\label{THM:identifiability}
Suppose Assumptions (N0) and (A0) hold for a given $\beta \geq 0$.
Then, in either the fixed or random design case, the (conditional) population-level  DPD-loss  
$\mathcal{L}_{n,\beta}^*(\bm{\theta},\sigma)$, defined in \eqref{dpd-loss-pop}, satisfies
\begin{eqnarray*}
\mathcal{L}_{n,\beta}^*(\bm{\theta}_1,\sigma_1)  = \mathcal{L}_{n,\beta}^*(\bm{\theta}_2,\sigma_2) ~~~
\Rightarrow &&  \sigma_1 = \sigma_2, ~~~~ \mu(\bm{x}, \bm{\theta}_1) = \mu(\bm{x}, \bm{\theta}_2)~~~a.s., \\
~~~\mbox{ and hence }&& \bm{\theta}_1 = g \cdot \bm{\theta}_2 
~~\mbox{ for some } g \in \mathcal{G}.
\end{eqnarray*}
In particular, the population-level rRNet objective admits a unique minimizer in function space 
(the target rRNet predictor $\mu_{n,\beta}^*$), together with a unique MDPDF $\sigma_{n,\beta}^*$ of the error variance. 
However, the MDPDF $\bm{\theta}_{n,\beta}^*$ of the NN parameters $\boldsymbol{\theta}$ is unique 
only up to the symmetry group $\mathcal{G}$ of the network architecture.  
\end{theorem}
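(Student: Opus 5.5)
The plan is to read the implication in the theorem as a statement about minimizers: if $\mathcal{L}_{n,\beta}^*(\bm{\theta}_1,\sigma_1)=\mathcal{L}_{n,\beta}^*(\bm{\theta}_2,\sigma_2)=\min_{\bm{\eta}}\mathcal{L}_{n,\beta}^*(\bm{\eta})$, then the two parameters agree in function space and in scale. Two literal points with equal loss values need not coincide, so this is the only sensible reading. The argument starts from the representation in \eqref{EQ:MDPDF}, $\mathcal{L}_{n,\beta}^*(\bm{\eta})=\frac1n\sum_i d_\beta(g_i,f_{i,\bm{\eta}})$ up to a term not depending on $\bm{\eta}$. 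From there it uses the basic divergence property of the DPD (Appendix \ref{bg}): $d_\beta(g,f)\ge 0$, with equality iff $g=f$ a.e.

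For $\beta>0$ this is a pointwise inequality, namely the convexity of $t\mapsto t^{1+\beta}$ giving $f^{1+\beta}-(1+1/\beta)f^\beta g+\frac1\beta g^{1+\beta}\ge 0$. For $\beta=0$ it is Gibbs' inequality. Since $\int f^{1+\beta}<\infty$ by continuity and log-concavity of $f$ under (A0), all terms are finite.

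\textbf{Step 1 (model case).} When the data follow the assumed model, $g_i=f_{i,\bm{\eta}_0}$ for some $\bm{\eta}_0=(\bm{\theta}_0,\sigma_0^*)$. The minimum value of $\mathcal{L}_{n,\beta}^*$ is then attained with every summand zero. Hence any minimizer $\bm{\eta}_j$, $j=1,2$, satisfies
\[
\frac{1}{\sigma_j}f\!\Big(\frac{y-\mu(\bm{x}_i,\bm{\theta}_j)}{\sigma_j}\Big)=\frac{1}{\sigma_0^*}f\!\Big(\frac{y-\mu(\bm{x}_i,\bm{\theta}_0)}{\sigma_0^*}\Big)\quad\text{for a.e. } y \text{ and each } i.
\]

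\textbf{Step 2 (location--scale identifiability).} Under (A0), $f$ has mean $0$ and variance $1$. Taking the mean and the variance of both sides in $y$ gives $\mu(\bm{x}_i,\bm{\theta}_j)=\mu(\bm{x}_i,\bm{\theta}_0)$ and $\sigma_j=\sigma_0^*$. Therefore $\sigma_1=\sigma_2$ and $\mu(\bm{x}_i,\bm{\theta}_1)=\mu(\bm{x}_i,\bm{\theta}_2)$ at every design point, which is the a.s.\ equality under $\overline{G}_n$. In the random design case the same holds under $G$ after passing to the limit, using the measurability in (N0).

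\textbf{Step 3 (symmetry).} Applying (N0) to the a.s.\ equality of the network functions gives $\bm{\theta}_1=g\cdot\bm{\theta}_2$ for some $g\in\mathcal{G}$. Uniqueness in function space and non-uniqueness in $\bm{\theta}$ modulo $\mathcal{G}$ follow at once. Conversely, $\mathcal{L}^*_{n,\beta}$ is $\mathcal{G}$-invariant because it depends on $\bm{\theta}$ only through the values $\mu(\bm{x}_i,\bm{\theta})$, so the whole orbit consists of minimizers.

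\textbf{Main obstacle.} Steps 1--3 give a complete argument only for the correctly specified $g_i$. For general (possibly contaminated) $g_i$, the minimum is no longer zero, and Step 1 fails. There I would fix $\sigma$ and view the loss as a function of the vector $m=(\mu(\bm{x}_i,\bm{\theta}))_{i}$. Each summand $m_i\mapsto -E_{G_i}f^\beta((Y-m_i)/\sigma)$ is the negative of a convolution of $g_i$ with the log-concave function $f^\beta$, hence has log-concave negative. I would then try to show that this summand has a unique minimizer in $m_i$, via strict log-concavity of $f$ a.e.; log-concavity is preserved under convolution with a density (Prékopa), so this route is plausible. The remaining task is a separate one-dimensional argument that the optimal $\sigma\ge\sigma_0$ is unique given the optimal $m$. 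This is the step I expect to be hardest, and I am not certain it can be established without further assumptions on the $g_i$.
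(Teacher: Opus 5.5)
Your Steps 1--3 are the paper's argument made explicit. The paper's whole proof is the remark that the DPD is a genuine divergence, combined with location--scale identifiability under (A0) and symmetry identifiability (N0). That is exactly your chain: the zero minimum forces every summand to vanish, matching means and variances gives $\mu(\bm{x}_i,\bm{\theta}_j)=\mu(\bm{x}_i,\bm{\theta}_0)$ and $\sigma_j=\sigma_0^*$, and then (N0) applies. Reading the implication as a statement about minimizers is also correct, since equal loss values at two arbitrary points carry no such information.

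You should drop the \emph{main obstacle} plan, because it aims at a statement that is false for $\beta>0$.
\begin{itemize}
\item Prékopa's theorem concerns convolutions of two log-concave functions. The map $m_i\mapsto E_{G_i}f^\beta((Y-m_i)/\sigma)$ convolves $f^\beta$ with an arbitrary $g_i$, which may be multimodal, so no log-concavity follows.
\item Uniqueness genuinely fails under misspecification. Take a single design point, Gaussian $f$, $\beta\in(0,1)$, and $g_1=\frac12\mathcal{N}(-a,s^2)+\frac12\mathcal{N}(a,s^2)$ with $s\geq\sigma_0$ fixed and $a$ large (a 50\% contamination).
\item The loss in $(m_1,\sigma)$ is invariant under $m_1\mapsto -m_1$. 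As $a\to\infty$, its value at $(a,s)$ tends to $\frac1\beta+s^{-\beta}C^{(\beta)}_{0,0}\bigl[1-\frac{1+\beta}{2\beta}\bigr]$, which is a fixed amount below $\frac1\beta$. By contrast, every point with $m_1=0$ has loss at least $\frac1\beta-O(a^{-\beta})$.
\item Hence the minimizers come in pairs $(\pm m_1^*,\sigma^*)$ with $m_1^*\neq 0$. These are two different predictors, not related by any $g\in\mathcal{G}$.
\end{itemize}
So the theorem should be read under correct specification, $g_i=f_{i,\bm{\eta}_0}$. That is the only setting in which the paper's one-line appeal to the divergence property yields anything, and there your Steps 1--3 already constitute a complete proof.

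One correction within that proof: remove the \emph{passing to the limit} remark for random designs. At fixed $n$ the loss sees only $\bm{x}_1,\dots,\bm{x}_n$, so Step 2 gives equality of $\mu$ only at the design points. Step 3 therefore needs the \emph{a.s.} in (N0) to be read relative to the design measure. That is also how the conditional $\overline{G}_n$ construction covers both fixed and random designs, with no limit involved.
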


The proof of Theorem \ref{THM:identifiability} follows directly from the fact that the DPD is a genuine statistical divergence 
\citep{basu2011statistical}, along with Assumptions (A0) and (N0). 
%ensures identifiability of error distributions in $\sigma$. 
Consequently, the rRNet targets a uniquely defined regression function $\mu_{n, \beta}^*(\bm{x})$ and error scale, 
up to the unavoidable symmetry-induced non-identifiability of the NN parameters. 
Such symmetry groups $\mathcal{G}$ naturally arise from neuron permutations	and sign/scaling symmetries in multilayer NNs,
which are widely acknowledged in NN literature; see, e.g., \cite{goodfellow2016deep,ran2017parameter}
%With this population-level justification in place, we now turn to the optimization landscape and dynamics of rRNet learning.

In our subsequent robustness analysis of the rRNet predictor, we use a related concept of \textit{admissible feature domain},
characterizing input feature values whose parameter sensitivities (gradient directions) are entirely informed by the training data. 
Formally, we define the admissible feature domain of a give NN model $\mu$ (not necessarily smooth) 
based on the observed training data $\mathcal{D}_n$ as 
$$
\mathcal{A} = \left\{ \bm{x} \in \mathcal{X} :  \partial_{\bm{\theta}}\mu(\bm{x},\bm{\theta}) \subseteq 
\operatorname{Span}\left(\displaystyle\bigcup_{i=1}^n \partial_{\bm{\theta}}\mu(\bm{x}_i,\bm{\theta})\right)
%\mbox{Span}\{\nabla_{\bm{\theta}}\mu(\bm{x},\bm{\theta}), \ldots, \nabla_{\bm{\theta}}\mu(\bm{x}_n,\bm{\theta}) \}  
\right\} \subseteq\mathcal{X}.
$$
For 1-smooth NN models, this set $\mathcal{A}$ contains all feature values $\bm{x}\in \mathcal{X}$ for which 
$\nabla_{\bm{\theta}}\mu(\bm{x},\bm{\theta})$ belongs to the span of $\nabla_{\bm{\theta}}\mu(\bm{x}_1,\bm{\theta}), 
\ldots, \nabla_{\bm{\theta}}\mu(\bm{x}_n,\bm{\theta})$.
This concept is, in principle, related to classical parameter identifiability in statistical learning, 
where the ability to uniquely determine model parameters depends on the directions of 
sensitivity captured by the training data \citep{ran2017parameter}.

%\section{Optimization dynamics of the rRNet: Robustness and stability}\label{optimization}
\section{Optimization dynamics of the rRNet}\label{optimization}

%Having defined the proper population-level target for the rRNet, 
%let us now study its optimization dynamics when trained using Algorithm \ref{Alg-dpd-nn}. 
%Note that, updating $\sigma$ by minimizing the DPD-loss $\mathcal{L}_{n,\beta}$ in Step 3 is rather simpler,
%since the DPD-loss is locally convex around the (unique) target minimizer in $\sigma$, for any fixed $\bm{\theta}$, under Assumption (A0).
%But, the minimization of  $\mathcal{L}_{n,\beta}$ with respect to $\bm{\theta}$ in Step 2, even with fixed $\sigma$, 
%is nontrivial due to the non-convexity introduced through the assumed NN  architecture. 
%It is often not possible to find exact solution to this sub-optimization problem in Step 2,
%although the same is possible for Step 3.  

\subsection{Convergence guarantees of Algorithm \ref{Alg-dpd-nn}}

Alternating optimization provides a natural and widely used strategy for tackling complex, potentially non-convex optimization problems 
by decomposing them into a sequence of structured subproblems.
We first study the proposed alternating algorithm for the rRNet (Algorithm \ref{Alg-dpd-nn}) 
under the following sufficient conditions on the two inner sub-optimization problems (Steps 2--3);
possible implementations for these steps are studied subsequently in details. 
%Let us not restrict ourselves to everywhere differentiable loss-functions %$\mu(\cdot, \bm\theta)$ 
%and work with the framework of  Clarke subdifferential \citep{clarke1990optimization} 
%to accommodate all NN architectures satisfying (N0) and all error densities satisfying (A0).
%Accordingly, we make the following assumptions: 

\newpage
\begin{itemize}
	\item[(SO1)] The $\bm{\theta}$-update in Step 2 of Algorithm \ref{Alg-dpd-nn} satisfies 
	sufficient descent of the DPD-loss, i.e., there exists a constant $c >0$ such that 
\begin{eqnarray}
	\mathcal{L}_{n,\beta}\left(\widehat{\bm{\theta}}^{(k+1)},\widehat{\sigma}^{(k)}\right) 
	\leq \mathcal{L}_{n,\beta}\left(\widehat{\bm{\theta}}^{(k)},\widehat{\sigma}^{(k)}\right)
	 - c \left|\left|\widehat{\bm{\theta}}^{(k+1)} - \widehat{\bm{\theta}}^{(k)}\right|\right|^2, ~~~k\geq 0.
	 \label{EQ:SO1.1}
\end{eqnarray}
	Further, for each $k\geq 0$, these exist $c_k'>0$ and  $\bm{g}_{k} \in 
	\partial_{\bm{\theta}}\mathcal{L}_{n,\beta}(\widehat{\bm{\theta}}^{(k)}, \widehat{\sigma}^{(k)})$, 
	the Clarke subdifferential of $\mathcal{L}_{n, \beta}(\bm{\theta}, \widehat{\sigma}^{(k)})$ at $\widehat{\bm{\theta}}^{(k)}$,
	such that 
\begin{eqnarray}
		||\bm{g}_{k}|| \leq c_k' \left|\left|\widehat{\bm{\theta}}^{(k+1)} - \widehat{\bm{\theta}}^{(k)}\right|\right|.
	 \label{EQ:SO1.2}
\end{eqnarray}
%	(The expectation will be ignored for non-stochastic updation.) 

	\item[(SO2)] The $\sigma$-update in Step 3 of Algorithm \ref{Alg-dpd-nn} is exact in the sense that 
	$$
	\mathcal{L}_{n,\beta}(\widehat{\bm{\theta}}^{(k+1)}, \widehat{\sigma}^{(k+1)}) 
	\leq \mathcal{L}_{n,\beta}(\widehat{\bm{\theta}}^{(k+1)}, \sigma)
	~~~\mbox{ for all }~\sigma\geq \sigma_0.
	$$	
\end{itemize}

The following theorem then shows the convergence of the rRNet algorithm 
assuming the sub-optimization problems are solved suitably to ensure (SO1)--(SO2).

\begin{theorem}(Convergence of the rRNet)\label{THM:conv}
Under Assumptions (A0), (N0)--(N1) and (SO1)--(SO2), the sequence $\{(\widehat{\bm{\theta}}^{(k)},\widehat{\sigma}^{(k)})\}_{k\geq 1}$ 
generated by Algorithm \ref{Alg-dpd-nn}, for any given input data $\mathcal{D}_n$ and tuning parameter $\beta\geq 0$,  
satisfies the following properties: 
\begin{enumerate}
	\item[(a)] Monotone descent: For every $k\geq 1$, 
	$\mathcal{L}_{n,\beta}(\widehat{\bm{\theta}}^{(k+1)},\widehat{\sigma}^{(k+1)})
	\leq \mathcal{L}_{n,\beta}(\widehat{\bm{\theta}}^{(k)},\widehat{\sigma}^{(k)})$. 
	\item[(b)] Loss convergence: The sequence 
	$\{\mathcal{L}_{n,\beta}(\widehat{\bm{\theta}}^{(k)},\widehat{\sigma}^{(k)})\}_{k\geq 1}$ converges to a finite limit. 
%\end{enumerate}
%Additionally, if the error density $f$ is definable in an o-minimal structure, we have the followings: 
%\begin{enumerate}
%	\item[(c)] Finite-length property:  $\sum_{k=0}^\infty ||(\widehat{\bm{\theta}}^{(k+1)\top},\widehat{\sigma}^{(k+1)}) 
%	- (\widehat{\bm{\theta}}^{(k)\top},\widehat{\sigma}^{(k)}) || < \infty$. 
	\item[(c)] Stationarity: Every limit point $(\bm{\theta}_\infty, \sigma_\infty)$ is a stationary point of 
	$\mathcal{L}_{n,\beta}(\bm{\theta}, \sigma)$, i.e., 
	$$
	\bm{0} \in \partial_{\bm{\theta}}\mathcal{L}_{n,\beta}(\bm{\theta}_\infty, \sigma_\infty),
	~~~\mbox{and }~~~ 0 \in \partial_\sigma\mathcal{L}_{n,\beta}(\bm{\theta}_\infty, \sigma_\infty). 
%	\frac{\partial}{\partial\sigma}\mathcal{L}_{n,\beta}(\bm{\theta}_\infty, \sigma_\infty) = 0.  
	$$
%	Here, only the stationarity in $\sigma$-gradient assumes an additional condition 
%	that $f$ is $\mathcal{C}^1$, i.e., continuously differentiable (ensuring $\mathcal{L}_{n,\beta}$ is also $\mathcal{C}^1$ in $\sigma$). 
\end{enumerate}
\end{theorem}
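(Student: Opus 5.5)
The plan is the standard block-coordinate descent argument, using the Clarke calculus only for stationarity. For part (a), I chain the two inequalities in (SO1)--(SO2):
\[
\mathcal{L}_{n,\beta}(\widehat{\bm{\theta}}^{(k+1)},\widehat{\sigma}^{(k+1)}) \le \mathcal{L}_{n,\beta}(\widehat{\bm{\theta}}^{(k+1)},\widehat{\sigma}^{(k)}) \le \mathcal{L}_{n,\beta}(\widehat{\bm{\theta}}^{(k)},\widehat{\sigma}^{(k)}) - c\|\widehat{\bm{\theta}}^{(k+1)}-\widehat{\bm{\theta}}^{(k)}\|^2 .
\]
The first inequality is (SO2) with $\sigma=\widehat{\sigma}^{(k)}\ge\sigma_0$; the second is \eqref{EQ:SO1.1}. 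This gives monotone descent immediately.

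For part (b), a monotone sequence converges to a finite limit once it is bounded below, so I need a lower bound on $\mathcal{L}_{n,\beta}$ over $\Theta\times[\sigma_0,\infty)$.
\begin{itemize}
\item For $\beta>0$: since $f$ is log-concave and continuous, it is bounded, say $f\le M$. Then $V_\beta \ge -(1+1/\beta)\sigma_0^{-\beta}M^\beta + 1/\beta$, because the other term is nonnegative and $\sigma\ge\sigma_0$.
\item For $\beta=0$: $-\ln f\ge -\ln M$, so $V_0\ge \ln\sigma_0 - \ln M$.
\end{itemize}
In both cases the loss is bounded below uniformly, so the sequence of loss values converges to a finite limit $\mathcal{L}_\infty$. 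Telescoping the displayed inequality then gives
\[
\sum_k c\,\|\widehat{\bm{\theta}}^{(k+1)}-\widehat{\bm{\theta}}^{(k)}\|^2 \le \mathcal{L}_{n,\beta}(\widehat{\bm{\eta}}^{(1)})-\mathcal{L}_\infty<\infty,
\]
so $\|\widehat{\bm{\theta}}^{(k+1)}-\widehat{\bm{\theta}}^{(k)}\|\to0$.

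For part (c), take a subsequence $(\widehat{\bm{\theta}}^{(k_j)},\widehat{\sigma}^{(k_j)})\to(\bm{\theta}_\infty,\sigma_\infty)$.
\begin{itemize}
\item \emph{$\bm{\theta}$-stationarity.} By \eqref{EQ:SO1.2}, $\bm{g}_{k_j}\in\partial_{\bm{\theta}}\mathcal{L}_{n,\beta}(\widehat{\bm{\theta}}^{(k_j)},\widehat{\sigma}^{(k_j)})$ satisfies $\|\bm{g}_{k_j}\|\le c'_{k_j}\|\widehat{\bm{\theta}}^{(k_j+1)}-\widehat{\bm{\theta}}^{(k_j)}\|$.
\item Here I need $c'_k$ bounded along the subsequence. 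I would read (SO1) as providing $\sup_k c_k'<\infty$, or justify it by boundedness of the iterates near the limit point. With that, $\bm{g}_{k_j}\to\bm 0$.
\item Under (N1) and (A0), $\mathcal{L}_{n,\beta}$ is locally Lipschitz jointly in $(\bm{\theta},\sigma)$ on a neighbourhood of the limit point with $\sigma\ge\sigma_0$. It is a composition of the locally Lipschitz map $\bm{\theta}\mapsto\mu(\bm{x}_i,\bm{\theta})$, of the smooth map $\sigma\mapsto1/\sigma$, and of $f^\beta$ or $\log f$, which are locally Lipschitz because log-concave densities are locally Lipschitz on the interior of their support $\mathbb{R}$.
\item The Clarke subdifferential of a locally Lipschitz function has a closed graph (outer semicontinuity). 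I would apply this to the partial subdifferential, checking that the joint upper-semicontinuity argument of Clarke (Prop.~2.1.5) goes through for partial generalized gradients when $(\bm{\theta},\sigma)$ vary jointly. This yields $\bm 0\in\partial_{\bm{\theta}}\mathcal{L}_{n,\beta}(\bm{\theta}_\infty,\sigma_\infty)$.
\item \emph{$\sigma$-stationarity.} By (SO2), $\widehat{\sigma}^{(k_j)}$ minimizes $\mathcal{L}_{n,\beta}(\widehat{\bm{\theta}}^{(k_j)},\cdot)$ over $[\sigma_0,\infty)$. By joint continuity, and since $\widehat{\bm{\theta}}^{(k_j)}\to\bm{\theta}_\infty$, passing to the limit in $\mathcal{L}(\widehat{\bm{\theta}}^{(k_j)},\widehat{\sigma}^{(k_j)})\le\mathcal{L}(\widehat{\bm{\theta}}^{(k_j)},\sigma)$ shows $\sigma_\infty$ minimizes $\mathcal{L}_{n,\beta}(\bm{\theta}_\infty,\cdot)$ over $[\sigma_0,\infty)$.
\item Fermat's rule for Clarke subdifferentials then gives $0\in\partial_\sigma\mathcal{L}_{n,\beta}(\bm{\theta}_\infty,\sigma_\infty)$ whenever $\sigma_\infty>\sigma_0$. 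In the boundary case $\sigma_\infty=\sigma_0$, stationarity has to be read relative to the constraint, i.e.\ including the normal cone of $[\sigma_0,\infty)$. I would remark that this is the intended meaning, or note that it is automatic when the minimizer is interior.
\end{itemize}

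The main obstacle is the $\bm{\theta}$-stationarity limit. It needs uniform control of $c_k'$ and the outer semicontinuity of the partial Clarke subdifferential under joint convergence of $(\bm{\theta},\sigma)$. If that fails, I would use the full joint Clarke subdifferential and Clarke's projection inclusion for it.
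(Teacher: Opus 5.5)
Your argument matches the paper's in all essentials. For (a) you chain (SO2) at $\sigma=\widehat{\sigma}^{(k)}$ with \eqref{EQ:SO1.1}. For (b) you use monotonicity plus a lower bound, making explicit the bound the paper only asserts, and then telescope to get $\|\widehat{\bm{\theta}}^{(k+1)}-\widehat{\bm{\theta}}^{(k)}\|\to 0$. For (c) you use closedness of Clarke subdifferentials along a convergent subsequence. Two details differ, both in your favour:
\begin{itemize}
\item \emph{Indexing.} You extract the subsequence from $(\widehat{\bm{\theta}}^{(k_j)},\widehat{\sigma}^{(k_j)})$ and apply \eqref{EQ:SO1.2} at the same index. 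The paper extracts it at indices $k_j+1$ but then evaluates $\bm{g}_{k_j}$ at $(\widehat{\bm{\theta}}^{(k_j)},\widehat{\sigma}^{(k_j)})$, whose $\sigma$-component it never shows converges.
\item \emph{The $\sigma$-step.} The paper reads off $0\in\partial_\sigma\mathcal{L}_{n,\beta}$ at each iterate from (SO2) and passes to the limit by upper semicontinuity (Clarke's Prop.~2.1.5). You instead pass to the limit in the minimality inequality, using only joint continuity, and then apply Fermat's rule. Your route needs no semicontinuity of a \emph{partial} subdifferential under joint variation.
\end{itemize}
Your route also exposes the boundary issue, which affects the paper's step just as much. (SO2) yields $0\in\partial_\sigma\mathcal{L}_{n,\beta}$ only at interior minimizers, and the boundary is not exotic. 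For Gaussian $f$, $\beta>0$ and an interpolating network, $\sigma\mapsto\mathcal{L}_{n,\beta}(\bm{\theta},\sigma)$ is strictly increasing, so the minimizer is $\sigma_0$ with positive derivative. Part (c) should therefore be read with the normal cone of $[\sigma_0,\infty)$ included.

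Your remaining caveats are also real, and the paper's proof relies on the same unverified points.
\begin{itemize}
\item \emph{Uniform bound on $c_k'$.} This is necessary, not cosmetic. The update \eqref{EQ:theta-update-gd1} with summable step sizes satisfies (SO1) with $c_k'=1/\alpha_k\to\infty$, yet its iterates can converge to a non-stationary point. The paper's proof simply treats $c_k'$ as a constant.
\item \emph{Closedness of $\partial_{\bm{\theta}}$ under joint variation.} The paper applies Clarke's closedness result directly to $\partial_{\bm{\theta}}$, which is exactly the step you flag. It is false for general locally Lipschitz functions: for $h(a,b)=\min(|a|,|b|)$ one has $\partial_a h(0,1/j)=[-1,1]$ but $\partial_a h(0,0)=\{0\}$.
\end{itemize}
So joint closedness has to come from the structure of $\mathcal{L}_{n,\beta}$. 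For example, suppose $f$ is $\mathcal{C}^1$. Then $\sigma$ enters only through the $\mathcal{C}^1$ outer map $(m,\sigma)\mapsto V_\beta$ with $m=\mu(\bm{x}_i,\bm{\theta})$. Integrating $\partial_m V_\beta$ along the segment between $\mu(\bm{x}_i,\bm{\theta}')$ and $\mu(\bm{x}_i,\bm{\theta}'+t\bm{v})$ shows that the $\bm{\theta}$-difference quotients at $\sigma_j$ and at $\sigma$ differ by $o(1)$ uniformly. This gives joint upper semicontinuity of the generalized directional derivative in $\bm{\theta}$, hence the closedness you need.

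Your fallback does not rescue the statement. Clarke's projection result only gives $\partial_{\bm{\theta}}\mathcal{L}_{n,\beta}\subseteq\pi_{\bm{\theta}}\partial\mathcal{L}_{n,\beta}$, where $\pi_{\bm{\theta}}$ is the projection onto the $\bm{\theta}$-block. The limit argument therefore yields only $\bm{0}\in\pi_{\bm{\theta}}\partial\mathcal{L}_{n,\beta}(\bm{\theta}_\infty,\sigma_\infty)$, which is in general weaker than the claimed $\bm{0}\in\partial_{\bm{\theta}}\mathcal{L}_{n,\beta}(\bm{\theta}_\infty,\sigma_\infty)$.
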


Theorem \ref{THM:conv} ensures monotone descent of the DPD-loss 
and convergence of Algorithm \ref{Alg-dpd-nn} to stationary points  under minimal assumptions.
Although such guarantees are sufficient to justify the practical effectiveness of rRNet, 
one may further strengthen them by imposing additional, stricter structural assumptions.
For instance, assuming the Kurdyka-Lojasiewicz property for the DPD-loss would prevent oscillatory behavior
and ensure convergence of the entire parameter sequence (up to NN symmetries), along the line of  \cite{attouch2013convergence}.
Such assumptions, however, would unnecessarily restrict the scope of rRNet in practical applications 
and are therefore not pursued in this work.

When using stochastic updates for $\bm{\theta}$ in Step 2 of Algorithm \ref{Alg-dpd-nn}, 
e.g., using stochastic gradient descent methods, we must ensure that (SO1) holds in expectation of the DPD-loss.
Then, the convergence guarantees of Theorem \ref{THM:conv} continue to hold in expectation. 
Moreover, for fully differentiable NN models (having smooth activation functions) and error densities, 
the second part \eqref{EQ:SO1.2} of (SO1) holds automatically for any $\bm{\theta}$-update, 
and Part (c) of Theorem \ref{THM:conv} simplifies to 
$$
\frac{\partial}{\partial\bm{\theta}}\mathcal{L}_{n,\beta}(\bm{\theta}_\infty, \sigma_\infty)=\bm{0}_d 
~~\mbox{ and }~~
\frac{\partial}{\partial\sigma}\mathcal{L}_{n,\beta}(\bm{\theta}_\infty, \sigma_\infty) = 0.
$$

%\bigskip
\noindent\textbf{$\bm{\theta}$-$\sigma$ decoupling:} 
Practical performance of alternating minimization in Algorithm \ref{Alg-dpd-nn} crucially depends on 
the (second order) cross-partial derivatives of the population DPD-loss $\mathcal{L}_{n,\beta}^*$ with respect to $\bm{\theta}$ and $\sigma$,
which can be seen to be proportional to $C_{1,2}^{(\beta)}$.
However, if the error density $f$ satisfying (A0) is additionally symmetric about zero (e.g., Gaussian, Laplace, logistic, etc.), 
then $C_{1,2}^{(\beta)}=0$ for any $\beta \geq 0$ (Lemma \ref{LEM:A0}.v), exhibiting  $\bm{\theta}$-$\sigma$ decoupling. 
As noted in  \cite{bertsekas2016nonlinear}, such decoupling prevents the `zig-zagging' behavior 
common in ill-conditioned nonlinear problems, justifying our alternating optimization strategy in rRNet Algorithm \ref{Alg-dpd-nn}. 
This structural property ensures that the update steps for the NN weights $\bm{\theta}$ and the scale $\sigma$ 
do not interfere with each other asymptotically, leading to stable convergence of the rRNet 
even in the high-dimensional parameter (model weights) space.

\subsection{The sub-optimization problem with respect to $\boldsymbol{\theta}$}

%Let us now verify (SO1) for some common $\bm{\theta}$-update rules that can be used in the sub-optimization problem  of Step 2 in Algorithm \ref{Alg-dpd-nn}. 
The problem of minimizing $\mathcal{L}_{n,\beta}$ with respect to $\bm{\theta}$, even with a fixed $\sigma$, 
is clearly nonstandard because of  the non-convexity induced by the assumed NN  architecture in $\mu(\cdot, \bm{\theta})$. 
Thus, it is often not possible to find exact solution to this sub-optimization problem; 
we propose to solve it approximately to get an $\bm{\theta}$-update satisfying (SO1) 
which is sufficient for local convergence of our rRNet (by Theorem \ref{THM:conv}).

Note that, under Assumptions (A0) and (N1), the DPD-loss $\mathcal{L}_{n, \beta}$, 
is locally Lipschitz and Clarke sub-differentiable in $\bm{\theta}$ for any fixed $\sigma>0$ and $\beta\geq 0$.  
Its subdifferential is then obtained by the associated chain rule \citep{bolte2007clarke} as
\begin{equation}\label{EQ:theta_grad}
	\partial_{\bm{\theta}} \mathcal{L}_{n,\beta}\left(\bm{\theta},\sigma\right) = 
	\frac{1+\beta}{n\sigma^{\beta+1}} \sum\limits_{i=1}^n \psi_{1,\beta}\left(\frac{y_i - \mu(\bm{x}_i,\bm{\theta})}{\sigma}\right) \partial_{\bm{\theta}}\mu(\bm{x}_i,\bm{\theta}), 
\end{equation}
where $\psi_{1,\beta}(s) = u(s) f^{\beta}(s)$. 
%with $u = f'/f$ being the score function of the density $f$. 
Using it, we can update $\bm{\theta}$ in Step 2 of Algorithm \ref{Alg-dpd-nn} simply by a one-step sub-gradient descent: 
\begin{eqnarray}
\widehat{\bm{\theta}}^{(k+1)} =  \widehat{\bm{\theta}}^{(k)} - \alpha_k \bm{g}_k, ~~~
\bm{g}_k\in \partial_{\bm{\theta}}\mathcal{L}_{n,\beta}(\bm{\theta},\widehat{\sigma}^{(k)}), ~~k\geq 1,
\label{EQ:theta-update-gd1}
\end{eqnarray}
where $\alpha_k$ is a suitably chosen step-size (learning rate). 
That such a simple $\bm{\theta}$-update indeed satisfies the required Condition (SO1), 
ensuring the convergence of the overall rRNet algorithm, has been proved in the following proposition.

\begin{prop}\label{PROP:theta-updtae-dg1}
If the $\bm{\theta}$-update in Step 2 of Algorithm \ref{Alg-dpd-nn} is implemented by (\ref{EQ:theta-update-gd1}) for a suitably chosen $\alpha_k>0$,
then the conditions in (SO1) hold for any given input (tuning parameter $\beta\geq0$ and data $\mathcal{D}_n$) under Assumptions (A0) and (N1).
\end{prop}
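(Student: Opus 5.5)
Since $\widehat{\bm{\theta}}^{(k+1)} - \widehat{\bm{\theta}}^{(k)} = -\alpha_k \bm{g}_k$, the second part \eqref{EQ:SO1.2} of (SO1) is immediate once $\bm{g}_k$ is the subgradient used in the update (taken at $\widehat{\bm{\theta}}^{(k)}$ with $\sigma=\widehat{\sigma}^{(k)}$). Indeed, $\|\bm{g}_k\| = \alpha_k^{-1}\|\widehat{\bm{\theta}}^{(k+1)} - \widehat{\bm{\theta}}^{(k)}\|$, so we may take $c_k' = 1/\alpha_k$. The real work is therefore the sufficient descent inequality \eqref{EQ:SO1.1}. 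Here we must choose $\alpha_k$ so that $\mathcal{L}_{n,\beta}(\widehat{\bm{\theta}}^{(k)} - \alpha_k\bm{g}_k, \widehat{\sigma}^{(k)}) \le \mathcal{L}_{n,\beta}(\widehat{\bm{\theta}}^{(k)}, \widehat{\sigma}^{(k)}) - c\,\alpha_k^2\|\bm{g}_k\|^2$, with $c>0$ independent of $k$. If $\bm{g}_k=\bm{0}$, we take any $\alpha_k$; the iterate is then stationary and both inequalities hold trivially.

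For $\bm{g}_k\neq\bm{0}$, we fix $\sigma=\widehat{\sigma}^{(k)}\ge\sigma_0$ and consider the one-dimensional function $h_k(t) = \mathcal{L}_{n,\beta}(\widehat{\bm{\theta}}^{(k)} - t\bm{g}_k, \widehat{\sigma}^{(k)})$. Under (A0) and (N1), $h_k$ is locally Lipschitz, being a composition of $\mu(\bm{x}_i,\cdot)$ with $s\mapsto f^\beta(s)$ or $-\log f(s)$; both are locally Lipschitz because $\log f$ is concave and finite on $\mathbb{R}$. By the Lebourg mean value theorem and the chain rule \eqref{EQ:theta_grad}, $h_k(t)-h_k(0) = t\,\langle \bm{\xi}_t, -\bm{g}_k\rangle$ for some $\bm{\xi}_t$ in the Clarke subdifferential at an intermediate point.

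The key step is to argue that, for small $t$, $\langle \bm{\xi}_t, \bm{g}_k\rangle$ is bounded below by a positive multiple of $\|\bm{g}_k\|^2$, so that $h_k$ decreases initially. The approach is to pick $\bm{g}_k$ as the minimum-norm element of the (convex, compact) Clarke subdifferential. Outer semicontinuity of $\partial_{\bm{\theta}}\mathcal{L}_{n,\beta}$ then implies that, for $t$ small, every $\bm{\xi}_t$ lies near that set, and the minimum-norm property yields $\langle \bm{\xi},\bm{g}_k\rangle\ge\|\bm{g}_k\|^2$ for all $\bm{\xi}$ in the set. Hence $h_k(t)\le h_k(0) - \tfrac12 t\|\bm{g}_k\|^2$ for $t\in(0,\bar t_k]$. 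Choosing $\alpha_k\le \min\{\bar t_k, 1/(2c)\}$ (for example, via an Armijo-type backtracking rule) gives $h_k(\alpha_k)\le h_k(0) - \tfrac12\alpha_k\|\bm{g}_k\|^2 \le h_k(0) - c\,\alpha_k^2\|\bm{g}_k\|^2$, which is \eqref{EQ:SO1.1}. In the smooth case this reduces to the standard descent lemma, since $\nabla_{\bm{\theta}}\mathcal{L}_{n,\beta}$ is locally Lipschitz on the bounded sublevel region and $\alpha_k\le 1/L_k$ suffices.

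The main obstacle is this local-descent step in the genuinely nonsmooth case, for example ReLU kinks or a Laplace $f$. There an arbitrary Clarke subgradient need not be a descent direction, which is why $\bm{g}_k$ must be chosen as the minimum-norm subgradient, or, if the paper allows, "suitably chosen" must be read as including this choice. A second subtlety is that $c$ must be uniform in $k$. This is handled by placing all the $k$-dependence into $\alpha_k$ (and $c_k'=1/\alpha_k$) and keeping a fixed constant, say $c=1/2$ after normalizing $\alpha_k\le1$.
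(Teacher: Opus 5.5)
Your proof is correct, but it follows a different route from the paper's.

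\textbf{The paper's route.} The paper asserts that, under (A0) and (N1), $\bm{\theta}\mapsto\mathcal{L}_{n,\beta}(\bm{\theta},\widehat{\sigma}^{(k)})$ is upper-$\mathcal{C}^2$ near $\widehat{\bm{\theta}}^{(k)}$. It then cites a quadratic upper model for such functions (Prop.~3.2 of Arag\'on et al.): $\mathcal{L}_{n,\beta}(\widehat{\bm{\theta}}^{(k+1)},\widehat{\sigma}^{(k)})\le\mathcal{L}_{n,\beta}(\widehat{\bm{\theta}}^{(k)},\widehat{\sigma}^{(k)})+\langle\bm{g}_k,\bm{d}_k\rangle+C_\kappa\|\bm{d}_k\|^2$, where $\bm{d}_k=\widehat{\bm{\theta}}^{(k+1)}-\widehat{\bm{\theta}}^{(k)}$. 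For semiconcave functions this bound holds for every Clarke subgradient. Substituting $\bm{d}_k=-\alpha_k\bm{g}_k$ and taking $\alpha_k<C_\kappa^{-1}$ gives (SO1.1) with $c=\alpha_k^{-1}-C_\kappa$.

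\textbf{Your route.} You use no second-order structure. The minimal-norm element satisfies $\langle\bm{\xi},\bm{g}_k\rangle\ge\|\bm{g}_k\|^2$ on the whole Clarke set. Upper semicontinuity plus Lebourg's theorem turn this into a decrease of $\tfrac12 t\|\bm{g}_k\|^2$ for small $t$; equivalently, the Clarke directional derivative along $-\bm{g}_k$ equals $-\|\bm{g}_k\|^2$.

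\textbf{What each buys.} Where the paper's route applies, it allows any subgradient, such as whatever automatic differentiation returns, and gives an explicit step bound. Yours needs only local Lipschitzness, which is all that (A0) and (N1) actually deliver. The cost is a non-explicit $\bar t_k$ (found by backtracking) and a subgradient selection that is not what is computed in practice.

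\textbf{Why the selection is needed.} The selection is not a defect of your argument, because upper-$\mathcal{C}^2$ does not follow from (A0) and (N1). For Laplace $f$, which the paper explicitly admits, $r\mapsto -f^\beta(r)$ (and $-\log f$ at $\beta=0$) has a convex kink at $r=0$. ReLU units also create convex kinks when the residual at the kink has the appropriate sign. Consider a global minimizer in $\bm{\theta}$ that sits on such a kink, for example Laplace $f$ with a single observation fitted exactly. There, any nonzero $\bm{g}_k\in\partial_{\bm{\theta}}\mathcal{L}_{n,\beta}$ makes (SO1.1) fail for every $\alpha_k>0$. So the proposition genuinely requires a subgradient choice like yours.

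\textbf{Loose ends you handle better.} First, you keep $c$ independent of $k$. The paper's $c=\alpha_k^{-1}-C_\kappa$, with $C_\kappa$ defined only locally, is not uniform in $k$ unless $\alpha_k$ is capped. Second, your $c_k'=1/\alpha_k$ has the right direction. The paper states $c_k'<1/\alpha_k$, whereas (SO1.2) needs $c_k'\ge 1/\alpha_k$.
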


Although it works theoretically, the simple  $\bm{\theta}$-update (\ref{EQ:theta-update-gd1}) for rRNet algorithm requires
the computation of the (sub-)gradient of the NN model $\mu(\bm{x}_i, \bm{\theta})$ for all observations, which is often quite expensive. 
So a suitable stochastic gradient descent or its momentum based versions (e.g., ADAM) is suggested in practice;
see Section \ref{implementation} for such a possible implementation for the Gaussian error.
These stochastic algorithms also satisfies (SO1), in expectation, under suitably assumptions on the loss function,
ensuring the overall convergence of the rRNet algorithm.   
For smooth NN models, their theoretical convergence guarantees follow directly from standard theory of optimization,
whereas for non-smooth NN models we may prove such results by using Clarke subdifferential calculus, as in \cite{shamir2013stochastic},
or through the smoothing function approach of \cite{giovannelli2025non}.     
However, to avoid distraction from the main focus on the robustness of rRNet, 
we defer such technical study of stochastic $\bm{\theta}$-updates for our future research.

\subsection{The sub-optimization problem with respect to $\sigma$}

The problem of optimizing the DPD-loss with respect to $\sigma$ alone is much simpler, 
since it is indeed a one-dimensional optimization problem that does not depend on the complexity of NN architecture. 
In fact, for any fixed $\bm{\theta}$ ($=\widehat{\bm{\theta}}^{(k+1)}$ in Step 3), 
Assumption (A0) ensures that the DPD-loss, viewed as a function of $\sigma$,  
is continuously differentiable a.e. and admits a unique minimizer. 
Moreover, the loss is strictly convex in $\log\sigma$ in a neighborhood of this minimizer, implying local strong convexity.
These can be verified easily by studying the sub-gradient of the DPD-loss with respect to $\sigma$, which is given by 
\begin{equation}\label{EQ:sigma_grad}
	{\partial_\sigma} \mathcal{L}_{n,\beta}\left(\bm{\theta},\sigma\right) = 
	\frac{1+\beta}{n\sigma^{1+\beta}} \sum\limits_{i=1}^n \psi_{2,\beta}\left(\frac{y_i - \mu(\bm{x}_i,\bm{\theta})}{\sigma}\right)  
	- \frac{\beta C^{(\beta)}_{0,0}}{\sigma^{1+\beta}}, %\int f^{1+\beta}d\lambda, 
\end{equation}
with $\psi_{2,\beta}(s) = (1 + s u(s)) f^{\beta}(s)$. % = f^{1+\beta}(s) + s \psi_{1,\beta}(s)$.
It becomes the unique (partial) derivative of the loss with respect to $\sigma$ 
whenever $f$ is everywhere differentiable (even if $\mu$ is non-smooth). 
Consequently, under (A0), the scale update in Step 3 of Algorithm \ref{Alg-dpd-nn} can be solved exactly using 
standard one-dimensional convex optimization methods, ensuring (SO2). 
In particular, we may employ Newton and quasi-Newton algorithms such as BFGS or L-BFGS, 
which are guaranteed to converge (locally) to the unique minimizer under available conditions \citep{sun2006optimization}.

For the most common case of Gaussian error discussed in Example \ref{EX:Gaussian_error},
one can alternatively solve this sub-optimization problem with respect to $\sigma$ by a simpler fixed-point iteration.
In this case, simplifying the partial derivative from (\ref{EQ:sigma_grad}), and equating it to zero, we get 
%to have the form
%\begin{equation}\label{EQ:sigma_grad_normal}
%\frac{\partial}{\partial\sigma} \mathcal{L}_{n,\beta}\left(\bm{\theta},\sigma\right) = 
%	\frac{\beta+1}{n\sigma^{\beta+1}} \sum\limits_{i=1}^n \psi_{1,\beta}\left(\frac{y_i - \mu(\bm{x}_i,\bm{\theta})}{\sigma}\right) \nabla_{\bm{\theta}}\mu(\bm{x}_i,\bm{\theta}), 
%\end{equation}
%which leads to the following fixed-point iteration:
$$
\sigma^2 = \frac{\sum\limits_{i=1}^n w_{i, \beta}(\bm{\eta}) (y_i - \mu(\bm{x}_i, \bm{\theta}))^2}{
%	e^{-\frac{\beta(y_i - \mu(\bm{x}_i, \bm{\theta}))^2}{2\sigma^2}}}{
	\sum\limits_{i=1}^n w_{i, \beta}(\bm{\eta}) - \frac{n\beta}{1+\beta}},
~~\mbox{ with }~ w_{i, \beta}(\bm{\eta}) = e^{-\frac{\beta(y_i - \mu(\bm{x}_i, \bm{\theta}))^2}{2\sigma^2}},
~~ i=1, \ldots, n.~~
$$
The above expression  can be used to construct the fixed-point update for $\sigma$, as previously used in the literature of MDPDEs \citep{ghosh2020}).
%This idea of updating noise variance has also been used successfully in the literature of DPD based inference \cite[see, e.g.,][]{ghosh2019robust}.
It follows from standard theory of univariate optimization \citep[see, e.g.][]{sun2006optimization} that 
the resulting sequence of $\sigma$-values converges globally to the unique minimizer,
and hence it can also be used safely in Step 3 of Algorithm \ref{Alg-dpd-nn} implementing rRNet.

Importantly, the above fixed-point formulation provides additional insights about 
the claimed robustness of our proposed rRNet, at least in terms of the estimate of $\sigma$ under Gaussian errors. 
The contribution of contaminating observations having high absolute residuals, $|y_i - \mu(\bm{x}_i, \bm{\theta})|$, 
gets exponentially downweighted at any $\beta>0$, thereby making the final estimate resistant against their adversarial effects. 
The extent of down-weighting via weights $w_{i, \beta}(\bm{\eta})$ further increases with increasing values of $\beta$, 
leading to greater robustness against stronger data contamination. 
For general NN and error models, such robustness behavior of the rRNet is achieved through the boundedness of the score-type functions 
$\psi_{1,\beta}$ and $\psi_{2,\beta}$, which will be formally justified in the following Sections \ref{robustness} and \ref{BP}.

\section{Local robustness guarantees: Influence functions}\label{robustness}

The influence function (IF) is a useful tool for measuring local bias-robustness (B-robustness) in statistical literature. 
\cite{koh2017understanding} examined the behavior of sample-based IFs in black-box deep learning models with possibly non-convex loss functions.
Recent works have demonstrated that these IFs are not always well-understood in such a context \citep{basu2020influence},
but they often provide a good approximation to the proximal Bregman response function, which successfully addresses many robustness concerns, 
e.g., identification of influential or mislabeled observations, in the context of NNs \citep{bae2022if}. 
However, all such attempts have emphasized that the sample-based IFs must be interpreted carefully in NN settings to get meaningful conclusions,
and there is no universal consensus for such interpretations.  

To avoid such issues associated with sample-based IFs, here we study the robustness of rRNet  through the IF of 
its population-level functionals, following its original statistical formulation \citep{hampel1986robust}. 
Formally, consider the MDPDF $\bm{T}_\beta(\overline{G}_n)$ corresponding to the parameter estimates obtained from the rRNet 
with tuning parameter $\beta\geq 0$ as defined in \eqref{EQ:MDPDF}, and a contaminated distribution 
$\overline{G}_{n,\epsilon} = (1-\epsilon)\overline{G}_n + \epsilon \wedge_{(t, \boldsymbol{x}_i)}$
where $\epsilon>0$ is the proportion contamination at the point $t\in\mathbb{R}$  in the response space for the $i$-th feature vector $\bm{x}_i$
for a pre-fixed $i\geq 1$. Then, the IF of the MDPDF $\bm{T}_\beta$ against such contamination is defined as
\begin{equation} \label{IF-def-gen}
IF_i(t, \bm{T}_\beta, \overline{G}_n) = \lim_{\epsilon\to 0} \frac{\bm{T}_\beta(\overline{G}_{n,\epsilon})-\bm{T}_\beta(\overline{G}_{n})}{\epsilon}   = \frac{\partial}{\partial\epsilon} \bm{T}_\beta(\overline{G}_{n,\epsilon})\bigg|_{\epsilon=0}.
\end{equation}
Clearly, the above $IF_i$ measures the standardized bias of the functional $\bm{T}_\beta$ caused by 
an infinitesimal contamination at the point $(t, \bm{x}_i)$. 
Hence, its nature over varying $t$ characterizes local B-robustness of $\bm{T}_\beta$ and, 
in turn, the same for the corresponding estimator (in an asymptotic sense). 
In particular, if the IF increases/decreases continuously as $t\rightarrow\pm\infty$, the absolute bias of the corresponding functional (estimator) 
increases indefinitely, indicating its severe non-robust even under an infinitesimal contamination at a distant point. 
A bounded IF in $t$ therefore indicates the robust nature of the functional. 
The maximum (or supremum) of the absolute IF over varying $t$, known as the \textit{gross-error sensitivity}, 
further measures the extent of such local B-robustness enabling us to compare various estimators in terms of their robustness.

Note that the definition in (\ref{IF-def-gen}) is valid under the assumption of (appropriate) differentiability of the functional $\bm{T}_\beta$,
which holds for the present case if $f$ is differentiable and also  $\mu(\bm{x}, \bm{\theta})$  is differentiable in $\bm{\theta}$, 
i.e., only for smooth activation functions and error densities. 
Below we first study the IF of the MDPDF for such cases and subsequently extend them for non-smooth NN models using the concepts of smoothing functions. 
We also study the local robustness of the unique rRNet target  $\mu_{n, \beta}^\ast$ by deriving its IF in both the cases.

%\subsection{IF of the MDPDF under smooth activation functions} \label{IF-EW}
\subsection{IFs under smooth activation functions and error densities} \label{IF-EW}

Let us first consider the cases of 2-smooth loss function which is ensured when both $f$ is $\mathcal{C}^2$ and 
$\mu(\bm{x}, \bm{\theta})$ is $\mathcal{C}^2$ in $\bm{\theta}\in\Theta$ for every $\bm{x}\in\mathcal{X}$,
i.e., the underlying activation functions are all $\mathcal{C}^2$.
%Let us consider the NN models for which $\mu(\bm{x}, \bm{\theta})$ is twice differentiable everywhere in $\bm{\theta}\in\Theta$, 
%for all $\bm{x}\in\mathcal{X}$, i.e., the underlying activation functions are twice differentiable, which 
Such 2-smooth NN models automatically satisfies (N1). 
Then, under Assumptions (A0) and (N0), a general formula for the IF of the MDPDF $\bm{T}_\beta$, defined in \eqref{EQ:MDPDF},  
at the true conditional distribution  $\overline{G}_n$  is given (implicitly) by the equation
\cite[see, e.g.,][]{basu2026} 
\begin{eqnarray}\label{EQ:IF-MDPDF-gen}
	\boldsymbol{\Psi}_n(\boldsymbol{\eta}) IF_i(t,\bm{T}_\beta,\overline{G}_n)  
	= - \frac{1}{n}\big\{\mathcal{I}_\beta(t,\bm{x}_i| \bm{\eta})  - E_{G_i}\left[\mathcal{I}_\beta(Y,\bm{x}_i|\bm{\eta}) \right]\big\},
\end{eqnarray}
where $\boldsymbol{\Psi}_n(\boldsymbol{\eta}) 
= n^{-1}\sum_{i=1}^n E_{G_i}\left[\nabla_{\bm{\eta}}\mathcal{I}_\beta(Y,\bm{x}_i|\bm{\eta}) \right]$ 
%with $\nabla_{\bm{\eta}}$ denoting the gradient with respect to $\bm{\eta}$, 
and,  for any $(y, \bm{x})\in\mathbb{R}\times\mathcal{X}$, 
\begin{eqnarray}
\mathcal{I}_\beta(y,\bm{x}|\bm{\eta}) = \left(\psi_{1,\beta}\left(\frac{y - \mu(\bm{x},\bm{\theta})}{\sigma}\right) [\nabla_{\bm{\theta}}\mu(\bm{x},\bm{\theta})]^\top, 
~ \psi_{2,\beta}\left(\frac{y - \mu(\bm{x},\bm{\theta})}{\sigma}\right) \right)^\top. 
\label{EQ:I}
\end{eqnarray}

Although it is relatively straightforward to obtain (\ref{EQ:IF-MDPDF-gen}) from the standard theory of IFs \cite[see, e.g.][]{hampel1986robust},
it is harder to simplify the expression of the resulting IF further for the present case of NN models.  
In the following theorem, we simplify it assuming that the conditional models are correctly specified so that $G_i = F_{i, \bm{\theta}}$,  
the distribution function of $f_{i, \bm{\theta}}$, for all $i=1, \ldots, n$. Here, we denote
%In this respect, we use the following notations:
\begin{eqnarray}
	&&\overline{F}_{n,\bm{\theta}}(y, \bm{x}) = \frac{1}{n}\sum_{i=1}^n \wedge_{\boldsymbol{x}_i}(\bm{x}) F_{i, \bm{\theta}}(y), ~~~y\in\mathbb{R}, ~\bm{x}\in\mathcal{X},
	\nonumber\\
%	&&C^{(\beta)}_{i,j} = \int s^i u^j(s) f^{1+\beta}(s) ds, ~~~~~ i, j = 0, 1, 2, \ldots, ~~\beta\geq 0,
%	\nonumber\\  
	\mbox{ and}&&
	\bm{\dot{\mu}}_n(\bm{\theta}) = \left[\nabla_{\bm{\theta}}\mu(\bm{x}_1,\bm{\theta})~ 
	\nabla_{\bm{\theta}}\mu(\bm{x}_2,\bm{\theta})~\cdots~ \nabla_{\bm{\theta}}\mu(\bm{x}_n,\bm{\theta})\right]^\top. 
%	~~\mbox{ and }~~\bm{J}_n(\bm{\theta}) = \frac{1}{n}\bm{\dot{\mu}}_n(\bm{\theta})^\top \bm{\dot{\mu}}_n(\bm{\theta}). 
\nonumber
\end{eqnarray}

%Accordingly, we introduce the following possible assumptions on the NN architecture:
%\begin{itemize}
%	\item[(N2a)] $\sup\limits_{\bm{x}\in\mathcal{X}} ||\nabla_{\bm{\theta}}\mu(\bm{x}, \bm{\theta}) || <\infty.$
%	\item[(N2b)] The matrix $\bm{J}_n(\bm{\theta})$ is non-singular. 
%\end{itemize}
%We may note that these assumptions are quite stricter than previous NN assumptions (N0)--(N1) and may not be satisfied in several applications. 
%Particularly, although (N2) may be satisfied for general NN modes under certain assumptions on features or activation functions, 
%(N3) can  only be satisfied in the cases of underparametrized NNs and fails of most deep networks. 
%So, we will first derive the general formula of the IF of the MDPDEs without these assumptions, 
%but subsequently discuss their possible simplifications under these assumptions

\begin{theorem}\label{THM:IF-MDPDF}
Consider the NN regression model given by (\ref{mlp-mean-fn}) where $f$ is $\mathcal{C}^2$ 
and $\mu(\bm{x}, \bm{\theta})$ is also $\mathcal{C}^2$ in $\bm{\theta}\in\Theta$ for every $\bm{x}\in\mathcal{X}$.
Fix any $\beta\geq 0$, and suppose that Assumptions (A0) and (N0) hold with $C^{(\beta)}_{1,2} = 0$.  
Put $\widetilde{C}^{(\beta)}_{2,2} = \left[C^{(\beta)}_{2,2} - \frac{1-\beta}{1+\beta} C^{(\beta)}_{0,0}\right]$.
Then, the IFs of the MDPDF $\bm{T}_\beta^{\bm{\theta}}$ of the NN model parameter $\bm{\theta}$ and 
the MDPDF ${T}_\beta^{\sigma}$ of the error scale $\sigma$ are independent at the model distribution $\overline{F}_{n,\bm{\theta}}$, 
and are respectively given by
\begin{eqnarray}
	 IF_i(t,\bm{T}_\beta^{\bm{\theta}}, \overline{F}_{n,\bm{\theta}})  
	&=& - \frac{\sigma}{C^{(\beta)}_{0,2}} \psi_{1,\beta}\left(\frac{t - \mu(\bm{x}_i,\bm{\theta})}{\sigma}\right) 
	\left[\bm{\dot{\mu}}_n(\bm{\theta})^\top \bm{\dot{\mu}}_n(\bm{\theta})\right]^{+}\nabla_{\bm{\theta}}\mu(\bm{x}_i,\bm{\theta}) + \bm{v}_n(\bm{\theta}),
	\nonumber\\
	&& ~~~~~~~~~~~~~~~~~~~~~~~
	\mbox{with }~~~\bm{v}_n(\bm{\theta})\in Ker(\bm{\dot{\mu}}_n(\bm{\theta})),
%	^\top \bm{\dot{\mu}}_n(\bm{\theta})),
	\label{EQ:IF-MDPDF-model-theta}
\end{eqnarray}
%and
\begin{eqnarray}
\mbox{and}~~~~~~
IF_i(t,{T}_\beta^{\sigma}, \overline{F}_{n,\bm{\theta}})  
= - \frac{\sigma}{n\widetilde{C}^{(\beta)}_{2,2}} \left[\psi_{2,\beta}\left(\frac{t - \mu(\bm{x}_i,\bm{\theta})}{\sigma}\right)  
	- \frac{\beta C^{(\beta)}_{0,0}}{1+\beta}\right].~~~~~~~~~~~~~~~~
	\label{EQ:IF-MDPDF-model-sigma}
\end{eqnarray}
\end{theorem}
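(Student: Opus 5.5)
The plan is to start from the implicit IF equation \eqref{EQ:IF-MDPDF-gen}, which is valid here because $f$ and $\mu(\bm{x},\cdot)$ are $\mathcal{C}^2$. Both sides will be evaluated at the model $G_i = F_{i,\bm{\theta}}$, with $\bm{T}_\beta(\overline{F}_{n,\bm{\theta}}) = (\bm{\theta},\sigma)$. This Fisher consistency follows from Theorem \ref{THM:identifiability}, since the DPD is a genuine divergence and so $d_\beta(f_{i,\bm{\eta}}, f_{i,\bm{\eta}})=0$ is the minimum; it holds up to the symmetry group $\mathcal{G}$, which will later produce the kernel term.

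The first step is the centering term. Under the model, $(Y-\mu(\bm{x}_i,\bm{\theta}))/\sigma$ has density $f$, so $E_{F_{i,\bm{\theta}}}[\psi_{1,\beta}(S)] = \int u f^{1+\beta} = C^{(\beta)}_{0,1}$. This equals $\frac{1}{1+\beta}\int (f^{1+\beta})' = 0$, by the decay of $f$ from log-concavity (Lemma \ref{LEM:A0}). Similarly $E[\psi_{2,\beta}(S)] = C^{(\beta)}_{0,0} + C^{(\beta)}_{1,1}$, and integrating by parts gives $C^{(\beta)}_{1,1} = \frac{1}{1+\beta}\int s (f^{1+\beta})' = -\frac{1}{1+\beta}C^{(\beta)}_{0,0}$. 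Hence $E[\psi_{2,\beta}(S)] = \frac{\beta}{1+\beta}C^{(\beta)}_{0,0}$, which is exactly the constant subtracted in \eqref{EQ:IF-MDPDF-model-sigma}. The right-hand side of \eqref{EQ:IF-MDPDF-gen} is therefore $-\frac1n\bigl(\psi_{1,\beta}(s_t)\nabla_{\bm{\theta}}\mu(\bm{x}_i,\bm{\theta})^\top,\ \psi_{2,\beta}(s_t) - \frac{\beta C^{(\beta)}_{0,0}}{1+\beta}\bigr)^\top$, where $s_t = (t-\mu(\bm{x}_i,\bm{\theta}))/\sigma$.

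The second step computes $\bm{\Psi}_n(\bm{\eta})$ block by block. Differentiating $\psi_{1,\beta}(S)\nabla_{\bm{\theta}}\mu$ in $\bm{\theta}$ gives $-\sigma^{-1}\psi_{1,\beta}'(S)\nabla\mu\nabla\mu^\top + \psi_{1,\beta}(S)\nabla^2\mu$. The second term has zero expectation. For the first, $\psi_{1,\beta}' = (u' + \beta u^2) f^\beta$, and integrating by parts gives $\int u' f^{1+\beta} = -(1+\beta)C^{(\beta)}_{0,2}$; hence $E[\psi_{1,\beta}'(S)] = -C^{(\beta)}_{0,2}$. The $\bm{\theta}\bm{\theta}$-block is then $\frac{C^{(\beta)}_{0,2}}{n\sigma}\bm{\dot{\mu}}_n^\top\bm{\dot{\mu}}_n$; I expect the paper's constant to absorb any convention factor. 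The $\bm{\theta}\sigma$ cross-blocks involve $E[S\psi_{1,\beta}'(S)]$ and $E[\psi_{2,\beta}'(S)]$. By the same integration by parts, both reduce to multiples of $C^{(\beta)}_{1,2}$ (together with terms like $C^{(\beta)}_{0,1}=0$), so they vanish under the hypothesis $C^{(\beta)}_{1,2}=0$. This block-diagonal structure is what gives the claimed independence of the two IFs. The $\sigma\sigma$-block is $-\sigma^{-1}E[S\psi_{2,\beta}'(S)]$. Expanding $(1+su)f^\beta$ and integrating by parts yields $\sigma^{-1}\bigl[C^{(\beta)}_{2,2} - \frac{1-\beta}{1+\beta}C^{(\beta)}_{0,0}\bigr] = \sigma^{-1}\widetilde{C}^{(\beta)}_{2,2}$ (up to sign bookkeeping). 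Note that this block has no $1/n$, because it averages over all $i$.

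The third step solves the system. For $\sigma$, the scalar equation gives \eqref{EQ:IF-MDPDF-model-sigma} directly. For $\bm{\theta}$, the equation $\frac{C^{(\beta)}_{0,2}}{n\sigma}\bm{\dot{\mu}}_n^\top\bm{\dot{\mu}}_n\, IF = -\frac1n\psi_{1,\beta}(s_t)\nabla_{\bm{\theta}}\mu(\bm{x}_i,\bm{\theta})$ may be singular when the network is overparametrized. Its right-hand side lies in $\mathrm{Range}(\bm{\dot{\mu}}_n^\top) = \mathrm{Range}(\bm{\dot{\mu}}_n^\top\bm{\dot{\mu}}_n)$, since $\nabla_{\bm{\theta}}\mu(\bm{x}_i,\bm{\theta})$ is a row of $\bm{\dot{\mu}}_n$, so the system is consistent. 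Its general solution is the pseudo-inverse solution plus an element of $Ker(\bm{\dot{\mu}}_n^\top\bm{\dot{\mu}}_n) = Ker(\bm{\dot{\mu}}_n)$, which is \eqref{EQ:IF-MDPDF-model-theta}. The non-uniqueness matches the $\mathcal{G}$-symmetry in Theorem \ref{THM:identifiability}.

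I expect the main obstacle to be the careful integration-by-parts bookkeeping for the $\sigma\sigma$-block and the cross terms. Each of these must reduce exactly to $\widetilde{C}^{(\beta)}_{2,2}$ and to multiples of $C^{(\beta)}_{1,2}$, which requires vanishing boundary terms such as $s^k f^{1+\beta}(s)\to0$ from log-concavity. A further subtlety is justifying differentiation of $\bm{T}_\beta(\overline{G}_{n,\epsilon})$ when $\bm{\Psi}_n$ is singular. I would handle this by working modulo $Ker(\bm{\dot{\mu}}_n)$, that is, by interpreting the IF as any solution of the linearized estimating equation.
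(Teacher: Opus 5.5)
Your proposal is correct and follows the paper's proof essentially step for step. You evaluate the centering term via $C^{(\beta)}_{0,1}=0$ and $C^{(\beta)}_{1,1}=-C^{(\beta)}_{0,0}/(1+\beta)$, reduce the blocks of $\bm{\Psi}_n(\bm{\eta})$ to multiples of $C^{(\beta)}_{0,2}$, $C^{(\beta)}_{1,2}$ and $\widetilde{C}^{(\beta)}_{2,2}$ by integration by parts, and solve the resulting block-diagonal system with the pseudo-inverse plus a term in $Ker(\bm{\dot{\mu}}_n(\bm{\theta}))$. Your explicit check that $\nabla_{\bm{\theta}}\mu(\bm{x}_i,\bm{\theta})$ lies in the range of $\bm{\dot{\mu}}_n(\bm{\theta})^\top\bm{\dot{\mu}}_n(\bm{\theta})$, so the singular system is consistent, is a small point the paper leaves implicit.
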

%where $\widetilde{C}^{(\beta)}_{2,2} = \left[C^{(\beta)}_{2,2} - \frac{1-\beta}{1+\beta} C^{(\beta)}_{0,0}\right]$.

Note that, under the assumptions of Theorem \ref{THM:IF-MDPDF}, the IF of ${T}_\beta^{\sigma}$ (for error scale) is always unique
but that is not the case for the MDPDF of $\bm{\theta}$, as expected given (N0). 
For the simpler cases with non-singular $\left[\bm{\dot{\mu}}_n(\bm{\theta})^\top \bm{\dot{\mu}}_n(\bm{\theta})\right]$, 
the IF of $\bm{T}_\beta^{\bm{\theta}}$ is uniquely given by \eqref{EQ:IF-MDPDF-model-theta} with $\bm{v}_n(\bm{\theta}) = \bm{0}_d$ 
and the pseudo-inverse being the ordinary inverse.
In all other cases, however, this choice $\bm{v}_n(\bm{\theta}) = \bm{0}_d$  gives the unique minimum-norm solution 
for the IF of $\bm{T}_\beta^{\bm{\theta}}$. 
Notably, the IF for the unique rRNet target $\mu_{n, \beta}^\ast(\bm{x})$ at any \textit{admissible feature value}, $\bm{x}\in\mathcal{A}$, 
of the NN model is always unique, as shown in the following corollary. 

\begin{corollary}\label{COR:IF-predictor}
Under the assumptions of Theorem \ref{THM:IF-MDPDF}, for any $\beta\geq 0$ and any $\bm{x}\in \mathcal{A}$,
the IF of the rRNet predictor functional $\mu_{n, \beta}^\ast(\bm{x})$ at the model distribution $\overline{F}_{n,\bm{\theta}}$ 
is uniquely given by
\begin{eqnarray}
IF_i(t, \mu_{n, \beta}^\ast(\bm{x}), \overline{F}_{n,\bm{\theta}}) 
%= \lim_{\epsilon\to 0} \frac{\mu_{n, \beta, \epsilon}^\ast(\bm{x})-\mu_{n, \beta, 0}^\ast(\bm{x})}{\epsilon}   
=  - \frac{\sigma}{C^{(\beta)}_{0,2}} \psi_{1,\beta}\left(\frac{t - \mu(\bm{x}_i,\bm{\theta})}{\sigma}\right) 
\mathcal{H}_n(\bm{x}, \bm{\theta}),
%[\nabla_{\bm{\theta}}\mu(\bm{x},\bm{\theta})]^\top
%\left[\bm{\dot{\mu}}_n(\bm{\theta})^\top \bm{\dot{\mu}}_n(\bm{\theta})\right]^{+}\nabla_{\bm{\theta}}\mu(\bm{x}_i,\bm{\theta}),
%\nonumber
\label{EQ:IF-target}
\end{eqnarray}
where 
%$\mu_{n,\beta,\epsilon}^*(\boldsymbol{x}) =  \mu(\bm{x}, \bm{T}_\beta(\overline{G}_{n,\epsilon}))$ and 
$\mathcal{H}_n(\bm{x}, \bm{\theta}) = 
[\nabla_{\bm{\theta}}\mu(\bm{x},\bm{\theta})]^\top
\left[\bm{\dot{\mu}}_n(\bm{\theta})^\top \bm{\dot{\mu}}_n(\bm{\theta})\right]^{+}[\nabla_{\bm{\theta}}\mu(\bm{x}_i,\bm{\theta})]$.
For a feature value $\bm{x}\notin\mathcal{A}$, 
the IF of $\mu_{n, \beta}^\ast(\bm{x})$ at the model distribution $\overline{F}_{n,\bm{\theta}}$ is given by the 
expression in (\ref{EQ:IF-target}) plus a possibly non-unique scalar, independent of the contamination point $t$. 
\end{corollary}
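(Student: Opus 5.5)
The plan is a chain-rule argument applied to $\mu_{n,\beta}^\ast(\bm{x}) = \mu(\bm{x}, \bm{T}_\beta^{\bm{\theta}}(\overline{G}_{n,\epsilon}))$, followed by an analysis of how the non-unique part of the parameter IF propagates. Since $\mu(\bm{x},\cdot)$ is $\mathcal{C}^2$, differentiating at $\epsilon=0$ gives
\[
IF_i(t, \mu_{n, \beta}^\ast(\bm{x}), \overline{F}_{n,\bm{\theta}}) = [\nabla_{\bm{\theta}}\mu(\bm{x},\bm{\theta})]^\top IF_i(t,\bm{T}_\beta^{\bm{\theta}}, \overline{F}_{n,\bm{\theta}}).
\]
Substituting the expression (\ref{EQ:IF-MDPDF-model-theta}) of Theorem \ref{THM:IF-MDPDF} directly yields the displayed formula (\ref{EQ:IF-target}), plus the extra term $[\nabla_{\bm{\theta}}\mu(\bm{x},\bm{\theta})]^\top \bm{v}_n(\bm{\theta})$ with $\bm{v}_n(\bm{\theta})\in Ker(\bm{\dot{\mu}}_n(\bm{\theta}))$.

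The work is to show that this extra term vanishes for $\bm{x}\in\mathcal{A}$. In the smooth case, $\bm{x}\in\mathcal{A}$ means $\nabla_{\bm{\theta}}\mu(\bm{x},\bm{\theta}) = \sum_j a_j \nabla_{\bm{\theta}}\mu(\bm{x}_j,\bm{\theta}) = \bm{\dot{\mu}}_n(\bm{\theta})^\top \bm{a}$ for some $\bm{a}\in\mathbb{R}^n$. Then
\[
[\nabla_{\bm{\theta}}\mu(\bm{x},\bm{\theta})]^\top \bm{v}_n = \bm{a}^\top \bm{\dot{\mu}}_n(\bm{\theta})\bm{v}_n = 0,
\]
so the IF is unique.

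One issue needs care. The symmetry group $\mathcal{G}$ makes $\bm{T}_\beta^{\bm{\theta}}$ set-valued, so the chain rule should be justified along a differentiable selection. I would argue that $\mu_{n,\beta}^\ast$ is itself well defined by Theorem \ref{THM:identifiability}, since the predictor is unique in function space. Hence any selection of $\bm{T}_\beta^{\bm{\theta}}(\overline{G}_{n,\epsilon})$ produces the same $\mu$ values. In particular, any two IF solutions differ by an element of $Ker(\bm{\dot{\mu}}_n(\bm{\theta}))$, which is consistent with the implicit equation (\ref{EQ:IF-MDPDF-gen}).

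For $\bm{x}\notin\mathcal{A}$, the extra term $[\nabla_{\bm{\theta}}\mu(\bm{x},\bm{\theta})]^\top\bm{v}_n(\bm{\theta})$ need not vanish. I would show it is independent of $t$ by returning to the derivation in Theorem \ref{THM:IF-MDPDF}. Under $C^{(\beta)}_{1,2}=0$ the $\bm{\theta}$-block of (\ref{EQ:IF-MDPDF-gen}) reads $\bm{\dot{\mu}}_n^\top\bm{\dot{\mu}}_n\, IF = c\,\psi_{1,\beta}(\cdot)\nabla_{\bm{\theta}}\mu(\bm{x}_i,\bm{\theta})$. The kernel component is therefore unconstrained by the equation and carries no $t$-dependence through the linearized system. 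It can be taken as an arbitrary fixed element of $Ker(\bm{\dot{\mu}}_n(\bm{\theta}))$ not depending on $t$. This gives the stated "plus a possibly non-unique scalar, independent of $t$".

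The main obstacle is this last point: rigorously arguing that the kernel component does not depend on $t$. The linear system alone does not forbid a $t$-dependent choice. I would handle it by interpreting the non-unique part as a selection ambiguity, fixed independently of the contamination, among the symmetry-equivalent or kernel-direction solutions.
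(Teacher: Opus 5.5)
Your proposal follows the paper's proof essentially step for step: the chain rule $IF_i(t,\mu_{n,\beta}^\ast(\bm{x}),\overline{F}_{n,\bm{\theta}})=[\nabla_{\bm{\theta}}\mu(\bm{x},\bm{\theta})]^\top IF_i(t,\bm{T}_\beta^{\bm{\theta}},\overline{F}_{n,\bm{\theta}})$, then substitution of (\ref{EQ:IF-MDPDF-model-theta}), then the observation that $[\nabla_{\bm{\theta}}\mu(\bm{x},\bm{\theta})]^\top\bm{v}_n(\bm{\theta})=0$ for $\bm{x}\in\mathcal{A}$. Your explicit span computation $\bm{a}^\top\bm{\dot{\mu}}_n(\bm{\theta})\bm{v}_n=0$ is exactly the step the paper calls obvious from the definition of $\mathcal{A}$. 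The paper likewise never proves that the extra term is independent of $t$ for $\bm{x}\notin\mathcal{A}$; it is simply inherited from writing $\bm{v}_n(\bm{\theta})$ as a function of $\bm{\theta}$ alone in Theorem \ref{THM:IF-MDPDF}, so treating it as a contamination-independent selection convention is at the same level of rigor as the paper's own argument.
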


Now, in order to examine the B-robustness of the MDPDFs obtained in our proposed rRNet, we need to examine the boundedness of their IFs. 
But, we may see that the general formulas of IF in (\ref{EQ:IF-MDPDF-gen}), 
or those given in Theorem \ref{THM:IF-MDPDF} and Corollary \ref{COR:IF-predictor} at the model distribution,
depends on the contaminating observation $(t, \bm{x}_i)$ only through $\mathcal{I}_\beta(t,\bm{x}_i|\bm{\eta})$, 
and hence their boundedness depends on the same of the functions $\psi_{1,\beta}$, $\psi_{2,\beta}$ 
and $\nabla_{\bm{\theta}}\mu(\bm{x},\bm{\theta})$. 
The resulting nature of the IFs under different contamination scheme is formally presented 
in the following corollary for $\beta>0$.

\begin{corollary}\label{COR:IF-bounded}
Under the assumptions of Theorem \ref{THM:IF-MDPDF}, for any $\beta>0$, we have 
$$
\sup_{t\in\mathbb{R}} ||IF_i(t,\bm{T}_\beta^{\bm{\theta}}, \overline{F}_{n,\bm{\theta}})||<\infty,
~~~
\sup_{t\in\mathbb{R}} |IF_i(t, {T}_\beta^{\sigma}, \overline{F}_{n,\bm{\theta}})|<\infty,
%\mbox{ for each } i\geq 1,
$$
for each $i\geq 1$, implying local B-robustness of the MDPDFs against response contamination. 
Additionally, if $\sup_{i\geq 1} ||\nabla_{\bm{\theta}}\mu(\bm{x}_i,\bm{\theta})|| <\infty$, then we have
$$
\sup_{t\in\mathbb{R}, ~i\geq 1} ||IF_i(t,\bm{T}_\beta^{\bm{\theta}}, \overline{F}_{n,\bm{\theta}})||<\infty,
~~~
\sup_{t\in\mathbb{R}, ~i\geq 1} |IF_i(t, {T}_\beta^{\sigma}, \overline{F}_{n,\bm{\theta}})|<\infty,
$$
%and hence $\sup\limits_{t\in\mathbb{R}, ~i\geq 1} |IF_i(t, \mu_{n, \beta}^\ast(\bm{x}), \overline{F}_{n,\bm{\theta}})|<\infty$ 
for all $\beta>0$, implying local B-robustness of the MDPDFs against both response contamination and varying feature values.
For both contamination types and any $\bm{x}\in\mathcal{X}$, under the same assumptions, we respectively have  
$$
\sup\limits_{t\in\mathbb{R}} |IF_i(t, \mu_{n, \beta}^\ast(\bm{x}), \overline{F}_{n,\bm{\theta}})|<\infty
~~~\mbox{ and }~~~
\sup\limits_{t\in\mathbb{R}, ~i\geq 1} |IF_i(t, \mu_{n, \beta}^\ast(\bm{x}), \overline{F}_{n,\bm{\theta}})|<\infty.
$$
%respectively, for any fixed $\bm{x}\in \mbox{Span}\{\bm{x}_1, \ldots, \bm{x}_n\}$. 
\end{corollary}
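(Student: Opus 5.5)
The plan is to read the bounds off the explicit formulas in Theorem \ref{THM:IF-MDPDF} and Corollary \ref{COR:IF-predictor}. Everything then reduces to showing that, for any $\beta>0$, the score-type functions $\psi_{1,\beta}(s)=u(s)f^\beta(s)$ and $\psi_{2,\beta}(s)=(1+su(s))f^\beta(s)$ are bounded on $\mathbb{R}$. The constants $C^{(\beta)}_{0,2}$, $\widetilde{C}^{(\beta)}_{2,2}$ and $C^{(\beta)}_{0,0}$ are finite and nonzero under (A0). This is implicit in Theorem \ref{THM:IF-MDPDF}, and I would cite the relevant part of Lemma \ref{LEM:A0}. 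The matrix $[\bm{\dot{\mu}}_n(\bm{\theta})^\top\bm{\dot{\mu}}_n(\bm{\theta})]^+$ is a fixed finite matrix that does not depend on $t$ or $i$. For the $\bm{\theta}$-part I would take the minimum-norm version $\bm{v}_n(\bm{\theta})=\bm{0}_d$, or note that $\bm{v}_n$ does not depend on $t$.

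\textbf{Boundedness of $\psi_{j,\beta}$.} This is the main step. Since $f$ is $\mathcal{C}^2$ and strictly log-concave, $\log f$ is concave with $\log f(s)\to-\infty$ as $|s|\to\infty$ (because $f$ is integrable). Concavity then gives constants $a>0$ and $b$ with $\log f(s)\le b-a|s|$, so $f$ decays at least exponentially. The score $u=(\log f)'$ is monotone non-increasing.

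The key control is on the growth of $u$. Concavity gives, for $s>0$,
$$\log f(s+1)\le \log f(s)+u(s),$$
hence $|u(s)|\le \log f(s)-\log f(s+1)$ whenever $u(s)<0$. The same argument works at $-\infty$. I expect turning this into a bound of the form $|u(s)|f^\beta(s)\to0$ to be the delicate point. The difficulty is that $\log f(s+1)$ could in principle be much more negative than $\log f(s)$.

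I would instead use $|u(s)|\le (\log f(s)-\log f(s+h))/h$ with $h$ chosen adaptively. A cleaner route is a monotonicity argument. Write $g=-\log f$, which is convex and increasing for large $s$. Then
$$u f^\beta=-g'e^{-\beta g}=\tfrac{1}{\beta}\,\tfrac{d}{ds}e^{-\beta g}.$$
Suppose $g'e^{-\beta g}$ were unbounded along some sequence. Because $g'$ is non-decreasing, $e^{-\beta g}$ would drop by a fixed amount over short intervals infinitely often. This contradicts $e^{-\beta g}\ge0$ being decreasing to $0$ with finite total variation, since the total drop is at most $e^{-\beta g(s_0)}$.

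So $\sup_s|\psi_{1,\beta}(s)|<\infty$. Continuity on compacts handles the middle range. A similar argument applied to $s\,g'(s)e^{-\beta g(s)}$ bounds $\psi_{2,\beta}$: split $e^{-\beta g}=e^{-\beta g/2}e^{-\beta g/2}$, use $|s|e^{-\beta g/2}\le |s|e^{-\beta(a|s|-b)/2}$, which is bounded, and bound $g'e^{-\beta g/2}$ as above. The term $f^\beta$ itself is clearly bounded.

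\textbf{Assembling.} Plugging into \eqref{EQ:IF-MDPDF-model-theta} gives
$$\|IF_i(t,\bm{T}_\beta^{\bm{\theta}},\overline{F}_{n,\bm{\theta}})\|\le \frac{\sigma}{|C^{(\beta)}_{0,2}|}\,\sup|\psi_{1,\beta}|\,\|[\bm{\dot{\mu}}_n^\top\bm{\dot{\mu}}_n]^+\|\,\|\nabla_{\bm{\theta}}\mu(\bm{x}_i,\bm{\theta})\|.$$
Plugging into \eqref{EQ:IF-MDPDF-model-sigma} gives
$$|IF_i(t,T^\sigma_\beta,\overline{F}_{n,\bm{\theta}})|\le \frac{\sigma}{n|\widetilde{C}^{(\beta)}_{2,2}|}\left(\sup|\psi_{2,\beta}|+\frac{\beta C^{(\beta)}_{0,0}}{1+\beta}\right),$$
which does not depend on $i$ at all.

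For each fixed $i$ these bounds are finite uniformly in $t$. Under the extra assumption $\sup_i\|\nabla_{\bm{\theta}}\mu(\bm{x}_i,\bm{\theta})\|<\infty$, the $\bm{\theta}$-bound becomes uniform in $i$ as well.

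For the predictor, Corollary \ref{COR:IF-predictor} together with Cauchy–Schwarz gives
$$|\mathcal{H}_n(\bm{x},\bm{\theta})|\le\|\nabla_{\bm{\theta}}\mu(\bm{x},\bm{\theta})\|\,\|[\bm{\dot{\mu}}_n^\top\bm{\dot{\mu}}_n]^+\|\,\|\nabla_{\bm{\theta}}\mu(\bm{x}_i,\bm{\theta})\|.$$
The first factor is finite for each fixed $\bm{x}$ by $\mathcal{C}^1$-smoothness, so both predictor bounds follow. When $\bm{x}\notin\mathcal{A}$, the additional non-unique scalar is independent of $t$, so it does not affect the supremum over $t$.
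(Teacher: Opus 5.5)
Your reduction is the same as the paper's. Everything rests on $\sup_s|\psi_{1,\beta}(s)|<\infty$ and $\sup_s|\psi_{2,\beta}(s)|<\infty$, after which the bounds follow from \eqref{EQ:IF-MDPDF-model-theta}, \eqref{EQ:IF-MDPDF-model-sigma} and \eqref{EQ:IF-target} exactly as you assemble them. The gap is in that key step: your total-variation argument yields no contradiction. Write $\phi=e^{-\beta g}$. A large value of $|\phi'(s_k)|=\beta g'(s_k)\phi(s_k)$ does not force $\phi$ to drop by a fixed amount, because after $s_k$ it can drop by at most $\phi(s_k)\to0$. The steep descent simply happens on an interval of length of order $\phi(s_k)/|\phi'(s_k)|\to0$. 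A bounded monotone function can have an unbounded derivative, and monotonicity of $g'$ does not rule this out. Your bound for $\psi_{2,\beta}$ inherits the same flaw, since it uses the $\beta/2$ version of this step.

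In fact this step cannot be proved from (A0) alone: concavity of $\log f$ makes $u$ monotone but does not control how fast $|u|$ grows. Fix $\beta>0$ and take $g=-\log f$ even and smooth. For $k\ge1$, let $g'$ rise from $m_{k-1}$ to $m_k$ on $[k,k+1/m_k]$ and equal $m_k$ on $[k+1/m_k,k+1]$. Then $g(k+1/m_k)\le g(k)+1$. Choosing increasing $m_k$ recursively of order $k\,e^{\beta(g(k)+1)}$ gives $g'e^{-\beta g}\ge k$ at $s=k+1/m_k$. Thus $\psi_{1,\beta}$ is unbounded, and with it $\psi_{2,\beta}=f^\beta+s\,\psi_{1,\beta}$. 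Meanwhile the contribution of $[k,k+1]$ to $C^{(\beta)}_{0,2}$ and $C^{(\beta)}_{2,2}$ is $O(k^3e^{-g(k)})$. This is summable because $g(k)\ge m_{k-1}-1$ grows extremely fast. Adding $\epsilon s^2$ to $g$ (adjusting $m_k$ accordingly), normalizing and rescaling yields a symmetric $\mathcal{C}^\infty$ density satisfying (A0) and the hypotheses of Theorem \ref{THM:IF-MDPDF}. For this density, $\sup_t\|IF_i(t,\bm{T}_\beta^{\bm{\theta}},\overline{F}_{n,\bm{\theta}})\|=\infty$ whenever $\nabla_{\bm{\theta}}\mu(\bm{x}_i,\bm{\theta})\neq\bm{0}$.

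The paper's one-line proof has the same hole. It asserts that $|u^jf|$ is polynomially bounded because $\log f$ is concave, but the construction with $\beta=1$ makes $|uf|=|f'|$ unbounded. What is missing is an extra tail condition on the score. Examples are $|u(s)|\le C(1+|s|)^k$ or, more generally, $|u|\le P(-\log f)$ for a polynomial $P$; this covers Gaussian, Laplace, logistic and Gumbel errors. Under it, $|\psi_{1,\beta}|\lesssim P(g)e^{-\beta g}$ and $|s\,u\,f^\beta|\lesssim |s|e^{-\beta g/2}\cdot P(g)e^{-\beta g/2}$ are bounded via $g(s)\ge a|s|-b$. The rest of your argument then goes through unchanged.
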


The results of the above corollary also holds when the IF is computed at any distribution $\overline{G}_n$, 
not necessarily the model $\overline{F}_{n, \bm{\theta}}$, but the proof will be more involved with complex expressions.
Let us now examine these IFs explicitly in a simple example. 

\begin{example}[Shallow NN with Gaussian error and sigmoid activation] \label{ex-3.1}
We now consider a shallow NN, an MLP with $L=1$ hidden layer, satisfying (N0) and 
take the underlying activation function to be sigmoid that makes $\mu$ twice differentiable in $\bm{\theta}$. 
Let the error density $f$ is Gaussian, which satisfies (A0) and the resulting DPD-loss function has the form 
 given in Example \ref{EX:Gaussian_error}.  
Then, we can further simplify the IFs of the MDPDFs at any $\beta\geq 0$ from Theorem \ref{THM:IF-MDPDF} 
by noting that $u(s) = -s$, and hence $C^{(\beta)}_{0,0} = (2\pi)^{-\frac{\beta}{2}}(1+\beta)^{-\frac{1}{2}}$, 
$C^{(\beta)}_{1,2} = 0$, $C^{(\beta)}_{0,2} = C^{(\beta)}_{0,0}/(1+\beta)$, 
$C^{(\beta)}_{2,2} = 3C^{(\beta)}_{0,0}/(1+\beta)^2$, and 
%$C^{(\beta)}_{2,2} = 3C^{(\beta)}_{0,2} = 3(2\pi)^{-\frac{\beta}{2}}(1+\beta)^{-\frac{1}{2}}$, and 
$$
\psi_{1,\beta}(s) = -{(2\pi)^{-\frac{\beta}{2}}} s e^{-\frac{\beta s^2}{2}},
~~~~
\psi_{2,\beta}(s) = {(2\pi)^{-\frac{\beta}{2}}}(1-s^2)e^{-\frac{\beta s^2}{2}}.
$$
Clearly, the functions $\psi_{1,\beta}(s)$ and $\psi_{2,\beta}(s)$, and hence the IFs of the MDPDFs related to our rRNet,
are both bounded implying local B-robustness at any $\beta>0$. 
But, they are clearly unbounded (linear and quadratic functions, respectively) at $\beta=0$.

In order to explicitly compute and plot the IFs, we need to compute $\nabla_{\bm{\theta}}\mu$. 
Let us illustrate it, with explicit computation, for a simple case with one input and one node in the hidden layer;
more general NN models can be handled numerically using backpropagation.
In this specific case, we have $\mu(x, \bm{\theta}) = w_0^{out} + w_1^{out}\phi(w_{0} + w_{1}x)$, with $\bm{\theta}=(w_{0},w_{1},w_0^{out},w_1^{out})^\top$ and $\phi(x) = 1/(1+e^{-x})$, so that we can explicitly compute 
\[
\nabla_{\bm{\theta}}\mu(x,\bm{\theta}) = \begin{bmatrix}
		w_1^{out}\phi'(w_{0} + w_{1}x) & w_1^{out}\phi'(w_{0} + w_{1}x)x & 1 & \phi(w_{0} + w_{1}x)
	\end{bmatrix}^\top.
\]
Using all these, we numerically compute the IFs of the MDPDFs of the parameters and that of the rRNet predictor, 
for various $i\geq 1$ and $\beta\geq 0$, taking $x_i$s to be $n=50$ equispaced points within $[-10,20]$ and 
the true parameter values to be $\bm{\theta} = (1,1,2,1.5)^\top$ and $\sigma=0.1$. 
Figure \ref{fig:IF-NN-2} presents them  for $i=2$, while the same obtained for $i=49$ 
is provided in Figure \ref{fig:IF-NN-49} in Appendix \ref{add-results}. 
All these plots are clearly consistent with the boundedness and robustness implications of the corresponding IFs 
as predicted by the general theory above. 
%As predicted from the general expression, all IFs are bounded for all $\beta > 0$ and 
They have greater redecending nature at higher values of $\beta>0$, indicating greater robustness for increasing $\beta$. 
%However, they are unbounded at $\beta = 0$, as expected, since the MDPDE with $\beta=0$ coincides with the non-robust MLE. 
\qed
\end{example}

\begin{figure}[!h]
	\centering
	\includegraphics[width=\textwidth]{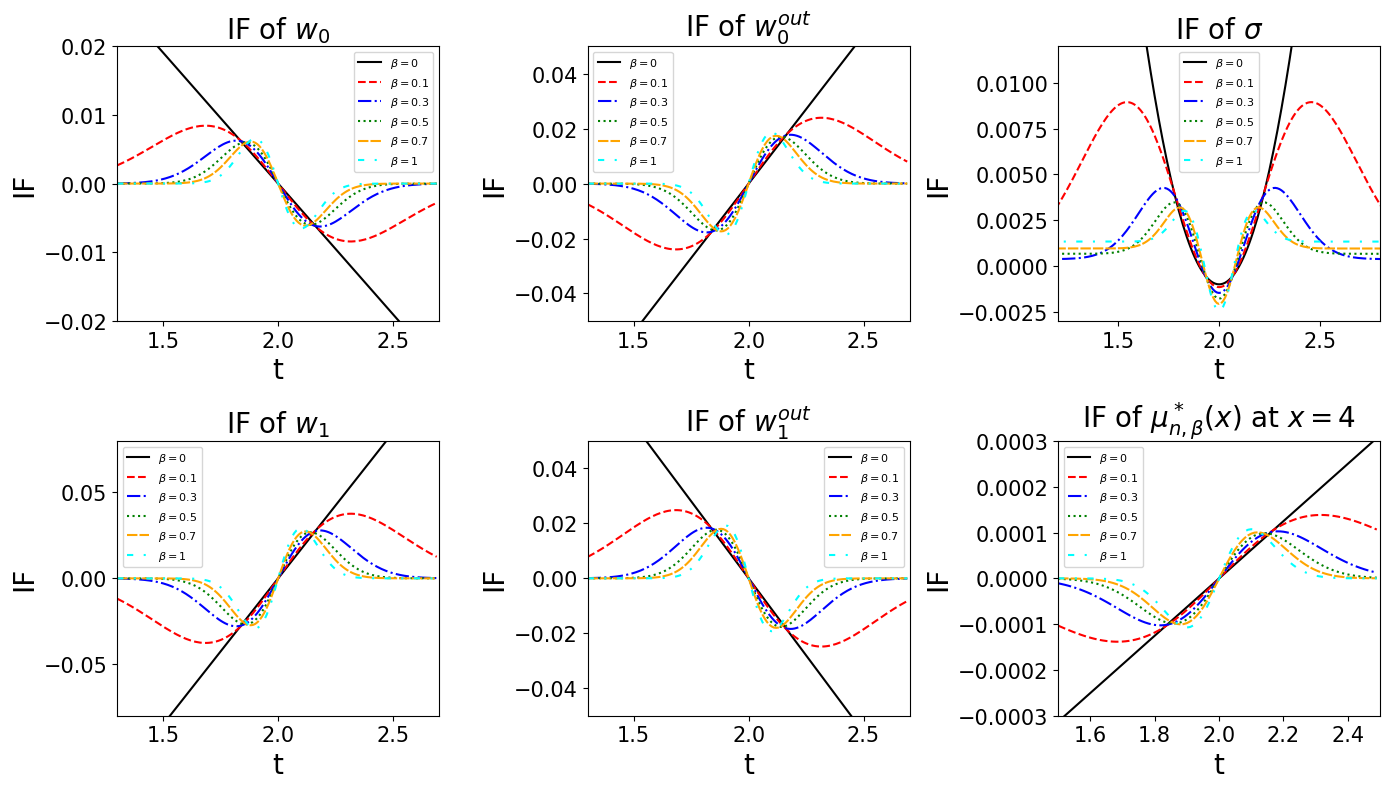}
	\caption{IFs of the MDPDFs and the rRNet predictors for a simple MLP, with sigmoid activation and Gaussian error, 
		under contamination in the 2nd observation [The case $\beta=0$ represents the standard LSE based training]}
	\label{fig:IF-NN-2}
\end{figure}

In the above example, we have seen that the IFs of the parameter estimates and the prediction obtained by the ML based training 
(rRNet at $\beta=0$) under Gaussian error is unbounded indicating their non-robust nature. 
The following corollary extend this for general error densities satisfying (A0); 
the proof is straightforward from Lemma \ref{LEM:A0} by noting that $\psi_{1,0}(s) = u(s)$ and $\psi_{2,0}(s) = 1+su(s)$.

\begin{corollary}\label{COR:IF-bounded0}
Under the assumptions of Theorem \ref{THM:IF-MDPDF}, the MDPDF at $\beta=0$ (i.e., the ML functional) satisfies the following 
at any $i\geq 1$: 
\begin{itemize}
	\item[a)]  $\sup_{t\in\mathbb{R}} ||IF_i(t,\bm{T}_0^{\bm{\theta}}, \overline{F}_{n,\bm{\theta}})||<\infty$
	and $\sup_{t\in\mathbb{R}} |IF_i(t, \mu_{n, \beta}^\ast(\bm{x}), \overline{F}_{n,\bm{\theta}})|<\infty$
	if and only if $f$ has (asymptotically) exact exponential tails, i.e., there exist constants $0<c_1\leq c_2<\infty$, $0<a<\infty$ 
	and $M>0$ such that 
	$$
	c_1 e^{-a|\varepsilon|} \leq f(\varepsilon) \leq c_2 e^{-a|\varepsilon|}~~~\mbox{ for all }|\varepsilon|\geq M.
	$$
	\item[b)]  $\sup_{t\in\mathbb{R}} |IF_i(t,{T}_\beta^{\sigma}, \overline{F}_{n,\bm{\theta}})| = \infty$
	for all $f$ satisfying (A0).
\end{itemize}
\end{corollary}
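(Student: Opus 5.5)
The plan is to reduce everything to the explicit formulas of Theorem \ref{THM:IF-MDPDF} and Corollary \ref{COR:IF-predictor} at $\beta=0$, where $C^{(0)}_{0,2}=\int u^2 f$ (the Fisher information for location, finite and positive under (A0)) and $\widetilde{C}^{(0)}_{2,2}=C^{(0)}_{2,2}-C^{(0)}_{0,0}$ are finite nonzero constants. With $s_t=(t-\mu(\bm{x}_i,\bm{\theta}))/\sigma$, the $t$-dependence of $IF_i(t,\bm{T}_0^{\bm{\theta}},\cdot)$ is only through $\psi_{1,0}(s_t)=u(s_t)$ multiplying the fixed vector $[\bm{\dot{\mu}}_n^\top\bm{\dot{\mu}}_n]^+\nabla_{\bm{\theta}}\mu(\bm{x}_i,\bm{\theta})$, plus the $t$-free kernel term $\bm{v}_n$, which we fix at its minimum-norm choice $\bm{v}_n=\bm{0}$. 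Likewise $IF_i(t,\mu^*_{n,0}(\bm{x}),\cdot)$ is $u(s_t)$ times $\mathcal{H}_n(\bm{x},\bm{\theta})$, up to a $t$-free scalar. The scale IF is driven by $\psi_{2,0}(s_t)=1+s_t u(s_t)$. Since $s_t$ ranges over all of $\mathbb{R}$ as $t$ does, boundedness in $t$ reduces to boundedness of $u$ on $\mathbb{R}$ for part (a), and to unboundedness of $s u(s)$ for part (b). For (a) we assume the multiplying vector, respectively $\mathcal{H}_n(\bm{x},\bm{\theta})$, is nonzero; otherwise the statement is trivial or degenerate, and this caveat should be noted.

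For (a) we use the properties of $f$ and $u$ collected in Lemma \ref{LEM:A0}. By (strict) log-concavity, $u=(\log f)'$ is nonincreasing. Since $f$ is a density on $\mathbb{R}$, the limits $u(\pm\infty)$ exist in $[-\infty,\infty]$, with $u(+\infty)<0<u(-\infty)$. So $u$ is bounded iff both limits are finite, say $u(+\infty)=-a_+$ and $u(-\infty)=a_-$. If $f$ satisfies the two-sided exponential bound, then $\log f(\varepsilon)=-a|\varepsilon|+O(1)$, and concavity forces the slopes of the concave function $\log f$ to converge to $\mp a$ (a concave function within bounded distance of a linear one has asymptotic slope equal to that line's). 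Hence $u$ is bounded. Conversely, if $u$ is bounded with limits $-a_+$ and $a_-$, integrating gives $\log f(\varepsilon)=\log f(M)-a_+(\varepsilon-M)-\int_M^\varepsilon(u+a_+)$. Upper and lower exponential bounds then follow.

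The main obstacle lies in this converse direction. It requires both that the remainder $\int_M^\infty (u+a_+)$ is finite, so that it yields a constant $c_1>0$, and that $a_+=a_-$. Neither follows from boundedness alone. I would first try to read ``exact exponential tails'' loosely, allowing different rates $a$ on each side and bounds of the form $e^{-a|\varepsilon|+o(|\varepsilon|)}$. If the stated form is required, I would add the integrability of $u+a_\pm$ as part of what ``asymptotically exact'' means and flag this in the proof.

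For (b), log-concavity gives $u(s)\le u(0)+\ldots$; more simply, $u$ is monotone with $u(s)\to u(+\infty)<0$. Then $s u(s)\le s\,u(s_0)$ for $s\ge s_0>0$, where $u(s_0)<0$ for large $s_0$. Hence $s u(s)\to-\infty$ as $s\to\infty$, so $\psi_{2,0}$ is unbounded. Because $\widetilde{C}^{(0)}_{2,2}\neq0$ (we verify this via Lemma \ref{LEM:A0}, or simply note it is required for the IF in Theorem \ref{THM:IF-MDPDF} to exist), the scale IF is unbounded for every $f$ satisfying (A0).
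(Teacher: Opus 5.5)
Your reduction is the same as the paper's. The paper's proof is one line: with $\psi_{1,0}=u$ and $\psi_{2,0}=1+su$, part (a) becomes ``$u$ is bounded'' and part (b) becomes ``$su(s)$ is unbounded'', both read off from Lemma \ref{LEM:A0}(iv). Your ``if'' direction of (a) is fine. Your proof of (b) is correct and more direct than the paper's contradiction through polynomial tails: monotonicity with $u(+\infty)<0$ forces $su(s)\to-\infty$. The nonvanishing of the constant is also immediate. Since $C^{(0)}_{1,1}=-C^{(0)}_{0,0}=-1$, you have $\widetilde{C}^{(0)}_{2,2}=E[\varepsilon^2u^2(\varepsilon)]-1=\mathrm{Var}(\varepsilon u(\varepsilon))>0$, because $\varepsilon u(\varepsilon)$ is unbounded and hence not constant.

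The obstacle you flag in the ``only if'' direction of (a) is real, and it is a defect of the statement, not of your approach. The paper gets this direction from the proof of Lemma \ref{LEM:A0}(iv). That proof passes from $-M(s-s_0)\le\log f(s)-\log f(s_0)\le 0$ to $c_1e^{-Ms}\le f(s)\le c_2e^{-Ms}$, but only the lower bound follows.

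Here is a counterexample meeting every hypothesis of Theorem \ref{THM:IF-MDPDF}. Let $g(x)\propto\sqrt{1+x^2}\,e^{-\sqrt{1+x^2}}$ and $f(\varepsilon)=\lambda g(\lambda\varepsilon)$ with $\lambda^2=\int x^2g(x)\,dx$.
\begin{itemize}
\item $f$ is $\mathcal{C}^\infty$ and symmetric, so $C^{(0)}_{1,2}=0$, and it has mean $0$ and variance $1$.
\item Its log-density has second derivative $\lambda^2[(1-y)(1+y)^{-2}-(1+y)^{-3/2}]$ at $y=\lambda^2\varepsilon^2$, which is negative for $\varepsilon\neq0$.
\item With $x=\lambda\varepsilon$, its score $u(\varepsilon)=\lambda[x/(1+x^2)-x/\sqrt{1+x^2}]$ is bounded by $3\lambda/2$, so the IF of $\bm{T}_0^{\bm{\theta}}$ is bounded in $t$.
\item Yet $f(\varepsilon)\asymp|\varepsilon|e^{-\lambda|\varepsilon|}$, and no single $a$ gives $c_1e^{-a|\varepsilon|}\le f(\varepsilon)\le c_2e^{-a|\varepsilon|}$.
\end{itemize}
So do not try to close this step. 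The correct characterization is that the IFs in (a) are bounded iff $u(\pm\infty)$ are finite, i.e.\ $\log f(\varepsilon)=-a_\pm|\varepsilon|+o(|\varepsilon|)$ with possibly different rates on the two sides. That is exactly your ``loose reading''; the displayed two-sided bound is sufficient but not necessary.

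There are two small corrections to your write-up.
\begin{enumerate}
\item The sign is wrong: $\log f(\varepsilon)=\log f(M)-a_+(\varepsilon-M)+\int_M^\varepsilon(u+a_+)$. Since $u+a_+\ge0$, the lower bound with $c_1=f(M)e^{a_+M}$ is automatic. It is the \emph{upper} constant $c_2$ that needs $\int_M^\infty(u+a_+)<\infty$; this is what fails above, where $u+\lambda\sim1/\varepsilon$.
\item Your nondegeneracy caveat is needed and is absent from the paper. The vector $[\bm{\dot{\mu}}_n^\top\bm{\dot{\mu}}_n]^+\nabla_{\bm{\theta}}\mu(\bm{x}_i,\bm{\theta})$ is nonzero iff $\nabla_{\bm{\theta}}\mu(\bm{x}_i,\bm{\theta})\neq\bm{0}$, since it lies in the range of the Gram matrix. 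However, $\mathcal{H}_n(\bm{x},\bm{\theta})$ can vanish for particular $\bm{x}$, which makes the predictor IF identically zero for every $f$.
\end{enumerate}
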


In particular, the above corollary implies that the IFs of the ML functionals of both $\bm{\theta}$ and $\sigma$,
as well as that of the NN prediction, are unbounded for error densities having light-tails. 
So, for regression NN with IID Gaussian errors (Example \ref{EX:Gaussian_error}), 
the usual MLE based training, equivalently the standard LSE based NN training, fails to  yield locally B-robust 
parameter estimates under infinitesimal contamination in responses, even when the feature values are kept fixed. 
However, the ML based training of regression NN under logistic error distribution (with fixed scale)
leads to locally robust estimates of the NN model parameter $\bm{\theta}$.

\subsection{Extending the IF for non-smooth models}

There are certain popular activation functions, such as ReLU ($\phi(x) = \max(0,x)$), 
that are not differentiable everywhere in $\mathbb{R}$, which causes the NN model function $\mu$ 
to be non-differentiable at certain points. The same may also hold for some error densities $f$, e.g., Laplace. 
However, Assumptions (A0) and (N1) ensure that both $f$ and $\mu$ are differentiable a.e., 
so that the DPD-loss $\mathcal{L}_{n, \beta}$ is non-differentiable only at finitely many points having Lebesgue measure zero. 
For such cases, we can use the smoothing function approach to define and study the IFs for the MDPDFs and the target rRNet predictors. 
The concept of smoothing function was, arguably, first introduced in the context of NNs by \cite{chen1995smoothing} 
who developed the \textit{softplus} activation as a smooth approximation to the ReLU (\textit{plus}) function. 
Subsequently, this idea is extensively used in the optimization of non-smooth functions; 
see, e.g., the recent work by \cite{giovannelli2025non}.
In statistical contexts, this has been used by \cite{AvellaMedina2017} and \cite{tamine2001smoothed}
while studying the IFs of the penalized M-estimators with non-smooth penalties and non-parametric estimators, respectively.

For the present context of NN regression, let us start by verifying the existence of such smoothing functions for both $f$ and $\mu$. 
Conveniently, this is automatically satisfied by any error density $f$ satisfying (A0) as shown in the following lemma. 

\begin{lemma}\label{LEM:S0}
Let $f$ satisfies Assumption (A0). Then there exists a sequence of probability densities $\{f_m\}_{m\geq 1}$ such that 
each $f_m$ is $\mathcal{C}^2$ and strictly log-concave on $\mathbb{R}$ and 
$f_m$ converges to $f$ uniformly on $\mathbb{R}$ as $m \to \infty$.
Additionally, if $f$ is symmetric about $0$, then $f_m$ may be constructed to be symmetric about $0$ for each $m\geq 1$.
\end{lemma}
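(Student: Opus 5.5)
The plan is to smooth in the log domain. Write $f=e^{-\varphi}$, where under (A0) $\varphi$ is a finite convex function on $\mathbb{R}$, strictly convex a.e., with $\varphi(s)\to\infty$ as $|s|\to\infty$. Let $\rho$ be a standard mollifier: $\mathcal{C}^\infty$, nonnegative, even, and supported on $[-1,1]$. Put $\rho_m(s)=m\rho(ms)$ and $\varphi_m=\varphi*\rho_m$. Then
\[
f_m(s)=\frac{e^{-\varphi_m(s)}}{Z_m},\qquad Z_m=\int e^{-\varphi_m}.
\]
I will check four things in turn: smoothness, strict log-concavity, uniform convergence, and symmetry.

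\textbf{Smoothness and log-concavity.} $\varphi_m$ is $\mathcal{C}^\infty$ because we convolve a locally bounded function with a smooth compactly supported kernel, so $f_m$ is $\mathcal{C}^2$. $\varphi_m$ is convex as an average of translates $\varphi(\cdot-y)$ of a convex function.

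For strict convexity, take $s_1\ne s_2$ and $\lambda\in(0,1)$. The convexity inequality for $\varphi_m$ at $(s_1,s_2,\lambda)$ is the $\rho_m$-average over $y$ of the corresponding inequality for $\varphi$ on the segment $[s_1-y,s_2-y]$. If equality held for $\varphi_m$, then $\varphi$ would have to be affine on $[s_1-y,s_2-y]$ for a.e. $y\in[-1/m,1/m]$. That forces $\varphi$ to be affine on an interval of positive length, contradicting strict log-concavity a.e. To make this airtight, I read the a.e. condition in (A0) as saying $\varphi$ is affine on no interval, i.e.\ $\varphi'$ is strictly increasing.

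If that reading is considered too weak, a fallback is to add a vanishing strictly convex term, $\varphi_m+ s^2/m$. This keeps all other properties, since $e^{-s^2/m}\to1$ locally uniformly and the tails are handled as below.

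\textbf{Uniform convergence (the main obstacle).} First, $\varphi_m\to\varphi$ locally uniformly, because $\varphi$ is continuous. Since $\varphi\ge\varphi(0)+a|s|-b$ for some $a>0$, $b$ (convexity together with coercivity), and $\varphi_m\ge\varphi$ by Jensen and the evenness of $\rho$, we get the domination $e^{-\varphi_m}\le e^{-\varphi}\le Ce^{-a|s|}$. Dominated convergence then gives $Z_m\to\int e^{-\varphi}=1$.

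On $[-R,R]$, $|f_m-f|\to0$ uniformly. Outside $[-R,R]$, both $f_m$ and $f$ are bounded by $C'e^{-aR}$ uniformly in $m$ once $Z_m\ge1/2$. Choosing $R$ large and then $m$ large gives $\sup_s|f_m(s)-f(s)|\to0$. The delicate points are establishing the linear lower bound on $\varphi$ and the inequality $\varphi_m\ge\varphi$; both follow from convexity.

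\textbf{Symmetry.} If $f$ is symmetric about $0$, then $\varphi$ is even. Since $\rho$ is even, $\varphi_m$ is even as well, and so is the optional term $s^2/m$. Hence each $f_m$ is symmetric about $0$.

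Finally, if one also wants mean $0$ and variance $1$ as in (A0), I would recentre and rescale $f_m$. The moments of $f_m$ converge to those of $f$ by the same exponential domination. The affine correction tends to the identity, so it preserves uniform convergence because $f$ is uniformly continuous, and it also preserves strict log-concavity and symmetry.
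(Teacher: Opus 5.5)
Your proof is correct, but it takes a different route from the paper: you smooth $-\log f$ rather than $f$ itself. The paper convolves the density directly, $f_m=f\circledast\psi_m$ with $\psi_m$ the $\mathcal{N}(0,1/m)$ density. That construction is automatically a probability density, uniform convergence is the usual approximate-identity argument from uniform continuity of $f$, and symmetry is immediate. However, log-concavity of $f_m$ has to be imported from the stability of log-concavity under convolution (Prékopa--Leindler), and strictness is only asserted. The paper in fact claims $f_m$ is strongly log-concave. That fails for Laplace $f$, because Laplace$\circledast$Gaussian still has asymptotically exponential tails; strictness there needs an extra argument, for instance real-analyticity of Gaussian convolutions.

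Your route mollifies $\varphi=-\log f$ and renormalises. Convexity of $\varphi_m$ is then elementary, as an average of convex translates, and strictness is transparent. The price is controlling $Z_m$, which you do correctly via Jensen ($\varphi_m\ge\varphi$), the domination $e^{-\varphi_m}\le f$, and dominated convergence.

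One adjustment is needed. The paper explicitly counts the Laplace density as satisfying (A0), yet its $-\log$ is piecewise affine. So your primary reading of ``strictly log-concave a.e.'' excludes it, and for it $\varphi*\rho_m$ is affine off $[-1/m,1/m]$. The $s^2/m$ term should therefore be your default rather than a fallback. With it, $f_m$ is genuinely strongly log-concave, and your domination, local uniform convergence and evenness steps go through unchanged.

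Your final recentring step is unnecessary. The lemma only asks for probability densities, and the paper's own $f_m$ has variance $1+1/m$.
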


However, our previous assumptions on the NN model $\mu$, namely (N0) and (N1), do not ensure the existence of such smoothing sequence for $\mu$,
and we need additional restrictions on the assumed NN architecture to ensure it. 
Here, let us directly assume the existence of required smoothing sequence in (N2) given below; 
couple of practically  useful sufficient conditions are provided in Appendix \ref{APP:NN_assumptionms}.

\begin{itemize}
%	\item[(S1)] There exists a sequence of densities $\{f_m\}_{m\geq 1}$ such that 
%	$f_m$ is $\mathcal{C}^2$ and satisfies (A0) for each $m\geq 1$, 
%	and $f_m$ converges uniformly (on $\mathbb{R}$) to $f$ as $m\rightarrow\infty$. 

	\item[(N2)] There exists a sequence of NN models $\{\mu_m(\bm{x}, \bm{\theta})\}_{m\geq 1}$ such that, for each $m\geq 1$,  
	$\mu_m$ satisfies (N0)--(N1) and is $\mathcal{C}^2$ in $\bm{\theta}\in\Theta$ for every $\bm{x}$, 
	and\\ $\sup\limits_{\bm{\theta}\in \Theta, ~i\geq 1} |\mu_m(\bm{x}_i, \bm{\theta}) - \mu(\bm{x}_i, \bm{\theta})| \rightarrow 0$ 
	as $m\rightarrow\infty$. 
\end{itemize}

Now, let $\mathcal{L}_{m, n,\beta}^\ast(\bm{\eta})$ denotes the population-level DPD-loss, 
as defined in  (\ref{dpd-loss-pop}) but with $f$ and $\mu$ being replaced by $f_m$ and $\mu_m$, respectively, for each $m\geq 1$, 
while its minimizer is denoted by $\bm{T}_{m,\beta}(\overline{G}_n)$; also put  
$\mu_{m,n,\beta}^*(\bm{x}) = \mu_m(\bm{x}, \bm{T}_{m,\beta}^{\bm{\theta}}(\overline{G}_n))$ for all $m\geq1$.
The following lemma present the convergences of these quantities to the respective targets as $m\rightarrow\infty$. 

\begin{lemma}\label{LEM:smoothing_func}
Under (A0) and (N0)--(N2), if the joint parameter space of $\bm{\eta}$ is taken to be compact, 
then we have the following convergence results as $m\rightarrow\infty$, for any fixed $n\geq 1$ and $\beta\geq 0$.
%$$
%\sup_{\bm{\eta}} \left|\mathcal{L}_{m, n,\beta}^\ast(\bm{\eta}) - \mathcal{L}_{n,\beta}^\ast(\bm{\eta})\right|\rightarrow 0,
%~~~~ 
%\bm{T}_{m,\beta}(\overline{G}_n) \rightarrow \bm{T}_{\beta}(\overline{G}_n),
%~~~~ 
%\sup_{i\geq 1} |\mu_{m,n,\beta}^*(\bm{x}_i) - \mu_{n,\beta}^*(\bm{x}_i)| \rightarrow 0.
%$$
\begin{itemize}
	\item[a)] $\sup_{\bm{\eta}} \left|\mathcal{L}_{m, n,\beta}^\ast(\bm{\eta}) - \mathcal{L}_{n,\beta}^\ast(\bm{\eta})\right|
	\rightarrow 0$, and hence $\bm{T}_{m,\beta}(\overline{G}_n) \rightarrow \bm{T}_{\beta}(\overline{G}_n)$. 
	\item[b)] $\sup_{i\geq 1} |\mu_{m,n,\beta}^*(\bm{x}_i) - \mu_{n,\beta}^*(\bm{x}_i)| \rightarrow 0$.
\end{itemize}  
\end{lemma}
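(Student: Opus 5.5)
The plan is a standard uniform-approximation-plus-argmin-continuity argument on the compact parameter space, written $\mathcal{E}$ for the joint space of $\bm{\eta}=(\bm{\theta},\sigma)$, with $\sigma\geq\sigma_0>0$.

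\textbf{Step 1: uniform convergence of the loss.} Fix $\beta>0$ first. By \eqref{dpd-loss-pop},
\[
\left|\mathcal{L}_{m,n,\beta}^\ast(\bm{\eta})-\mathcal{L}_{n,\beta}^\ast(\bm{\eta})\right|
\leq \frac{1}{\sigma^\beta}\left|C^{(\beta)}_{0,0}(f_m)-C^{(\beta)}_{0,0}(f)\right|
+\frac{1+\beta}{\beta\sigma^{\beta}}\,\frac1n\sum_{i=1}^n E_{G_i}\left|f_m^\beta(r_{m,i})-f^\beta(r_i)\right|,
\]
where $r_i=(Y-\mu(\bm{x}_i,\bm{\theta}))/\sigma$ and $r_{m,i}=(Y-\mu_m(\bm{x}_i,\bm{\theta}))/\sigma$. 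Since $\sigma\ge\sigma_0$, the prefactors are bounded. The second term is split as
\[
|f_m^\beta(r_{m,i})-f^\beta(r_{m,i})|+|f^\beta(r_{m,i})-f^\beta(r_i)|.
\]
\begin{itemize}
\item The first piece tends to $0$ uniformly by Lemma \ref{LEM:S0} (uniform convergence of $f_m\to f$), combined with uniform continuity of $x\mapsto x^\beta$ on the bounded range $[0,\sup f+1]$. A log-concave density is bounded.
\item For the second piece, $|r_{m,i}-r_i|\leq \sigma_0^{-1}\sup_{\bm{\theta},i}|\mu_m-\mu|\to0$ by (N2). Moreover $f^\beta$ is uniformly continuous on $\mathbb{R}$, being continuous and vanishing at $\pm\infty$ (log-concavity gives exponential tails).
\end{itemize}
For the constant term, $C_{0,0}^{(\beta)}(f_m)=\int f_m^{1+\beta}$ is needed. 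Here I will use uniform convergence together with a uniform exponential tail envelope for $f_m$, which I expect to be available from the construction in Lemma \ref{LEM:S0} (e.g.\ Gaussian mollification of a log-concave $f$ keeps a common tail bound $ce^{-a|s|}$). Dominated convergence then finishes.

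For $\beta=0$, the loss involves $-\log f$, and uniform convergence in $\log$ is not automatic. I would handle it via the same envelope, giving local uniform convergence of $\log f_m$ on compacts plus tail control of $E_{G_i}|\log f_m(r)-\log f(r)|$, under integrability of $\log f$ under $G_i$ (implicitly required for the loss to be finite). All bounds above are free of $\bm{\eta}$, so the supremum over $\mathcal{E}$ tends to zero.

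\textbf{Step 2: convergence of minimizers.} This is the main obstacle, because $\bm{T}_\beta^{\bm{\theta}}$ is unique only modulo $\mathcal{G}$. The convergence $\bm{T}_{m,\beta}\to\bm{T}_\beta$ must therefore be read in the quotient, i.e.\ every limit point of $\bm{T}_{m,\beta}(\overline{G}_n)$ lies in the orbit of minimizers.
\begin{itemize}
\item $\mathcal{L}_{n,\beta}^\ast$ is continuous on $\mathcal{E}$, by (N1) and dominated convergence.
\item By compactness, any subsequence of $\bm{\eta}_m=\bm{T}_{m,\beta}$ has a further convergent subsequence $\bm{\eta}_{m_k}\to\bar{\bm{\eta}}$.
\item For any $\bm{\eta}$, Step 1 gives $\mathcal{L}^\ast_{n,\beta}(\bar{\bm{\eta}})=\lim\mathcal{L}^\ast_{m_k,n,\beta}(\bm{\eta}_{m_k})\leq\lim\mathcal{L}^\ast_{m_k,n,\beta}(\bm{\eta})=\mathcal{L}^\ast_{n,\beta}(\bm{\eta})$, so $\bar{\bm{\eta}}$ is a minimizer.
\item Theorem \ref{THM:identifiability} then gives $\bar\sigma=\sigma^\ast_{n,\beta}$ and $\bar{\bm{\theta}}\in\mathcal{G}\cdot\bm{\theta}^\ast_{n,\beta}$.
\end{itemize}
Hence $\sigma$ converges genuinely, and $\bm{\theta}$ converges up to symmetry, i.e.\ $\operatorname{dist}(\bm{T}^{\bm{\theta}}_{m,\beta},\mathcal{G}\cdot\bm{\theta}^\ast_{n,\beta})\to0$.

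\textbf{Step 3: part (b).} The gap is bounded as
\[
|\mu_m(\bm{x}_i,\bm{\theta}_m)-\mu(\bm{x}_i,\bm{\theta}^\ast)|\leq \sup_{\bm{\theta},i}|\mu_m-\mu|+|\mu(\bm{x}_i,\bm{\theta}_m)-\mu(\bm{x}_i,\bar{\bm{\theta}})|.
\]
Here $\mu(\cdot,\bar{\bm{\theta}})=\mu^\ast_{n,\beta}$ by the function-space uniqueness in Theorem \ref{THM:identifiability}, so the choice of representative does not matter. The first term vanishes by (N2). The second vanishes along the subsequence by (N1) continuity, and uniformly over the finitely many design points $\bm{x}_1,\ldots,\bm{x}_n$. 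The subsequence argument then yields full convergence.
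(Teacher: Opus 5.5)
Your argument is correct and follows the same skeleton as the paper's proof: split the loss gap into a density-approximation piece and a mean-approximation piece, pass to minimizers by argmin continuity, then use a triangle inequality for (b). Two of your steps are tighter than the paper's.

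In (b), the paper bounds the gap by $|\mu_m(\bm{x}_i,\bm{T}_{m,\beta})-\mu_m(\bm{x}_i,\bm{T}_{\beta})|+|\mu_m(\bm{x}_i,\bm{T}_{\beta})-\mu(\bm{x}_i,\bm{T}_{\beta})|$. It disposes of the first term by ``Lipschitz continuity of $\mu_m$'', which as stated would need Lipschitz constants uniform in $m$, and (N1)--(N2) do not provide these. Your route goes through the limit model instead, so it needs only continuity of the single map $\bm{\theta}\mapsto\mu(\bm{x}_i,\bm{\theta})$. That continuity even follows from (N2) alone, since this map is a uniform limit of the continuous maps $\mu_m(\bm{x}_i,\cdot)$.

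Second, the paper's appeal to an argmin continuity theorem for $\bm{T}_{m,\beta}(\overline{G}_n)\to\bm{T}_{\beta}(\overline{G}_n)$ is not literally well posed, because $\bm{T}_\beta^{\bm{\theta}}$ is determined only modulo $\mathcal{G}$. Your subsequence argument makes precise that $\sigma$ converges outright, while $\bm{\theta}$ converges only in distance to the minimizing orbit. Your use of function-space uniqueness is exactly what makes (b) independent of the chosen representative.

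The other differences are minor:
\begin{itemize}
\item You use uniform continuity of $f^\beta$ where the paper uses a Lipschitz bound.
\item You spell out the envelope needed for $\int f_m^{1+\beta}\to\int f^{1+\beta}$, where the paper just cites dominated convergence. No envelope is actually needed: Scheff\'e's lemma gives $\int|f_m-f|\to 0$, and $|a^{1+\beta}-b^{1+\beta}|\leq(1+\beta)\max(a,b)^{\beta}|a-b|$ together with the eventual bound $\|f_m\|_\infty\leq\|f\|_\infty+1$ finishes.
\item Your caveat that $\beta=0$ requires extra integrability of $\log f$ under the $G_i$ is well taken; the paper dismisses that case as ``similar''.
\end{itemize}

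Finally, like the paper's, your (b) is uniform only over the design points $\bm{x}_1,\ldots,\bm{x}_n$. That is the natural reading for fixed $n$. A supremum over an infinite design sequence would need equicontinuity in $i$, which neither argument supplies.
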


Since $\mathcal{L}_{m, n,\beta}^\ast(\bm{\eta})$ is $\mathcal{C}^2$ by construction, 
the IFs of $\bm{T}_{m,\beta}(\overline{G}_n)$ and $\mu_{m,n,\beta}^*(\bm{x})$ can be obtained directly 
from the preceding subsection, particularly from Theorem \ref{THM:IF-MDPDF} there, for each $m\geq 1$. 
Based on Lemma \ref{LEM:smoothing_func}, it is then intuitive to define the IFs of our MDPDFs $\bm{T}_{\beta}(\overline{G}_n)$ 
and $\mu_{m,n,\beta}^*(\bm{x})$ with possibly non-smooth models by the limit of the IFs of 
$\bm{T}_{m,\beta}(\overline{G}_n)$ and $\mu_{m,n,\beta}^*(\bm{x})$, respectively, as $m \rightarrow \infty$, 
whenever these limits exist. The resulting form of the IFs are given in the following theorem.

\begin{theorem}\label{THM:IF-MDPDF-nonsmooth}
Suppose that Assumptions (A0) and (N0)--(N2) hold for the NN regression model (\ref{mlp-mean-fn}) 
with possibly non-smooth $f$ and $\mu$. Then, for any $\beta\geq 0$ and $\bm{x}\in \mathcal{A}$ (admissible feature domain), 
the IFs of the MDPDFs $\bm{T}_\beta^{\bm{\theta}}$, ${T}_\beta^{\sigma}$ and the rRNet target predictor $\mu_{n, \beta}^\ast(\bm{x})$
at the model distribution $\overline{F}_{n,\bm{\theta}}$ are given by 
(\ref{EQ:IF-MDPDF-model-theta}), (\ref{EQ:IF-MDPDF-model-sigma}), and (\ref{EQ:IF-target}), respectively, 
with $u$ within $\psi_{1,\beta}$ and $\psi_{2,\beta}$ being a measurable element of $\partial \ln f$, and
$\nabla_{\bm{\theta}}\mu$ being replaced by any measurable element of $\partial_{\bm{\theta}}\mu$ so that 
$$\bm{v}_n(\bm{\theta}) \in \bigcap_{i=1}^n \bigcap_{\bm{g}_i \in \partial \mu(\bm{x}_i, \bm{\theta})}Ker(\bm{g}_i).$$  
\end{theorem}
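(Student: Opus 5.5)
The plan is to prove the theorem by a limiting argument. The IF for the possibly non-smooth case is, by the definition adopted just before the statement, the limit as $m\to\infty$ of the IFs of the smoothed functionals $\bm{T}_{m,\beta}(\overline{G}_n)$ and $\mu^*_{m,n,\beta}(\bm{x})$.

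First, I will fix the smoothing sequences: $\{f_m\}$ from Lemma \ref{LEM:S0}, taken symmetric when $f$ is symmetric so that $C^{(\beta)}_{1,2}$ computed under $f_m$ vanishes as required by Theorem \ref{THM:IF-MDPDF}, and $\{\mu_m\}$ from (N2). For each $m$, the smoothed model with $f_m,\mu_m$ satisfies (A0), (N0) and the $\mathcal{C}^2$ hypotheses. Theorem \ref{THM:IF-MDPDF} and Corollary \ref{COR:IF-predictor} therefore give explicit IFs at the smoothed model distribution $\overline{F}_{m,n,\bm{\theta}_m}$, where $(\bm{\theta}_m,\sigma_m)=\bm{T}_{m,\beta}(\overline{G}_n)$. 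These IFs involve $\psi^{(m)}_{1,\beta}=u_mf_m^\beta$, $\psi^{(m)}_{2,\beta}=(1+su_m)f_m^\beta$, the constants $C^{(\beta,m)}_{i,j}$, and $\nabla_{\bm{\theta}}\mu_m(\bm{x}_i,\bm{\theta}_m)$. By Lemma \ref{LEM:smoothing_func}, $\bm{T}_{m,\beta}\to\bm{T}_\beta$ modulo $\mathcal{G}$, so I choose representatives with $\bm{\theta}_m\to\bm{\theta}$ and $\sigma_m\to\sigma$.

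Second, I will pass to the limit term by term.
\begin{itemize}
\item[(i)] \emph{Scores.} By log-concavity, the derivatives $u_m=(\log f_m)'$ are monotone. Uniform convergence $f_m\to f$ together with concavity of $\log f_m$ gives, via the standard convergence theorem for subgradients of converging concave functions (Attouch/Rockafellar), that every limit point of $u_m(s_m)$ with $s_m\to s$ lies in $\partial\log f(s)$. Hence $u_m\to u$ at all continuity points, i.e.\ a.e., with some measurable selection at kinks.
\item[(ii)] \emph{Constants.} The $C^{(\beta,m)}_{i,j}$ converge to $C^{(\beta)}_{i,j}$ by dominated convergence. The domination comes from uniform exponential tails of log-concave $f_m$ and the monotone growth bounds on $u_m$.
\item[(iii)] \emph{Network gradients.} For the network, $\mu_m\to\mu$ uniformly on the design points. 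Combined with local Lipschitz continuity, this shows via the Clarke subdifferential graph-closedness (gradient consistency of smoothing functions, as in Chen's smoothing theory) that limit points of $\nabla_{\bm{\theta}}\mu_m(\bm{x}_i,\bm{\theta}_m)$ belong to $\partial_{\bm{\theta}}\mu(\bm{x}_i,\bm{\theta})$. Boundedness of these gradients follows from a uniform local Lipschitz constant, which I will take from (N1)--(N2) on the compact parameter set, as in Lemma \ref{LEM:smoothing_func}. Along a subsequence, then, $\nabla\mu_m\to\bm{g}_i\in\partial\mu(\bm{x}_i,\bm{\theta})$, a measurable selection.
\end{itemize}
Substituting these limits into (\ref{EQ:IF-MDPDF-model-sigma}) gives the $\sigma$-IF directly, since that formula involves only $\psi_{2,\beta}$ and constants.

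The main obstacle is the $\bm{\theta}$-part, because the Moore--Penrose pseudo-inverse is not continuous when the rank changes. I plan to avoid taking the limit of $[\dot{\bm\mu}_m^\top\dot{\bm\mu}_m]^+$ itself. Instead I will work with the defining normal equation $\dot{\bm\mu}_m^\top\dot{\bm\mu}_m\,IF_m = c_m\,\psi^{(m)}_{1,\beta}\,\nabla\mu_m(\bm{x}_i)$, which is continuous in its coefficients. The minimum-norm solutions are bounded whenever the limiting $\dot{\bm\mu}$ has the same rank along the subsequence. When it does not, I will use the fact that $\nabla\mu_m(\bm{x}_i)$ lies in the row space, so the projected solution stays bounded, and I will pass to a further subsequence. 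Any limit then solves $\dot{\bm\mu}^\top\dot{\bm\mu}\,IF=c\,\psi_{1,\beta}\bm{g}_i$. Its general solution is the stated pseudo-inverse term plus an element of $Ker(\dot{\bm\mu})$ for every admissible selection. This explains why $\bm{v}_n$ lies in the intersection of the kernels of all $\bm{g}_i\in\partial\mu(\bm{x}_i,\bm{\theta})$.

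Finally, for the predictor, I will write its IF as $\bm{h}^\top IF(\bm{T}^{\bm\theta}_\beta)$, where $\bm{h}$ is the chosen limit of $\nabla\mu_m(\bm{x},\bm{\theta}_m)$. Admissibility, $\bm{x}\in\mathcal{A}$, means $\bm{h}$ is in the span of the $\bm{g}_j$, so $\bm{h}^\top\bm{v}_n=0$. The limit is therefore (\ref{EQ:IF-target}), independent of the kernel component, as in Corollary \ref{COR:IF-predictor}.
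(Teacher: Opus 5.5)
Your overall route is what the paper intends: apply Theorem \ref{THM:IF-MDPDF} to $(f_m,\mu_m)$ and let $m\to\infty$. The paper gives no argument beyond the remark preceding the statement. Your treatment of the scores, via convergence of subgradients of concave functions, and of the $\sigma$-IF is essentially fine. The steps you add to control the $\bm\theta$-limit, however, do not follow from (A0), (N0)--(N2).

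\emph{Step (iii): gradient consistency does not follow from (N1)--(N2).} Uniform convergence $\mu_m\to\mu$ plus local Lipschitz continuity gives no control of $\nabla_{\bm\theta}\mu_m$. Clarke graph-closedness is a property of one fixed function, not of a sequence, and (N1)--(N2) give no Lipschitz bound uniform in $m$. Take $d=1$, $\mu(x,\theta)=\theta x$ and $\mu_m(x,\theta)=(\theta+m^{-1}\sin(m\theta))x$.
\begin{itemize}
\item Each $\mu_m$ is $\mathcal{C}^\infty$ and injective in $\theta$, and $\sup_{\theta,i}|\mu_m-\mu|\le\max_i|x_i|/m\to0$. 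So (N0)--(N2) hold, and the gradients are even uniformly bounded.
\item Yet $\nabla_\theta\mu_m(x_i,\theta)=(1+\cos(m\theta))x_i$ has limit points filling $[0,2x_i]$ (for $\theta/\pi\notin\mathbb{Q}$), whereas $\partial_\theta\mu(x_i,\theta)=\{x_i\}$.
\item The smoothed $\theta$-IFs from Theorem \ref{THM:IF-MDPDF} are proportional to $(1+\cos(m\theta_m))^{-1}$ and do not converge.
\end{itemize}
Gradient consistency must therefore be added as a hypothesis, or verified for the particular smoothing (it holds, e.g., for convolution mollifiers).

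\emph{The rank-drop step is false.} The minimum-norm solution of $\dot{\bm\mu}_m^\top\dot{\bm\mu}_m\bm z=\nabla_{\bm\theta}\mu_m(\bm x_i,\bm\theta_m)$ is $\bm z_m=\dot{\bm\mu}_m^{+}\bm e_i$. Its norm can be of order $1/s^{+}_{\min}(\dot{\bm\mu}_m)$, where $s^{+}_{\min}$ is the smallest nonzero singular value. Membership of the right-hand side in the row space gives solvability, not boundedness.
\begin{itemize}
\item Counterexample: take $\dot{\bm\mu}_m=\begin{pmatrix}1&0\\1&1/m\end{pmatrix}\to\dot{\bm\mu}=\begin{pmatrix}1&0\\1&0\end{pmatrix}$ and $i=1$. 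Then $\bm z_m=(1,-m)^\top$ diverges.
\item In the same example, the predictor IFs at the design points are proportional to $\dot{\bm\mu}_m\dot{\bm\mu}_m^{+}\bm e_1=\bm e_1$. They do not tend to $\dot{\bm\mu}\dot{\bm\mu}^{+}\bm e_1=(1/2,1/2)^\top$, which is what (\ref{EQ:IF-target}) prescribes.
\item This is not exotic. In Example \ref{ex-3.2}, positive homogeneity of ReLU gives $w_0\partial_{w_0}\mu+w_1\partial_{w_1}\mu=w_1^{out}\partial_{w_1^{out}}\mu$ at every $x_i$ and for every subgradient selection. So the limiting $\dot{\bm\mu}$ has rank at most $3$, whereas the softplus Jacobian generically has rank $4$.
\item In that case one singular value tends to $0$. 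The limits, when they exist, are governed by $\lim_m\dot{\bm\mu}_m\dot{\bm\mu}_m^{+}$, not by $\dot{\bm\mu}^{+}$.
\end{itemize}
You need a rank-stability hypothesis, namely $\operatorname{rank}\dot{\bm\mu}_m(\bm\theta_m)=\operatorname{rank}\dot{\bm\mu}$ for all large $m$, under which $\dot{\bm\mu}_m^{+}\to\dot{\bm\mu}^{+}$.

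\emph{The kernel and admissibility step is unjustified.} Passing to the limit in the normal equation involves only the particular limiting selection $\bm g_1,\ldots,\bm g_n$. It gives $\bm v_n\in\bigcap_i Ker(\bm g_i)$ for that selection only, not for every element of $\partial_{\bm\theta}\mu(\bm x_i,\bm\theta)$; your phrase ``for every admissible selection'' does not follow. Moreover, $\mathcal A$ is defined through the span of the full subdifferentials $\bigcup_i\partial_{\bm\theta}\mu(\bm x_i,\bm\theta)$, not of the selected $\bm g_j$. So the identity $\bm h^\top\bm v_n=0$ you use for the predictor is not established.
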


It follows from the above theorem that the local robustness properties of rRNet are preserved under non-smooth activation such as ReLU;
they are exactly the same as discussed in Corollaries \ref{COR:IF-bounded} and \ref{COR:IF-bounded0} for $\mathcal{C}^2$ models. 
In particular, the IFs of the MDPDFs and the rRNet predictor are bounded at all $\beta>0$ implying local B-robustness of our proposal; 
but they are unbounded (indicating non-robust nature) at $\beta=0$, unless $f$ has exact exponential tails 
in which case the IFs of $\bm{\theta}$ and the predictor functional  become bounded even at $\beta=0$ (the ML based training). 
Since the ML estimate under Laplace error (with known $\sigma$) is indeed the MAE of $\bm{\theta}$, 
an application of Theorem \ref{THM:IF-MDPDF-nonsmooth} with Laplace error density yields the IF formula  for the MAE, 
justifying its local robustness (as Laplace density has exponential tails).

\begin{example}[Shallow NN with Gaussian error and ReLU activation] \label{ex-3.2}
We again consider the example of simple shallow NN model with Gaussian error, as in Example \ref{ex-3.1}, 
but now with the ReLU activation function $\phi(x) = \phi_R(x) = \max(0,x)$, which is non-differentiable at $x = 0$. 
Hence, the NN model function $\mu(x_i, \bm{\theta}) = w_0^{out} + w_1^{out}\phi_R(w_{0} + w_{1}x_i)$ 
is not differentiable everywhere in $\bm{\theta}\in\Theta$. So, we consider the sequence of smoothing functions 
$\phi_m(x) = \frac{1}{m}\log(1+e^{mx})$ that converges uniformly to $\phi_R(x)$, as $m\to\infty$, for all $x\in\mathbb{R}$. 
The function $\phi_1(x)$ the standard softplus function \citep{zheng2015improving}. 
With these activation functions, the resulting sequence of NN models 
$\mu_m(x_i, \bm{\theta}) = w_0^{out} + w_1^{out}\phi_m(w_{0} + w_{1}x_i)$ satisfies Assumption (N2).
So, the IF of the MDPDFs of the NN model parameters with ReLU activation $\phi_R$ is given by 
the limits of the same obtained with activation function $\phi_m$. They have the exact same form as in Example \ref{ex-3.1}
with the same $\psi_{1,\beta}$ and $\psi_{2,\beta}$ (for Gaussian error) but with $\mu$ being computed using $\phi_R$ 
and $\nabla_{\bm{\theta}}\mu(x_i,\bm{\theta})$ being replaced by %(for each $i=1, \ldots, n$)
$$
\overline{\mu}_i(\bm{\theta}) := \begin{bmatrix}
        w_0^{out} H(w_{0} + w_{1}x_i) & w_1^{out} H(w_{0} + w_{1}x_i)x_i & 1 & \phi_R(w_{0} + w_{1}x_i)
    \end{bmatrix}^\top \in \partial_{\bm{\theta}}\mu({x}_i,\bm{\theta}),
$$
for all $i=1, \ldots, n$, where $H(x) = \lim\limits_{m\rightarrow\infty} \phi_m'(x) = \frac{1}{2}(sign(x)+1)$ is the Heaviside step function.  

Now, considering exactly the same values of features and true parameters as in Example \ref{ex-3.1}, 
we plot the resulting IFs of the MDPDFs and the rRNet predictor at $i = 2$ in Figure \ref{fig:IF-relu-50-2}; 
the same for $i=49$ are provided in Figure \ref{fig:IF-relu-10-49} (Appendix \ref{add-results}). 
Similar to the previous example, the limiting IFs are again bounded for $\beta > 0$ 
and are redescending in nature, indicating the increasing robustness for higher values of $\beta$
even for the ReLU activation function. As before, the unboundedness of the IFs at $\beta = 0$ indicates the lack of robustness of the ML based training,
and equivalently the standard LSE based training of ReLU networks. 
\end{example}

\begin{figure}[!h]
    \centering
    \includegraphics[width=\textwidth]{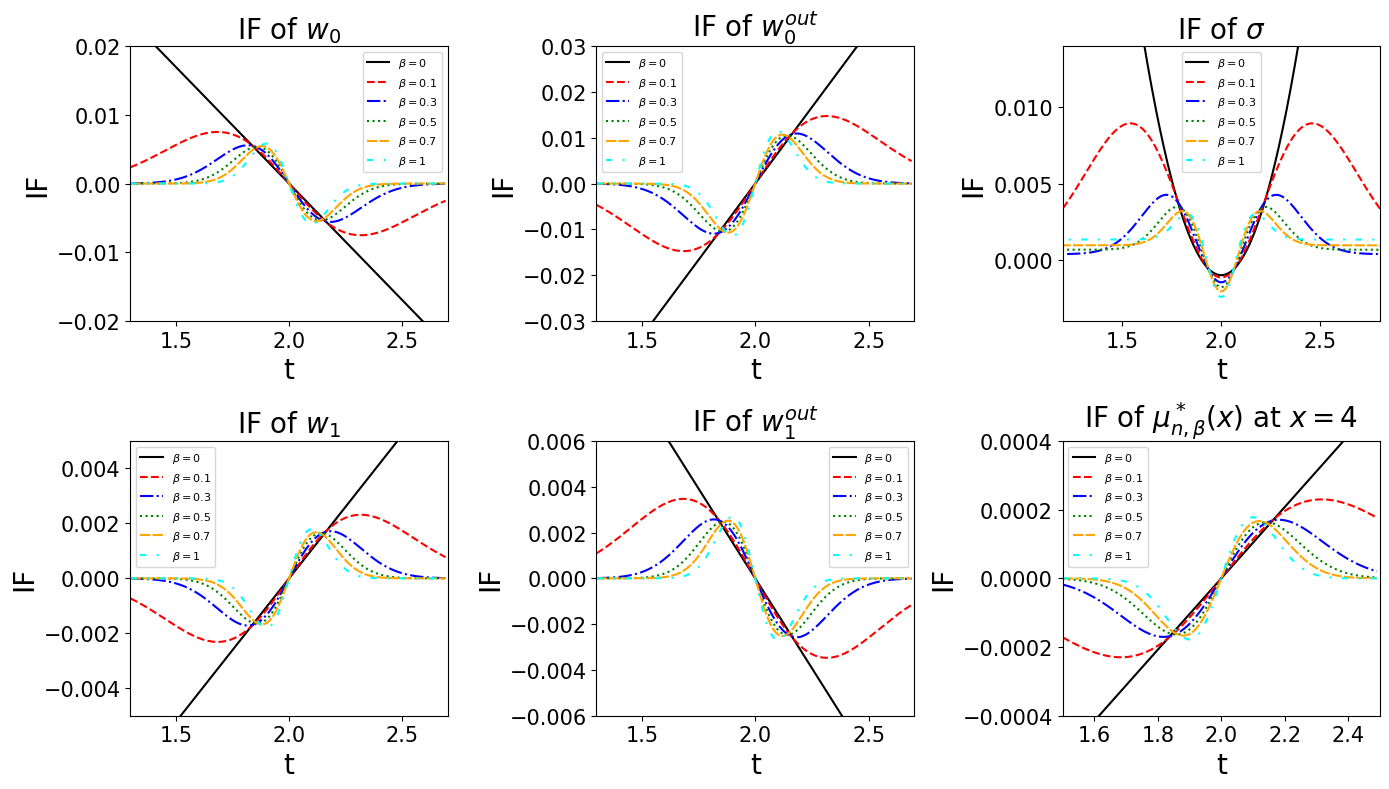}
    \caption{IFs of the MDPDFs and the rRNet predictors for a simple MLP, with ReLU activation and Gaussian error, 
    	under contamination in the 2nd observation [The case $\beta=0$ represents the standard LSE based training]}
    \label{fig:IF-relu-50-2}
\end{figure}

\section{Global robustness guarantees: Breakdown Analysis}\label{BP}

Let us now study global robustness of the proposed rRNet by deriving the asymptotic breakdown points (ABP) of associated functionals. 
%We show that, under a broad and practically relevant class of contaminations, the ABP of the rRNet functions 
%can be as high as 50\%, so that they do not \textit{break down} even under contamination proportion close to 0.5. 
Recall that the MDPDF of weight parameters $\bm{\theta}$ is unique only up to the NN symmetries, 
and so we should formally define the breakdown notion in the present context of learning regression NNs 
through the unique rRNet target (prediction) functional  $\mu_{n, \beta}^\ast$ (instead of the MDPDF of $\bm{\theta}$) 
along with the unique MDPDF of the error scale $\sigma$. Formally, we call a sequence of MDPDFs 
$\bm{\eta}_m =(\bm{\theta}_m^\top, \sigma_m)^\top$, $m\geq 1$, to \textit{break down}  if 
(since $\sigma$ is assumed to be bounded below)
\begin{eqnarray}
\mbox{ either }~ \max\limits_{i\geq 1}|\mu(\bm{x}_i, \bm{\theta}_m)| \rightarrow\infty, 
~~ \mbox{ or }~~  \sigma_m \rightarrow \infty, ~~~\mbox{as }~m\rightarrow\infty.
\label{EQ:BP_def0}
\end{eqnarray}

In the following analyses, we assume that the conditional model is specified correctly, i.e., $G_i = F_{i, \bm{\eta}_*}$ 
for some $\bm\eta_*=(\bm{\theta}_*^\top, \sigma_*)^\top$ in the interior of $\Theta \times [\sigma_0, \infty)$ and all $i\geq 1$.
But the data comes from a (gross-error) contaminated distribution having conditional densities of $Y$ given $\bm{x}_i$ 
as $G_{i,m, \epsilon} = (1-\epsilon) F_{i, \bm{\eta}_*} + \epsilon K_{i,m}$ for each $i\geq 1$ and $m\geq 1$, 
where $\{K_{i,m}\}_{m\geq 1}$ is a sequence of contaminating distributions at each $i\geq 1$ and $\epsilon$ is the contamination proportion. 
We then put $\overline{G}_{m, n, \epsilon}(y, \bm{x}) = \frac{1}{n}\sum_{i=1}^n \wedge_{\boldsymbol{x}_i}(\bm{x}) G_{i,m, \epsilon}(y)$ 
and $\bm{\eta}_{m, \beta} =(\bm{\theta}_{m, \beta}^\top, \sigma_{m, \beta})^\top = \bm{T}_\beta(\overline{G}_{n,m, \epsilon})$ 
for each $m, n \geq 1$ and $\beta\geq 0$. 
Then, following \cite{hampel1986robust} and \cite{jana2025bp},
we formally define the asymptotic breakdown point of rRNet functionals  with tuning parameter $\beta \geq 0$ as
\begin{eqnarray}
\epsilon_\beta^\ast &=& \sup\bigg\{ \epsilon\in[0, 0.5] : \limsup\limits_{n\rightarrow\infty} \limsup\limits_{m\rightarrow\infty}
\max \left\{\max\limits_{1\leq i\leq n}|\mu(\bm{x}_i, \bm{\theta}_{m, \beta})|, \sigma_{m, \beta} \right\} <\infty,  
\nonumber\\
&&~~~~~~~ ~~~~~~~~~~~\mbox{for all possible contaminating sequences } \{K_{i,m}\}_{m\geq 1} \bigg\}.~~~~
\label{EQ:BP-def}
\end{eqnarray}

The above definition of ABP is applicable for any NN learning algorithm whose prediction functional is uniquely defined, 
but computing it explicitly is not straightforward in general.  
Here, we  derive the breakdown point of rRNet against a wide class of practically relevant contaminating sequences,
specified by the following assumption.  

\begin{itemize}
	\item[(A1)] For each $i\geq 1$ and $m\geq 1$, the contaminating distribution $K_{i,m}$ admits a density $k_{i,m}$ 
	with respect to the Lebesgue measure such that 
	\begin{itemize}
		\item[(i)] $\lim\limits_{m\rightarrow\infty} \frac{1}{n}\sum\limits_{i=1}^n \int k_{i,m}^{1+\beta}(y)dy <\infty$ for each $n\geq 1$, and 
		\item[(ii)] $\lim\limits_{m\rightarrow\infty} \sup\limits_{i\ge1} \int_B k_{i,m}(y)\,dy = 0 $ 
		for every compact subset $B \subset \mathbb{R}$.
	\end{itemize}
%	\item[(B0)] For each $n\geq 1$, we have $\lim\limits_{m\rightarrow\infty} n^{-1}\sum\limits_{i=1}^n \int k_{i,m}^{1+\beta} <\infty$.  
%	\item[(B1)] For any compact $S\subset \Theta\times [\sigma_0, \infty)$, we have 
%	$\int \min\{ f_{i, \bm{\eta}}, k_{i,m} \} \rightarrow 0 $ as $m\rightarrow\infty$ uniformly for all $\bm{\eta}\in S$ and $i\geq 1$. 
%	\item[(B2)] For any sequence $\{\bm{\eta}_m\}_{m\geq 1}$ satisfying \eqref{EQ:BP_def0}, we have 
%	$\int \min\{ f_{i, \bm{\eta}_0}, f_{i, \bm{\eta}_m}\} \rightarrow 0 $ as $m\rightarrow\infty$ uniformly for all $i\geq 1$, and  
%	$\limsup\limits_{m\rightarrow\infty} \int f_{i, \bm{\eta}_m}^\beta k_{i,m} < \frac{C_k}{\sigma_0}$ for all $i \geq 1$
%	and some finite constant $C_k>0$ (independent of $i$). 
\end{itemize}

In the above assumptions, Part (i) ensures that the DPD-loss function 
and hence the rRNet functionals remain well-defined even under contaminated distributions. 
The second part (ii) of (A1)  makes the contaminating sequence of distributions practically meaningful,
by requiring them to be asymptotically negligible on any compact subset of the sample space, thereby concentrating their mass at infinity.
Under our base Assumption {\rm(A0)}, it further implies asymptotic mutual singularity between the assumed (conditional)  model and
contaminating distributions which is commonly assumed in the literature of minimum divergence estimators.
Under these assumptions, we now have the following theorem specifying  high breakdown of rRNet functionals.

\begin{theorem}\label{THM:BP}
Under Assumptions (A0), (A1) and (N0), the ABP $\epsilon_\beta^\ast$ of the rRNet functionals 
at the assumed (conditional) model distributions is given by  
\begin{eqnarray}
\epsilon_\beta^\ast = \left\{ \begin{array}{ll}
\frac{1}{2} & \mbox{ if } 0<\beta \leq 1, 
\\\\
0      		& \mbox{ if } \beta =0. 
\end{array}\right.  
\nonumber
\end{eqnarray}
\end{theorem}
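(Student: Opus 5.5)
We split the proof into the case $0<\beta\le 1$ and the case $\beta=0$.

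\textbf{Case $0<\beta\le 1$.} We argue by contradiction, following the standard DPD breakdown argument (as in \cite{jana2025bp}). Fix $\epsilon<1/2$ and suppose some contaminating sequence satisfying (A1) makes $\bm{\eta}_m=\bm{\eta}_{m,\beta}$ break down in the sense of \eqref{EQ:BP_def0}. Up to the additive constant $1/\beta$, the objective is
$$
\mathcal{L}^*_{m}(\bm{\eta})=\frac{1}{n}\sum_{i=1}^n\Big[\sigma^{-\beta}C^{(\beta)}_{0,0}-\big(1+\tfrac{1}{\beta}\big)\int f_{i,\bm{\eta}}^{\beta}\,dG_{i,m,\epsilon}\Big].
$$
The plan is to compute its limit along two sequences and compare.

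First, evaluate it at the true parameter $\bm{\eta}_*$. By (A1)(ii) the mass of $k_{i,m}$ escapes to infinity, and $f_{i,\bm{\eta}_*}^\beta$ is bounded and tends to zero at infinity. Hence $\int f^\beta_{i,\bm{\eta}_*}k_{i,m}\to0$ uniformly in $i$. This gives
$$
\mathcal{L}^*_m(\bm{\eta}_*)\to \sigma_*^{-\beta}C^{(\beta)}_{0,0}\Big[1-(1+\tfrac1\beta)(1-\epsilon)\Big]=\sigma_*^{-\beta}C^{(\beta)}_{0,0}\Big[-\tfrac1\beta+\epsilon(1+\tfrac1\beta)\Big].
$$

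Second, evaluate it along the breaking sequence $\bm{\eta}_m$. Either $\sigma_m\to\infty$, so that every $f_{i,\bm{\eta}_m}^\beta\le \sigma_m^{-\beta}\sup f^\beta\to0$ and all terms vanish. Or $\max_i|\mu(\bm{x}_i,\bm{\theta}_m)|\to\infty$, so that for the offending index $f_{i,\bm{\eta}_m}$ separates from $f_{i,\bm{\eta}_*}$ and $\int f_{i,\bm{\eta}_m}^\beta f_{i,\bm{\eta}_*}\to0$. In the second case the cross term with $k_{i,m}$ is bounded by Hölder's inequality:
$$
\int f^\beta_{i,\bm{\eta}_m}k_{i,m}\le \Big(\int f^{1+\beta}_{i,\bm{\eta}_m}\Big)^{\beta/(1+\beta)}\Big(\int k^{1+\beta}_{i,m}\Big)^{1/(1+\beta)}.
$$
This is finite by (A1)(i). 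Plugging it in gives a lower bound of the form
$$
\liminf_m\mathcal{L}^*_m(\bm{\eta}_m)\ge \liminf_m\Big[\sigma_m^{-\beta}C^{(\beta)}_{0,0}-(1+\tfrac1\beta)\epsilon\,\sigma_m^{-\beta}\Big(\int f^{1+\beta}\Big)^{\beta/(1+\beta)}\Big(\int k^{1+\beta}_{i,m}\Big)^{1/(1+\beta)}\Big].
$$
The right-hand side is to be minimized over $\sigma_m\ge\sigma_0$ and over the contamination.

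The cleanest route is the standard optimization over $\sigma$. The best that contamination alone can achieve (by fitting $f_{i,\bm{\eta}}$ to $\epsilon k_{i,m}$) is, via the same Hölder bound, of the form $C^{(\beta)}_{0,0}\sigma^{-\beta}[-\tfrac1\beta+(1-\epsilon)(1+\tfrac1\beta)]$. This is exactly the clean-data value with $\epsilon$ and $1-\epsilon$ swapped. The two values compare as $\epsilon^{1+\beta}$ versus $(1-\epsilon)^{1+\beta}$ after optimizing the scale, and this is where $\beta\le1$ is used for the exponent monotonicity. For $\epsilon<1/2$ this gives $\liminf\mathcal{L}^*_m(\bm{\eta}_m)>\lim\mathcal{L}^*_m(\bm{\eta}_*)$, contradicting minimality of $\bm{\eta}_m$. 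The partial-breakdown case, where only some $i$ break, needs the averaging over $i$ handled carefully, and I expect this matching of constants to be the main obstacle.

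I will also use (N0) and Theorem \ref{THM:identifiability} so that breakdown is stated via $\mu$ and $\sigma$ and is well defined modulo $\mathcal{G}$. Since $\epsilon^*_\beta\le1/2$ by definition, this yields $\epsilon^*_\beta=1/2$.

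\textbf{Case $\beta=0$.} The goal is to exhibit, for any $\epsilon>0$, a sequence satisfying (A1) that breaks the functional. Take $k_{i,m}$ to be a density of fixed shape, for example the same $f$ shifted to location $m$. It satisfies (A1)(i) for $\beta=0$ trivially, and (A1)(ii) holds. The KL objective is
$$
\ln\sigma-(1-\epsilon)E_{F_*}\ln f(\cdot)-\epsilon E_{K_m}\ln f(\cdot).
$$
If $\sigma_m$ and $\mu(\bm{x}_i,\bm{\theta}_m)$ stayed bounded, the last term would tend to $+\infty$, because under (A0) log-concavity gives $-\ln f(s)\ge a|s|-b$. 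Moving the scale to $\sigma\asymp m$ (or shifting $\mu$, when the network can represent shifts) reduces the loss to $O(\ln m)$, which is strictly smaller. Hence the minimizer must diverge in $\sigma$ or $\mu$, so the functional breaks down and $\epsilon^*_0=0$.
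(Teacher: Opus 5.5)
The gap is in the case $0<\beta\le1$ with $\max_i|\mu(\bm{x}_i,\bm{\theta}_m)|\to\infty$, at your ``cleanest route'' step. Your H\"older bound controls the cross term only through $\int k_{i,m}^{1+\beta}d\lambda$. (Its $\sigma_m$-exponent should be $\beta^2/(1+\beta)$, since $\int f^{1+\beta}_{i,\bm{\eta}_m}d\lambda=\sigma_m^{-\beta}C^{(\beta)}_{0,0}$.) Assumption (A1)(i) gives no comparison between $\int k_{i,m}^{1+\beta}d\lambda$ and either $\sigma_m^{-\beta}C^{(\beta)}_{0,0}$ or $\sigma_*^{-\beta}C^{(\beta)}_{0,0}$, and $\sigma_m$ is free in $[\sigma_0,\infty)$.

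Your H\"older bound delivers the ``$\epsilon\leftrightarrow1-\epsilon$ swapped'' value $\sigma_m^{-\beta}C^{(\beta)}_{0,0}[1-(1+\frac{1}{\beta})\epsilon]$ only when $\int k_{i,m}^{1+\beta}d\lambda\le\sigma_m^{-\beta}C^{(\beta)}_{0,0}$. Even then it fails to exceed the value at $\bm{\eta}_*$ for $\epsilon$ near $\frac12$ unless $\sigma_m$ is controlled from below (e.g.\ $\sigma_m\ge\sigma_*$).

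Optimizing your bound over the scale instead gives $-\frac{1}{\beta}\epsilon^{1+\beta}\int k_{i,m}^{1+\beta}d\lambda$ (or its value at $\sigma_0$), which is arbitrarily negative for concentrated contamination. The matching quantity for the clean part, $-\frac{1}{\beta}(1-\epsilon)^{1+\beta}\sigma_*^{-\beta}C^{(\beta)}_{0,0}$, is a H\"older bound that no model density attains. So comparing $\epsilon^{1+\beta}$ with $(1-\epsilon)^{1+\beta}$ compares two lower bounds and yields no contradiction. Also, $x\mapsto x^{1+\beta}$ is increasing for every $\beta>0$, so this is not where $\beta\le1$ can enter. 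The partial-breakdown averaging you flag also remains open.

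The paper does not resolve this step either. It asserts that along any breaking sequence $\frac{1}{n}\sum_{i}\int f^{\beta}_{i,\bm{\eta}_{m,\beta}}k_{i,m}\,d\lambda\to0$, by splitting over a compact $B$ and $B^c$. The breaking-sequence limit then becomes $\frac{(1-\epsilon)^{1+\beta}}{\beta}\frac{C^{(\beta)}_{0,0}}{\sigma_*^{\beta}}+\frac{C^{(\beta)}_{0,0}}{\sigma_{*,\beta}^{\beta}}+\frac{\epsilon^{1+\beta}}{\beta}C_{k,0}$, and the contradiction reduces to $1-(1-\epsilon)(1+\frac{1}{\beta})<0$. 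That is where $\beta\le1$ and $\epsilon<\frac12$ are used.

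Bounding rather than dropping the cross term was the right instinct. On $B^c$ nothing in (A0) controls it once $\mu(\bm{x}_i,\bm{\theta}_m)$ follows the contamination, and (A0)--(A1) alone do not suffice. For example, take Gaussian $f$, $\beta=1$, $\epsilon=0.4$, a network with an output bias (so $\mu$ can be shifted by $m$), and $k_{i,m}$ the $N(\mu(\bm{x}_i,\bm{\theta}_*)+m,\tau^2)$ density with $\tau=\sigma_0/10$. Dropping the constant $1/\beta$:
\begin{itemize}
\item the tracking fit at scale $\sigma_0$ has limiting objective about $-0.089(2\pi)^{-1/2}/\sigma_0$;
\item any sequence with bounded $\sigma$ and $\max_i|\mu(\bm{x}_i,\bm{\theta})|$ has limiting objective at least about $-0.194(2\pi)^{-1/2}/\sigma_*$.
\end{itemize}
So the minimizer breaks down whenever $\sigma_*/\sigma_0>2.2$. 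Closing either argument requires an extra hypothesis, e.g.\ that breaking sequences separate from the contamination, $\int\min\{f_{i,\bm{\eta}_m},k_{i,m}\}\,d\lambda\to0$. Together with (A1)(i) and H\"older, this gives exactly the vanishing cross term the paper uses.

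Your $\beta=0$ argument is fine. It is cleaner than the paper's, whose point-mass contamination does not satisfy the density requirement of (A1).
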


The above theorem proves high global robustness of the proposed rRNet at any useful $\beta\in(0, 1]$; 
they can withstand even up to 50\% contamination in the training data. 
This is one of the strongest global robustness guarantees currently available for regression NNs.
The above theorem also formally proves that the standard LSE or MLE based training of regression NN 
breaks down even under a slightest of contamination proportion  as evident from their zero breakdown point.

We may note that Assumption (A1) is quite flexible and holds for a wide range of contaminating distributions.
In particular, if each $k_{i,m}$ belongs to the assumed model family of error densities satisfying (A0),
then Assumption (A1) holds whenever their location or scale parameter diverges to infinity.
In other words, contamination generated by pushing a log-concave location–scale model to infinity automatically satisfies (A1).
However, this assumption is not restricted to log-concave contamination only and applies equally to a broad class of non-log-concave densities.
This includes Student-$t$ distributions with sufficiently many degrees of freedom, polynomially decaying Pareto-type densities, 
finite mixtures, etc.,  provided their location or scale parameters diverge to ensure that 
the contaminating  mass escapes every compact subset of the sample space.
It also allows compactly supported contaminations whose mass drifts to infinity, such as 
$k_{i,m}(y)=\mathbf{1}_{[m,m+1]}(y)$, which are often useful as models of worst-case adversarial contamination.
Consequently, Theorem \ref{THM:BP} remains valid, proving global robustness of the rRNet, 
against almost all possible practically relevant (diverging) contamination mechanism.

%\section{Function-Space Robustness for Overparameterized Networks}
%\section{Function-Space Robustness: NTK Dynamics and Implicit Bias}

%\newpage
\section{Empirical illustrations}\label{empirical}

\subsection{An implementation of the rRNet for Gaussian noise}\label{implementation}

In all our numerical experiments, we implement the proposed rRNet under the assumption of Gaussian errors,
through a concrete instantiation of the general rRNet framework of Algorithm~\ref{Alg-dpd-nn} as described below.
%The details of this implementation is described below, while the full algorithm is presented in Algorithm \ref{alg:nested_adam}.
All associated python codes are shared (on request) through the GitHub repository \texttt{Robust-NN-learning}\footnote{\url{https://github.com/Suryasis124/Robust-NN-learning.git}}, 
to ensure reproducibility and independent validation of all the empirical results presented here.
We acknowledge that, while alternative optimization strategies or update schedules may be employed, 
the implementation described here is found to be stable and effective in all our experiments. 

Recall that, for Gaussian errors, the general DPD-loss in \eqref{dpd-loss-gen} simplifies to the form given in \eqref{dpd-loss}, 
with the associated $\psi_{1,\beta}$ and $\psi_{2,\beta}$ being specified in Example~\ref{ex-3.1}. 
In this setting, the error score function $u$ is uniquely defined, leading to  a unique gradient with respect to the scale parameter $\sigma$. 
Consequently, in Step~3 of Algorithm~\ref{Alg-dpd-nn}, we update $\sigma$ using the L-BFGS-B algorithm, 
a limited-memory quasi-Newton method designed for bound-constrained optimization  known for its superlinear convergence properties 
\citep{byrd1995limited}. Since the Gaussian density satisfies the Poincaré inequality (Lemma~\ref{LEM:A0}(viii)), 
the resulting DPD-loss exhibits a stable and non-exploding gradient surface with respect to $\sigma$, 
allowing L-BFGS-B to converge efficiently even without imposing an explicit upper bound on the scale parameter.

The optimization with respect to the network parameters $\bm{\theta}$ in Step~2 of Algorithm~\ref{Alg-dpd-nn}, however,  
requires special care when the regression function $\mu$ involves non-smooth activation functions. 
To address this, we employ the \texttt{TensorFlow} library \citep{abadi2015tensorflow} of Python, 
which computes a measurable selection from the subdifferential $\partial_{\bm{\theta}}\mu(\cdot,\bm{\theta})$ 
via automatic differentiation \citep{bolte2021conservative}. 
For smooth $\mu$, this coincides with the exact gradient, while for non-smooth architectures 
it yields a valid subgradient suitable for stochastic optimization. 
Following standard practice in large-scale NN training, 
we use the ADAM optimizer \citep{kingma2014adam} to solve the inner minimization problem with respect to $\bm{\theta}$. 
To reduce computational cost, ADAM is run for a fixed number of epochs $E=100$, 
with $\lceil n/32 \rceil$ stochastic mini-batches of size 32 per epoch.

%\begin{center}
%	\begin{minipage}{0.95\textwidth} 
		\begin{algorithm}[!h]
			\caption{Implementation of \texttt{rRNet} for Gaussian error (AutoDiff ADAM + L-BFGS-B)}
			\label{alg:nested_adam}
			\begin{algorithmic}[1]
				\makeatletter
				\setcounter{ALC@line}{-1}
				\makeatother
				
				\STATE \textbf{Input:} Training data $\mathcal{D}_n$, tuning parameter $\beta\geq 0$ and a tolerance limit $\varepsilon>0$. 
				\STATE \textit{Initialization:} Set $k = 0$. Fix initial values of NN weights $\widehat{\bm{\theta}}^{(0)}$ and scale $\widehat{\sigma}^{(0)}>0$. 
				\vspace{.6em}
				\item[] \hspace*{-1.2em} \textbf{do} \\[0.1em]
				\begin{ALC@g}
					\STATE Update the weights $\bm{\theta}$, with fixed $\sigma = \widehat{\sigma}^{(k)}$, using AutoDiff based ADAM:
				\begin{ALC@g}					
					\item[] Initialize $\widehat{\bm{\theta}} = \widehat{\bm{\theta}}^{(k)}$, $\bm{m}_0 = \bm{0}$, $\bm{v}_0 = \bm{0}$.
				\item[] \textbf{for} epoch $e = 1, \dots, E$ \textbf{do}
				\begin{ALC@g}
						\item[] Partition $\mathcal{D}_n$ into $J = \lceil n/32 \rceil$ mini-batches $\{B_1, \dots, B_J\}$.
						\item[] \textbf{for} each mini-batch $B_j$ ($j=1, \dots, J$) \textbf{do}
						\begin{ALC@g}
							\item[] \hspace{1em} $\bm{d}_i \leftarrow \text{AutoDiff}_{\bm{\theta}} [\mu(\bm{x}_i, \widehat{\bm{\theta}})]$, a selection from $\partial_{\bm{\theta}}  \mu(\bm{x}_i, \widehat{\bm{\theta}})$, 
							for $i \in B_j$.
							\item[] \hspace{1em}  
							$\bm{g}_j \leftarrow \frac{1+\beta}{32\sigma^{\beta+1}} \sum_{i \in B_j}\psi_{1,\beta}\left(\frac{y_i - \mu(\bm{x}_i,\widehat{\bm{\theta}})}{\sigma}\right) \bm{d}_i$.
							\item[] \hspace{1em} $\bm{m}_j \leftarrow \beta_1 \bm{m}_{j-1} + (1-\beta_1) \bm{g}_j$
							\item[] \hspace{1em} $\bm{v}_j \leftarrow \beta_2 \bm{v}_{j-1} + (1-\beta_2) \bm{g}_j^2$
							\item[] \hspace{1em} $\hat{\bm{m}}_j \leftarrow \bm{m}_j / (1-\beta_1^j), \quad \hat{\bm{v}}_j \leftarrow \bm{v}_j / (1-\beta_2^j)$
							\item[] \hspace{1em} $\widehat{\bm{\theta}} \leftarrow \widehat{\bm{\theta}} - \alpha \cdot \hat{\bm{m}}_j / (\sqrt{\hat{\bm{v}}_j} + \epsilon)$
						\end{ALC@g}
						\item[] \textbf{end for}
					\end{ALC@g}
					\item[] \textbf{end for}
				\item[]  Updated weights: $\widehat{\bm{\theta}}^{(k+1)} \leftarrow \widehat{\bm{\theta}}$ 
				\COMMENT{final value after $E$ epochs}.
					\end{ALC@g}
				\vspace{.6em}

		\STATE Update the scale $\sigma$, with fixed $\bm{\theta} = \widehat{\bm{\theta}}^{(k+1)}$, using L-BFGS-B:
				\begin{ALC@g}
					\item[] Initialize $\sigma_0 = \widehat{\sigma}^{(k)}$ and initial Hessian inverse $B_0 = 1$.
					\item[] \textbf{for} $j = 0$ to $J_{max} (=15000)$ \textbf{do}
				\begin{ALC@g}
					\item[] $g_j \leftarrow 	\frac{1+\beta}{n\sigma_j^{1+\beta}} \sum\limits_{i=1}^n 
					\psi_{2,\beta}\left(\frac{y_i - \mu(\bm{x}_i,\bm{\theta})}{\sigma_j}\right)
					- \frac{\beta C^{(\beta)}_{0,0}}{\sigma_j^{1+\beta}}$. 					
					\item[] \textbf{if} {$|g_j| < gtol=10^{-5}$} \textbf{then} **break** \textbf{end if}
					\item[] $s_j \leftarrow \sigma_{j} - \sigma_{j-1}, \quad y_j \leftarrow g_j - g_{j-1}$
					\item[] $B_{j} \leftarrow (1 - \frac{s_j y_j}{y_j^2}) B_{j-1} (1 - \frac{s_j y_j}{y_j^2}) + \frac{s_j^2}{y_j^2}$ 
					\item[] $\sigma_{j+1} \leftarrow \sigma_j - B_j g_j$ 
					\item[] \textbf{end for}
				\end{ALC@g}
					\item[] Updated scale: $\widehat{\sigma}^{(k+1)} \leftarrow \sigma_{j+1}$
				\end{ALC@g}
			\STATE Set $k \gets k+1$. \\[.5em]
			\end{ALC@g}
	\item[] \hspace*{-1.2em} \textbf{while} the reduction in loss is significant, i.e., 
				$ \mathcal{L}_{n,\beta} ( \widehat{\bm{\theta}}^{(k)}, \widehat{\sigma}^{(k)}) < 
				\mathcal{L}_{n,\beta}( \widehat{\bm{\theta}}^{(k-1)}, \widehat{\sigma}^{(k-1)} ) - \epsilon$.  \\[0.6em]
				\STATE \textbf{Return:} The rRNet estimates 
				$\widehat{\bm{\theta}}_{n,\beta} = \widehat{\bm{\theta}}^{(k+1)}$ and 	$\widehat{\sigma}_{n,\beta}= \widehat{\sigma}^{(k+1)}$.
			\end{algorithmic}
		\end{algorithm}
%	\end{minipage}
%\end{center}

The resulting nested optimization scheme, combining ADAM-based $\bm{\theta}$-update 
and L-BFGS-B updates for $\sigma$, yields a fully specified rRNet implementation presented in Algorithm~\ref{alg:nested_adam}. 
Convergence of the outer iteration is monitored through successive reduction of the DPD loss. 
This implementation still depends on several algorithmic hyperparameters associated with the inner optimizers. 
In all our experiments, we retain the default ADAM hyperparameters recommended in \cite{kingma2014adam}, 
namely step-size $\alpha = 0.001$, exponential decay rates $\beta_1 = 0.9$ and $\beta_2 = 0.999$, 
and numerical tolerance $\epsilon = 10^{-8}$. 
For the $\sigma$-update, we impose a lower bound $\sigma_0 = 0.001$ 
and use a function tolerance of $ftol = 2.22 \times 10^{-9}$, 
corresponding to the default setting of the \texttt{scipy.optimize.minimize} interface in Python.

One final aspect of the rRNet implementation is to use suitable initialization of $\bm{\theta}$ and $\sigma$ 
in Step 1 of Algorithm \ref{Alg-dpd-nn} or \ref{alg:nested_adam}.
To ensure stable gradient propagation, preventing vanishing or exploding gradients during the initial stages of the rRNet training, 
the network weights $\bm{\theta}$ are initialized using the Glorot uniform initializer \citep{glorot2010understanding}. 
%which scales the initial weight variance based on the layer dimensions. 
For initializing the scale parameter, we suggest a robust estimate based on the median absolute deviation of the initial residuals, 
$r_i = y_i - \mu(\bm{x}_i,\bm{\theta}^{(0)})$ for $i=1,\ldots,n$, given by
$$
\widehat{\sigma}^{(0)} = 1.4826~ \underset{{1\leq i\leq n}}{\text{Median}} 
\left| r_i - \underset{{1\leq j\leq n}}{\text{Median}}~ r_j\right|.
$$

Although Algorithm~\ref{alg:nested_adam} is designed to iterate until the reduction in the DPD-loss falls below a prescribed tolerance, 
in practice, running the full outer loop of rRNet to convergence may be computationally demanding in large-scale problems. 
In such cases, it is often sufficient to terminate the algorithm after a predetermined number of iterations, chosen using standard validation criteria. 
For instance, one may monitor the trimmed mean squared error (TMSE) of prediction on a validation set 
and stop the iterations once the TMSE exhibits negligible relative change across successive outer updates.

\subsection{Performance in function approximations: Simulation studies}\label{simulation}

%\subsubsection{Experimental setups} \label{sim-setups}

A major application of regression NNs is the approximation of complex functions from sampled observations.
To illustrate the performance of our rRNet in this respect, we model the following real-valued functions 
using suitable regression NNs trained on artificially simulated, possibly contaminated, datasets.   

\begin{enumerate}[label=$\bullet$~F-\arabic*:, ref=Function~\arabic*]
    \item\label{itm:f1} $\varphi_1(x) = |x|^{2/3},\ x\in\mathbb{R}$ (triangular-shaped function).

    \item\label{itm:f2} $\varphi_2(x) = \sin(x)/x,\ x\in\mathbb{R}$ (a sinusoidal-type curve).

	\item\label{itm:f6} $\varphi_3(x) = \sqrt{x(1-x)}~\sin\!\left(2.2\pi/(x + 0.15)\right), \ x \in [0,1]$ 
	(sinusoidal-type Doppler).

    \item\label{itm:f3} $\varphi_4(x_1, x_2) = x_1e^{-(x_1^2+x_2^2)},\ x_1,x_2\in\mathbb{R}$ (a simple 2-dimensional function).
    
    \item\label{itm:f4} A function on two variables, $\varphi_5(x_1,x_2)$, which is implicitly defined by the relationships 
    $x_1 = \sin(\varphi_5(x_1,x_2))$ and $x_2 = \cos(\varphi_5(x_1,x_2))$ (a 2D spiral).

    \item\label{itm:f5} $\varphi_6(\bm{x}) = \sum_{l=1}^3 e^{-\alpha_l ||\bm{x}-m_l||^2}, \ \bm{x}\in\mathbb{R}^2$,
    with $\alpha_1=1$, $\alpha_2=\alpha_3=2$, $\bm{m}_1=(0,0.75)^\top$, $\bm{m}_2=(0.5,-0.5)^\top$, $\bm{m}_3 = (-0.75,0)^\top$
    (sum of bivariate Gaussian kernels).
 
    \item\label{itm:f7} $\varphi_7(\bm{x}) = x_1 + \tan x_2 + x_3^3 + \ln (x_4+0.1) + 3x_5 + x_6 + \sqrt{x_7+0.1},\ \bm{x}\in [0,1]^7$~ 
    (a complex 7-variate function).
\end{enumerate}

These functions have previously been investigated while studying performances of existing NN training algorithms. 
In particular, $\varphi_4$ and $\varphi_5$ were considered by \cite{rusiecki2013robust}, while $\varphi_3$, $\varphi_6$ and $\varphi_7$ 
were studied by \cite{shen2021robust}, \cite{sorek2024robust} and \cite{bauer2019deep}, respectively.
See Figure \ref{fig:sim-data-instance} for visual representations of the structure of the univariate and bivariate functions 
($\varphi_1$ to $\varphi_6$), along with the sample data and outliers.\\

\begin{figure}[!h]
	\centering
	\begin{subfigure}[b]{0.3\textwidth}
		\centering
		\includegraphics[width=\textwidth]{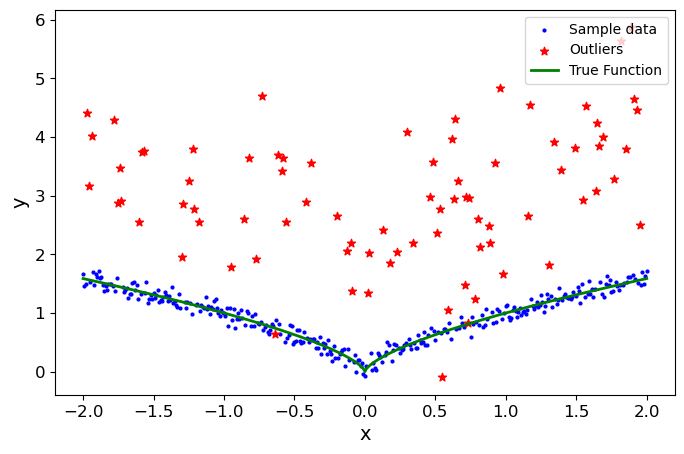}
		\caption{\ref{itm:f1} ($\varphi_1$)}
		\label{fig:fa-1}
	\end{subfigure}
	\hfill
	\begin{subfigure}[b]{0.3\textwidth}
		\centering
		\includegraphics[width=\textwidth]{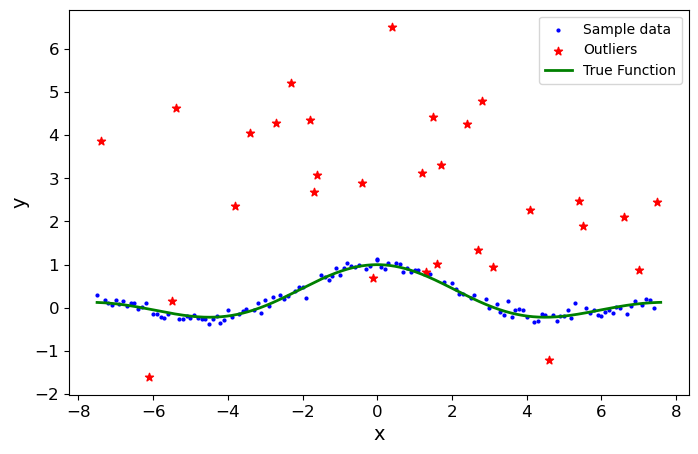}
		\caption{\ref{itm:f2} ($\varphi_2$)}
		\label{fig:fa-2}
	\end{subfigure}
	\hfill
	\begin{subfigure}[b]{0.3\textwidth}
		\centering
		\includegraphics[width=\textwidth]{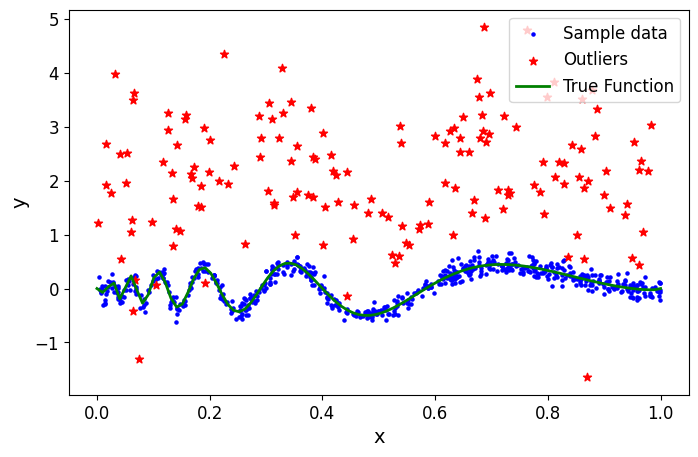}
		\caption{\ref{itm:f6} ($\varphi_3$)}
		\label{fig:fa-6}
	\end{subfigure}
	%	\hfill
	%	\begin{subfigure}[b]{0.3\textwidth}
		%		\centering
		%		\includegraphics[width=\textwidth]{NN-dopller-image.png}
		%		\caption{\ref{itm:f7} ($\varphi_2$)}
		%		\label{fig:fa-8}
		%	\end{subfigure}
	\hfill
	\begin{subfigure}[b]{0.4\textwidth}
		\centering
		\includegraphics[width=\textwidth]{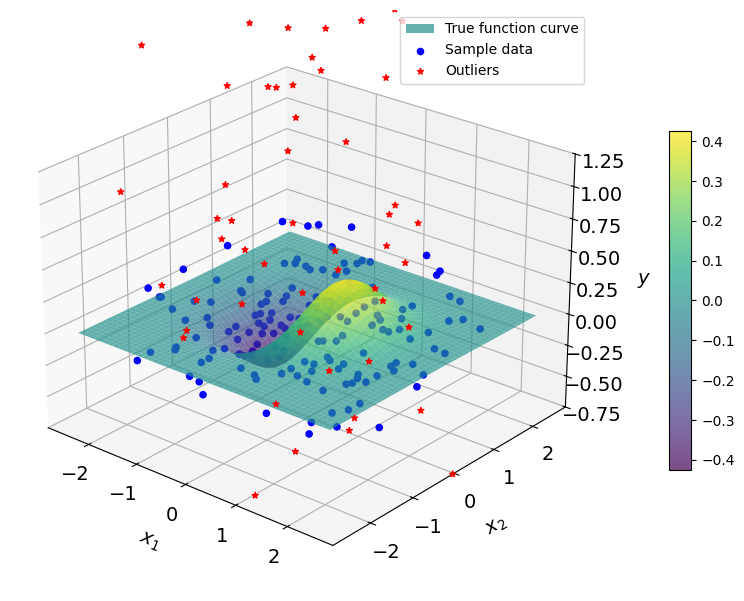}
		\caption{\ref{itm:f3} ($\varphi_4$)}
		\label{fig:fa-3}
	\end{subfigure}
	%	\hfill
	\begin{subfigure}[b]{0.4\textwidth}
		\centering
		\includegraphics[width=.95\textwidth]{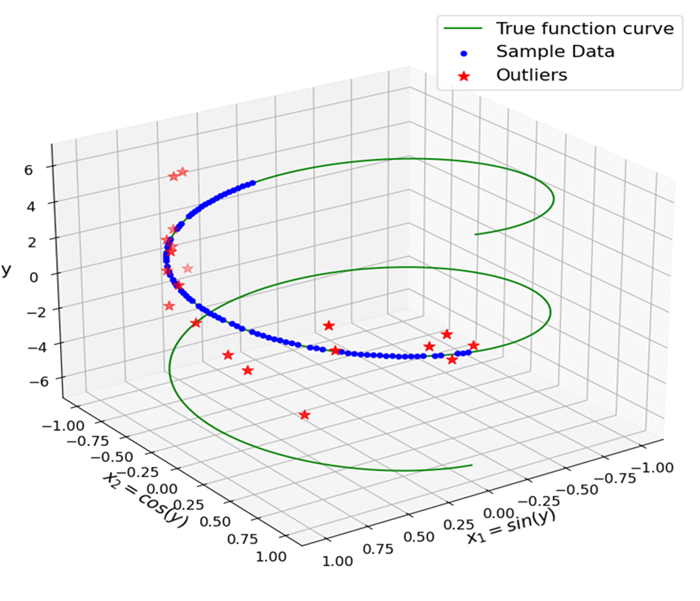}
		\caption{\ref{itm:f4} ($\varphi_5$)}
		\label{fig:fa-4}
	\end{subfigure}
	%	\hfill
	\begin{subfigure}[b]{0.4\textwidth}
		\centering
		\includegraphics[width=\textwidth]{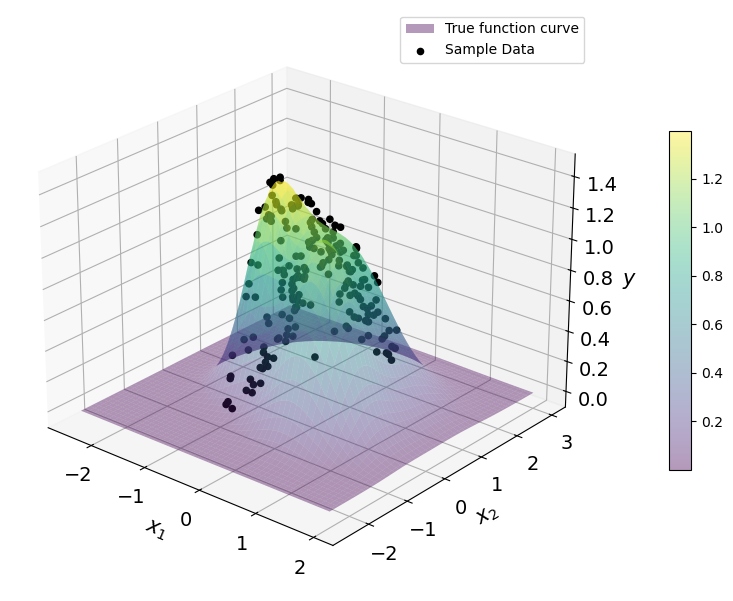}
		\caption{\ref{itm:f5} ($\varphi_6$) with sample data}
		\label{fig:fa-5a}
	\end{subfigure}
	%	\hfill
	\begin{subfigure}[b]{0.4\textwidth}
		\centering
		\includegraphics[width=\textwidth]{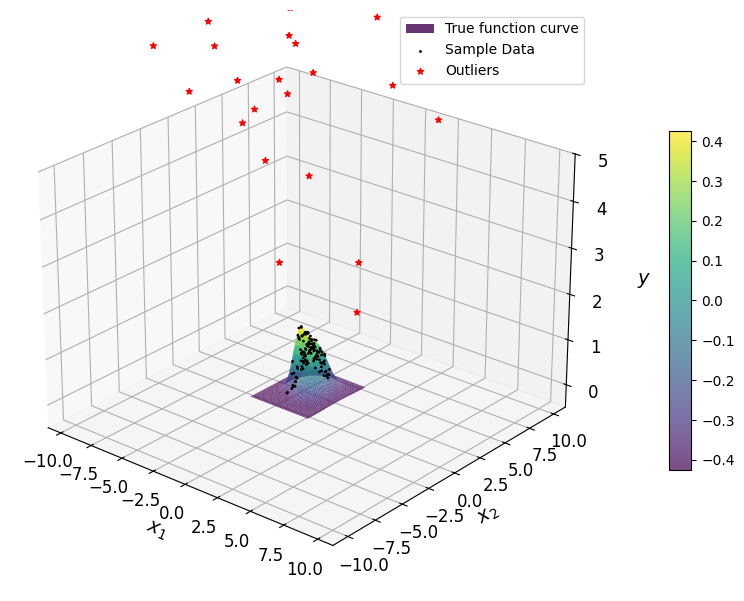}
		\caption{\ref{itm:f5} ($\varphi_6$) with data and outliers}
		\label{fig:fa-5b}
	\end{subfigure}
	\caption{Plots of the Functions 1-6 ($\varphi_1,\ldots,\varphi_6$), 
		along with an instance of simulated dataset with illustrative contamination}
	\label{fig:sim-data-instance}
\end{figure}

%\newpage
\noindent\textbf{Data generation mechanisms}:\\
For an identified function $\varphi\in \{\varphi_1, \ldots,\varphi_7\}$,
we have simulated 1000 artificial datasets, each consisting of $n$ observations $(\bm{x}_i, y_i),\ i=1,\hdots,n$, generated from the model
\begin{equation}\label{fa-model}
    y_i = \varphi(\bm{x}_i) + \epsilon_i, ~~i=1,\hdots,n,
\end{equation}
where $\epsilon_i$s are IID random errors with distribution $\mathcal{N}(0, \sigma^2)$. 
For each function, except \ref{itm:f4}, we first randomly sample the values of $\bm{x}_i$s and $\epsilon_i$s, 
and then generate $y_i$s using \eqref{fa-model}. For \ref{itm:f4}, however, first the values of $\varphi_5(\cdot)$ are generated, 
and then $\bm{x}_i$ are generated using the implicit functional relationships, 
while $y_i$s are constructed by adding independently simulated $\epsilon_i$s to the $\varphi_5(\cdot)$ values.
Subsequently, each dataset is contaminated artificially by randomly corrupting a particular proportion 
(say $\delta$, where $\delta=0,0.1,0.2,0.3$) of these observations by replacing the associated error components ($\epsilon_i$s) 
by new values generated independently from appropriate contaminating distributions. 
The assumed distribution and parameter values for these data generation processes for each function are summarized in Table \ref{tab:dgps}; 
note that the contamination mechanism for $\varphi_6$ is extremely severe,  relative to the nominal scale of the actual sample data 
(contrast Figures  \ref{fig:fa-5a}-\ref{fig:fa-5b}).\\

\begin{table}[!h]
\centering
\caption{Summary of data-generating mechanisms used in the simulation study. $n$ is the sample size and $\sigma$ is the error standard deviation considered in each case.}
\label{tab:dgps}
\begin{tabular}{c|c|p{5cm}|c|p{5.5cm}}
\hline
Function & $n$ & Covariate ($\bm{x}$) generation & $\sigma$ & Contamination scheme \\
\hline

$\varphi_1$ & 401 &
Equidistant values in $[-2,2]$ with step size $0.01$. &
0.1 &
Randomly draw $\epsilon_i$s from $\mathcal{N}(2,1)$. \\ \hline

$\varphi_2$ & 151 &
Equidistant values in $[-7.5,7.5]$ with step size $0.1$. &
0.1 &
Randomly draw $\epsilon_i$s from $\mathcal{N}(2,4)$. \\\hline

$\varphi_3$ & 800 &
Randomly generated from Uniform$(0,1)$ &
0.1 &
Randomly draw $\epsilon_i$s from $\mathcal{N}(2,1)$. \\ \hline

$\varphi_4$ & 256 &
Ordered pairs on a $16 \times 16$ grid using 16 equidistant values in $[-2,2]$ in each coordinate. &
0.1 &
Randomly draw $\epsilon_i$s from $\mathcal{N}(2,4)$. \\\hline

$\varphi_5$ & 100 &
$x_{1i} = \sin(z_i)$ and $x_{2i} = \cos(z_i)$, where $z_i$s are $100$ equidistant values in $[0,\pi]$. &
0.01 &
Randomly draw $\epsilon_i$s from $\mathcal{N}(0,4)$. \\\hline

$\varphi_6$ & 200 &
Random sample of size 200 from a bivariate uniform distribution on $[-1,1]\times[-1,1]$. &
0.05 &
Replace covariates with random draws from a bivariate uniform distribution on $[-10,10]\times[-10,10]$, and responses with random draws from $\mathcal{N}(10,10)$. \\
\hline
$\varphi_7$ & 200 &
Randomly generated from Uniform distribution over $[0,1]^7$ &
1 &
Randomly draw $\epsilon_i$s from $\mathcal{N}(5,25)$. \\
\hline
\end{tabular}
\end{table}

\noindent\textbf{Specification of the NN architecture:} \\
A fully-connected MLP with $L$ hidden layers with the same activation function in each hidden layer  
and a linear output was trained on the artificially simulated datasets. For each function, except $\varphi_6$, 
the number of hidden layers and the number of nodes per hidden layer are selected based on the average (training data) 
TMSE over 100 replications.  For $\varphi_6$,  we  follow the MLP architecture used by \cite{sorek2024robust}. 
The final choices of activation functions and numbers of hidden nodes are as follows: 
%(covers both underparametrized and overparametrized NNs): 

\begin{table}[H]
    \centering
    \begin{tabular}{l|c|c|c|c|c|c|c}
        Function & $\varphi_1$ & $\varphi_2$ & $\varphi_3$ & $\varphi_4$ & $\varphi_5$ & $\varphi_6$ & $\varphi_7$\\ \hline
        {Activation function ($\phi_l$)} & ReLU & Sigmoid & ReLU & Sigmoid & ReLU & GELU & ReLU\\
        Number of hidden layers ($L$) & 1 & 1 & 5 & 1 & 1 & 1 & 3 \\
        {Nodes per hidden layer ($K_l$)} & 5 & 10 & 50 & 15 & 10 & 30 & 30\\ \hline
        Dimension of parameter $\bm{\theta}$ & 16 & 31 & 10351 & 61 & 41 & 121 & 2011 \\
    \end{tabular}
%    \caption{NN architecture considered for approximating different functions in our simulation study}
    \label{tab:nn-arc}
\end{table}

\noindent \textbf{Competing methods:}\\ 
We have compared the results obtained by our proposed rRNet 
with several existing robust NN learning algorithms discussed in Section \ref{intro}, 
namely the MAE, LTA, LTS, LMLS, and M-estimators with the Huber and Tukey's loss functions, along with the usual LSE-based training. 
For M-estimators, standard choices for tuning parameters are used 
-- $c = 1.345$ and $c = 4.685$ for the Huber and Tukey's loss, respectively. 
For LTS and LTA estimators, the trimming constant $h$ is selected adaptively following the recommendation of \cite{rusiecki2013robust}. 
For \ref{itm:f1} only, Tukey's method is employed with the number of hidden nodes being $7$ and $11$, respectively, 
for clean and contaminated datasets, due to its poorer performance at any lower number of hidden nodes. 
In all other cases, we have used the same NN architecture as mentioned above.\\

\noindent \textbf{Performance measures:} \\ 
For each method, we have used the estimated value $\widehat{\bm{\theta}}$ of the NN weights $\bm{\theta}$ 
to compute the fitted (predicted) response values at each feature values, i.e.,
  $\widehat{y}_i =  \mu(\bm{x}_i, \widehat{\bm{\theta}})$ for $i=1, \ldots, n$, 
and compared them with the actual response values $y_i$s. 
As a summary accuracy measure of the fit (function approximation), 
we have used the average (training data) TMSE  across 1000 replications,
with the trimming proportion being the same as the proportion of inserted contamination in each dataset. 

Additionally, to study the out-of-sample prediction accuracy of the fit, 
we have generated a fresh test set of $n$ observations $(\bm{x}_i^{test},y_i^{test}),\ i=1,\hdots,n$, from the same model \eqref{fa-model}, 
(without any contamination) and computed the test data MSE, given by 
$\frac{1}{n}\sum_{i=1}^n (y_i^{test} - \mu(\bm{x}_i^{test}, \widehat{\bm{\theta}}))^2$. 
Note that the NN weights $\widehat{\bm{\theta}}$ are still obtained from the NN model trained on possibly contaminated data, 
but we have now evaluated their performances on clean test data. \\

\noindent \textbf{Simulation results:}\\
The resulting average train TMSE and test MSE over 1000 replications are presented in Tables \ref{tab:f6}--\ref{tab:f5}
%\ref{tab:f1}-\ref{tab:f7} for each of the chosen functions, respectively. 
for $\varphi_3$, $\varphi_5$ and $\varphi_6$, respectively; 
the same for the remaining functions are provided in Appendix \ref{add-results}.
These results together show that, under clean data, all robust methods exhibit very similar, 
good performance in approximating the target functions.  
However, under contaminated data, our proposed rRNet with an appropriate value of $\beta$, outperforms most other competing methods.
Particularly, performances of the rRNet with larger $\beta>0$ remains remarkably stable  
under increasing proportion of data contamination.  Additionally, a higher value of tuning parameter $\beta$ is required 
to get the best results at a higher contamination proportion. 
Performances of the LSE and LTS methods clearly deteriorate significantly under stronger contamination,
while the Tukey's M-estimator seems to be the closest existing competitor to our proposed rRNet.

\begin{table}[!h]
	\centering
	\caption{Average train TMSE and test MSE obtained while approximating the function $\varphi_3$ 
		based on sampled data with contamination proportion $\delta$ (the minimum is highlighted in bold font)}
	\begin{tabular}{l|rrrr|rrrr}
		    \hline
		NN training & \multicolumn{4}{c}{Avg TMSE (on training data)} & \multicolumn{4}{|c}{Avg MSE (on test data)} \\
		\multicolumn{1}{c|}{methods } & $\delta \rightarrow~~$0\% & 10\% & 20\% & 30\% & $\delta \rightarrow~~$0\% & 10\% & 20\% & 30\% \\ \hline
		LSE & \textbf{0.0112} & 0.0605 & 0.1640 & 0.3042 & \textbf{0.0126} & 0.0702 & 0.2013 & 0.4102 \\ 
		\multicolumn{9}{l}{\underline{DPD loss with tuning parameter $\beta$}}\\
		$\beta = 0.1$ & \textbf{0.0112} & 0.0111 & 0.0930 & 0.2320 & 0.0127 & {0.0132} & 0.1156 & 0.3148 \\
		$\beta = 0.3$ & 0.0116 & 0.0112 & \textbf{0.0109} & \textbf{0.0107} & 0.0131 & 0.0133 & \textbf{0.0135} & \textbf{0.0140} \\
		$\beta = 0.5$ & 0.0123 & 0.0118 & 0.0113 & 0.0109 & 0.0138 & 0.0140 & 0.0141 & 0.0145 \\
		$\beta = 0.7$ & 0.0133 & 0.0124 & 0.0118 & 0.0112 & 0.0148 & 0.0149 & 0.0149 & 0.0150 \\
		$\beta = 1  $ & 0.0149 & 0.0135 & 0.0124 & 0.0117 & 0.0164 & 0.0162 & 0.0159 & 0.0159 \\
		\multicolumn{9}{l}{\underline{Existing robust losses}}\\
		MAE & 0.0123 & 0.0126 & 0.0138 & 0.0163 & 0.0137 & 0.0148 & 0.0173 & 0.0256 \\
		LTA & 0.0123 & 0.0126 & 0.0138 & 0.0163 & 0.0137 & 0.0148 & 0.0173 & 0.0256 \\
		LTS & \textbf{0.0112} & 0.0605 & 0.1640 & 0.3042 & 0.0126 & 0.0702 & 0.2013 & 0.4102 \\
		LMLS & \textbf{0.0112} & 0.0181 & 0.0368 & 0.0768 & 0.0127 & 0.0212 & 0.0463 & 0.1060 \\ 
		Huber's M & 0.0116 & 0.0127 & 0.0194 & 0.0478 & 0.0131 & 0.0149 & 0.0243 & 0.0667 \\
		Tukey's M & 0.0118 & \textbf{0.0110} & 0.0111 & 0.0138 & 0.0132 & \textbf{0.0130} & 0.0137 & 0.0181 \\
		\hline
	\end{tabular}
	\label{tab:f6}
\end{table}

\begin{table}[!h]
    \centering
    \caption{Average train TMSE and test MSE obtained while approximating the function $\varphi_5$ based on sampled data with contamination proportion $\delta$ (the minimum is highlighted in bold font)}
    \begin{tabular}{l|rrrr|rrrr}
    \hline
        NN training & \multicolumn{4}{c}{Avg TMSE (on training data)} & \multicolumn{4}{|c}{Avg MSE (on test data)} \\
        \multicolumn{1}{c|}{methods } & $\delta \rightarrow~~$0\% & 10\% & 20\% & 30\% & $\delta \rightarrow~~$0\% & 10\% & 20\% & 30\% \\ \hline
        LSE & 0.0110 & 0.0281 & 0.0383 & 0.0443 & 0.0121 & 0.0325 & 0.0487 & 0.0637 \\[.3em]
        \multicolumn{9}{l}{\underline{Proposed rRNet: DPD-loss with tuning parameter $\beta$}}\\
        $\beta$ = 0.1 & \textbf{0.0100} & 0.0097 & 0.0173 & 0.0295 & \textbf{0.0115} & 0.0120 & 0.0227 & 0.0426 \\
        $\beta$ = 0.3 & 0.0102 & \textbf{0.0095} & \textbf{0.0090} & 0.0087 & 0.0116 & \textbf{0.0118} & \textbf{0.0122} & \textbf{0.0130} \\
        $\beta$ = 0.5 & 0.0107 & 0.0099 & 0.0092 & \textbf{0.0087} & 0.0121 & 0.0123 & 0.0125 & 0.0136 \\
        $\beta$ = 0.7 & 0.0120 & 0.0108 & 0.0099 & 0.0090 & 0.0131 & 0.0133 & 0.0137 & 0.0146 \\
        $\beta$ = 1 & 0.0158 & 0.0133 & 0.0116 & 0.0100 & 0.0166 & 0.0164 & 0.0173 & 0.0175 \\
        \multicolumn{9}{l}{\underline{Existing robust losses}}\\
        MAE & 0.0132 & 0.0131 & 0.0133 & 0.0136 & 0.0143 & 0.0160 & 0.0192 & 0.0247 \\
        LTA & 0.0114 & 0.0109 & 0.0106 & 0.0104 & 0.0126 & 0.0132 & 0.0140 & 0.0156 \\
        LTS & 0.0110 & 0.0284 & 0.0384 & 0.0440 & 0.0121 & 0.0329 & 0.0490 & 0.0632 \\
        LMLS & 0.0107 & 0.0133 & 0.0155 & 0.0177 & 0.0120 & 0.0158 & 0.0200 & 0.0254 \\ 
        Huber's M & 0.0106 & 0.0108 & 0.0121 & 0.0147 & 0.0120 & 0.0131 & 0.0157 & 0.0210 \\
        Tukey's M & 0.0151 & 0.0275 & 0.0153 & 0.0152 & 0.0155 & 0.0639 & 0.0418 & 0.0487 \\
        \hline
    \end{tabular}
    \label{tab:f4}
\end{table}

\begin{table}[!h]
    \centering
    \caption{Average train TMSE and test MSE obtained while approximating the function $\varphi_6$ based on sampled data with contamination proportion $\delta$ (the minimum is highlighted in bold font)}
    \begin{tabular}{l|rrrr|rrrr}
    \hline
        NN training & \multicolumn{4}{c}{Avg TMSE (on training data)} & \multicolumn{4}{|c}{Avg MSE (on test data)} \\
        \multicolumn{1}{c|}{methods } & $\delta \rightarrow~~$0\% & 10\% & 20\% & 30\% & $\delta \rightarrow~~$0\% & 10\% & 20\% & 30\% \\ \hline
        LSE & 0.0039 & 0.0214 & 0.0448 & 0.0706 & 0.0048 & 0.0562 & 0.1335 & 0.2300 \\[.3em]
        \multicolumn{9}{l}{\underline{Proposed rRNet: DPD-loss with tuning parameter $\beta$}}\\
        $\beta$ = 0.1 & \textbf{0.0033} & \textbf{0.0035} & \textbf{0.0038} & 0.0471 & \textbf{0.0042} & \textbf{0.0045} & \textbf{0.0051} & 0.0913 \\
        $\beta$ = 0.3 & 0.0036 & 0.0040 & 0.0045 & \textbf{0.0050} & 0.0045 & 0.0053 & 0.0065 & \textbf{0.0080} \\
        $\beta$ = 0.5 & 0.0047 & 0.0055 & 0.0056 & 0.0059 & 0.0058 & 0.0072 & 0.0085 & 0.0098 \\
        $\beta$ = 0.7 & 0.0069 & 0.0072 & 0.0072 & 0.0073 & 0.0082 & 0.0092 & 0.0107 & 0.0130 \\
        $\beta$ = 1 & 0.0101 & 0.0097 & 0.0091 & 0.0092 & 0.0118 & 0.0120 & 0.0136 & 0.0188 \\
        \multicolumn{9}{l}{\underline{Existing robust losses}}\\
        MAE & 0.0042 & 0.0371 & 0.0660 & 0.0900 & 0.0052 & 0.0646 & 0.1371 & 0.2201 \\
        LTA & 0.0042 & 0.0371 & 0.0660 & 0.0900 & 0.0052 & 0.0646 & 0.1371 & 0.2201 \\
        LTS & 0.0039 & 0.1330 & 0.2628 & 0.3229 & 0.0048 & 0.2068 & 0.4548 & 0.6594 \\
        LMLS & 0.0038 & 0.0306 & 0.0571 & 0.0808 & 0.0047 & 0.0483 & 0.1020 & 0.1639 \\ 
        Huber's M & 0.0036 & 0.0264 & 0.0587 & 0.1002 & 0.0044 & 0.0482 & 0.1210 & 0.2244 \\
        Tukey's M & 0.0056 & 0.0036 & 0.0042 & 0.0075 & 0.0066 & 0.0046 & 0.0058 & 0.0126 \\
        \hline
    \end{tabular}
    \label{tab:f5}
\end{table}

In particular, for \ref{itm:f1} and \ref{itm:f2}, the LSE performs the best under clean training data ($\delta =0$). 
Under contaminated data, however, the rRNet with a higher value of $\beta>0$ is observed to provide the best performance 
with increasing contamination proportion. For \ref{itm:f6}, it can be observed that, 
under heavy contamination (20\% and 30\%), the rRNet with $\beta=0.3$ outperforms all other approaches. 
In contrast, under light (10\%) or no (0\%) contamination, the rRNet with $\beta=0.1$ yields the best performance among all $\beta\in[0,1]$;
they are also better than existing competitors, but Tukey's M-estimators gives slightly better (mostly comparable) 
results than the best performing rRNet.  
Also, for \ref{itm:f4}, the proposed rRNet performs best at $\beta=0.3$ in the presence of contamination, 
whereas, the rRNet with $\beta=0.1$ gives better results for uncontaminated data;
they are always significantly better than all competitors.  
Similarly, the rRNet exhibits the best performance for \ref{itm:f5} at $\beta=0.1$ when the contamination proportion is $\leq 20\%$ 
and at $\beta=0.3$ for 30\% contamination. 
It is impressive that the proposed rRNet can successfully handle such strong data contamination for \ref{itm:f5}.
For \ref{itm:f7}, we can observed that the rRNet with $\beta=0$ (LSE) and $\beta=0.1$ performs best, 
in terms of their performances on training data, under lighter data contamination (0\% and 10\%). 
However, on test data, the rRNet with $\beta=0.3$ uniformly performs the best for any positive contamination proportion ($\geq$ 10\%.)
In general, rRNet provides significant improvements over all competitors particularly at heavier contamination level
and on the test data MSE, suggesting it's better generalizability beyond training sample even under strong contamination.

% For the simulation results presented in this section, Steps 2 and 3 of Algorithm \ref{Alg-dpd-nn} are iterated for a predetermined fixed number of iterations. However, when analyzing the real datasets in the next section, the maximum number of iterations is determined heuristically by monitoring the trimmed mean squared error (TMSE) of the training data and selecting a point after its stabilization, i.e., the stabilization of the neural network fit.

\subsection{Performance in real-life prediction problems} \label{real-data}

We now illustrate the practical applicability of the proposed rRNet for predicting continuous responses 
based on a set of related covariates in real-life datasets. For this purpose, we analyze following three datasets: 
\begin{itemize}
    \item \textbf{Airfoil Self-Noise (ASN) data:} 
    This dataset, originally developed by the National Aeronautics and Space Administration (NASA), 
    contains aerodynamic and acoustic measurements related to airfoil self-noise generated from experiments on NACA 0012 airfoils 
    \citep{airfoil_self-noise_291}. Here we aim to predict the scaled sound pressure level (in decibels)
    based on five available input features, namely the frequency, angle of attack, chord length, free-stream velocity, 
    and suction-side displacement thickness.
    
    \item \textbf{Concrete Compressive Strength (CCS) data:} This dataset, obtained from \cite{yeh1998modeling},
    is used for modeling concrete compressive strength (response variable) based on eight physical factors (features).
    These features are the amounts (in kg/m$^3$) of cement, blast furnace slag, fly ash, water, superplasticizer, 
    coarse aggregate and fine aggregate, as well as the age of testing (in days).
    
    \item \textbf{Boston Housing Price (BHP) data:} This dataset, originally introduced by \cite{harrison1978hedonic}, 
    contains housing-related information of Boston, collected by the U.S. Census Service. 
    Here we target to predict the median value of owner-occupied homes (in \$1000's)
    based on 14 available quantitative features -- per capita crime rate in the town, 
    proportion of residential land zoned for lots over 25,000 sq.ft., proportion of non-retail business acres per town, 
    indicator of tract bounding Charles river, nitric oxides concentration (parts per 10 million), 
    average number of rooms per dwelling, proportion of owner-occupied units built before 1940, 
    weighted distances to five Boston employment centers, index of accessibility to radial highways, 
    full-value property-tax rate per \$10000, pupil-teacher ratio by town, percentage of lower status among the population
    and a custom measure of racial discrimination in town. 
\end{itemize}

For each dataset, we train a 3-layer MLP (input layer $\rightarrow$ a hidden layer $\rightarrow$ output layer) 
with ReLU activation in the hidden layer and a linear output for predicting the responses based on the available covariates/features. 
The numbers of hidden nodes are taken to be 14, 21, and 8 for the ASN, CCS, and BHP data, respectively.
The covariates are scaled to $[0,1]$, in order to ensure numerical stability and facilitate efficient training of the regression NNs 
with various loss functions (as in the simulation studies). The responses are also scaled to $(0, 1)$ for  the ASN and the BHP data.

We compare the performance of different loss functions using the (average) out-of-sample TMSE (with 20\% trimming) 
computed through $k$-fold cross-validation (CV). The results obtained with $k=10$ are reported in Table \ref{tab:real-data}; 
other choices of $k$ are seen to yield the same ranking of loss functions, and hence they are not repeated here for brevity. 
Significant and consistent reduction in the TMSE by robust methods, including our rRNet, relative to the traditional LSE 
suggests the presence of non-negligible outliers in these datasets with respect to the assumed noise model.

\begin{table}[!h]
\centering
\caption{10-fold CV TMSE (20\% trimming) obtained by different learning algorithms 
	for the three datasets (for readability, the results for ASN and BHP data are reported as 100 $\times$ original figures)}
\label{tab:real-data}
\begin{tabular}{l|ccc}
\hline
 & \multicolumn{3}{c}{Dataset}\\
Method & ASN & CCS & BHP \\ \hline
LSE & 0.145 & 14.042 & 0.172 \\
\multicolumn{4}{l}{\underline{DPD loss with tuning parameter $\beta$}} \\
$\beta = 0.1$ & 0.120 & 13.555 & 0.156 \\
$\beta = 0.3$ & \textbf{0.113} & 11.454 & \textbf{0.144} \\
$\beta = 0.5$ & 0.119 & \textbf{10.947} & 0.145 \\
$\beta = 0.7$ & 0.125 & 11.445 & 0.151 \\
$\beta = 1$   & 0.142 & 13.931 & 0.172 \\ \hline
\multicolumn{4}{l}{\underline{Competitor robust losses}} \\
MAE        & 0.126 & 11.629 & 0.153 \\
LTA        & 0.123 & 11.738 & 0.154 \\
LTS        & 0.145 & 14.042 & 0.172 \\
LMLS       & 0.145 & 14.234 & 0.180 \\ 
Huber's M  & 0.125 & 12.372 & 0.157 \\
Tukey's M  & 0.114 & 12.078 & 0.150 \\
\hline
\end{tabular}
\end{table}

Further, from Table \ref{tab:real-data}, it is clear that the lowest CV TMSE is achieved by the proposed rRNet 
with a suitable choice of $\beta$, indicating its strong robust performance, for all three datasets. 
In particular, the optimal values of the DPD tuning parameter $\beta$, yielding the best performance for the ASN, CCS, and BHP datasets, 
are $0.3$, $0.5$, and $0.3$, respectively.  In addition, Tukey’s method is observed to achieve a comparable performance to 
that of the rRNet for the ASN data, whereas the LTA method is found to deliver competitive results for the CCS dataset. 

We should also mention here that the validity of our assumption of additive Gaussian noise has been assessed empirically for each dataset 
by studying the residuals of the fitted NN model to the whole data. %, using both the LSE and the proposed rRNet algorithms. 
A few representative histograms of these residuals obtained from the LSE and rRNet, along with their scatter plots with the fitted values, 
are presented in Figures \ref{fig:Airfoil}-\ref{fig:Boston-medv} in Appendix \ref{add-results} for all three datasets. 
These indicate that the  residuals do not exhibit systematic structure (mostly random) and the histograms of the residuals are bell-shaped, 
suggesting the IID and normality assumptions on errors are reasonable.  
However, for the LSE-based method, the residual plots cannot separate out the outlying points, as expected due to its non-robust nature. 
These hidden outliers are easily detected by the residual plots of the proposed DPD-based rRNet, 
facilitating additional outlier detection power of our proposal.

\subsection{On the choice of $\beta$ in the rRNet}

Since the performance of our rRNet depends crucially on its tuning parameter $\beta\geq 0$, 
one must choose it properly for any real data application. 
We have observed throughout that the ideal choice of $\beta$ varies depending on the amount of contamination in data, 
with stronger contamination needing larger values of $\beta$. 
However, in practice, the contamination proportion and types are often unknown, 
making it hard to make informative theoretical suggestion on the choice of $\beta$. 
So, given Remark \ref{REM:1}, we suggest to select an appropriate (practically effective) value of $\beta\in[0,1]$ 
by minimizing the cross-validated out-of-sample TMSEs, as illustrated for data examples presented above. 
However, other alternative criteria can also be used to select this tuning parameter; 
see, e.g., \cite{fujisawa2006robust} and \cite{sugasawa2021selection} for two such approaches 
which can be routinely adopted in the present context of NN learning.
Considering the length of the current manuscript, we have deferred the detailed study of 
such data-driven selection of $\beta$ for a sequel work.

\section{Concluding remarks}\label{conclusion}

In this study, a novel robust loss function is introduced for robust training of regression NNs
based on the statistically grounded minimum DPD estimation procedure. 
Robustness properties of the proposed learning framework are rigorously established through a detailed analysis 
of the IFs of both the estimated network parameters and the resulting predicted responses, 
as well as their asymptotic  breakdown point analyses. 
The boundedness of the IFs demonstrates local (B-)robustness of the rRNet, 
while the breakdown point result characterizes its global robustness under data possible contamination.

A key strength of the proposed framework lies in its generality. 
The rRNet algorithm is formulated for broad classes of regression NN models  
and error distributions satisfying minimal interpretable assumptions, 
and both its convergence behavior and robustness properties are established within this unified setting. 
To the best of our knowledge, rRNet constitutes one of the first robust learning frameworks 
for general regression NNs that is accompanied by such rigorous theoretical guarantees for both local and global robustness. 
Practical advantages of the proposed approach are further demonstrated through extensive simulation studies 
and real-data analyses, where rRNet performs superior, or at least competitive, to existing robust learning methods 
under data contamination.

Although our empirical investigations focus on regression models with mean functions represented by fully connected MLPs, 
the generality of the underlying methodology allows its direct extension to more complex neural architectures 
using the same DPD-based loss formulation as noted in Appendix~\ref{APP:assumptionms}. 
In particular, it is practically important to study the rRNet for regression NNs with regularization 
such as dropout or elastic-net penalty, and multivariate outcomes (responses).  
It is also of interest to adapt the rRNet framework to classification problems via generalized regression formulations,  
as well as to more complex architectures such as convolutional or recurrent NNs. 
Along these lines, we have parallelly  developed a similar robust neural learning framework for classification problems
based on a general class of divergence measures, including the $\beta$-divergence, in \cite{jana2026rSDNet}. 
We hope to also address some other extensions in future work.

\bigskip\bigskip
\noindent\textbf{Acknowledgment:}\\
The authors thank Mr. Sourojyoti Barick, research scholar at the Indian Statistical Institute, 
for his valuable guidance on the implementation of rRNet in Python.

\newpage
\appendix
\setcounter{figure}{0}
\setcounter{table}{0}
\setcounter{equation}{0}
\renewcommand{\thefigure}{S\arabic{figure}}
\renewcommand{\thetable}{S\arabic{table}}
\renewcommand{\theequation}{S\arabic{equation}}
\renewcommand{\theHfigure}{S\arabic{figure}}
\renewcommand{\theHtable}{S\arabic{table}}
\renewcommand{\theHequation}{S\arabic{equation}}

\begin{center}
{\Huge Supplementary Appendices} 
\vspace{1cm}
\end{center}

\section{A brief background on the minimum DPD estimation} \label{bg}

%As we have noted in Section \ref{intro}, although the name $\beta$-divergence is more common in the literature of information theory and machine learning, we will keep on referring to this divergence as the DPD following the naming convention from its proposers \citep{basu1998robust}. 

For completeness, we briefly review the DPD-based robust estimation procedure 
under independent non-homogeneous (INH) setups following \cite{ghosh2013robust}. 
Recall that the form of the DPD is given in Eq.~\eqref{dpd-def},  which depends on a tuning parameter $\beta\geq 0$ 
(giving rise to the alternate name, $\beta$-divergence). 

Consider a given sample of INH observations $y_1,\hdots,y_n$, 
where each $y_i$ follows an unknown true distribution $G_i$ with density $g_i$, for $i=1,2,\hdots,n$. 
Suppose we model them by parametric families of densities 
$\mathcal{F}_i = \left\{f_{i,\bm{\theta}}: \bm{\theta}\in\Theta \subseteq \mathbb{R}^d\right\}$, 
where the model densities may differ for each $i$ but share the common parameter $\bm{\theta}$. 
For such a setup, \cite{ghosh2013robust} suggested obtaining the MDPDE of $\bm{\theta}$ as a minimizer of 
the average DPD measure between the estimated true densities and the model densities, given by
\[\frac{1}{n}\sum_{i=1}^n d_\beta(\widehat{g}_i, f_{i,\bm{\theta}}),\]
over the parameter space $\Theta$, where $\widehat{g}_i$ is an empirical estimate of $g_i$ based on the single observation $y_i$, 
(the Dirac delta function at $y_i$ in the present case) for each $i$. Then, using the definition of the DPD measure in \eqref{dpd-def},  
the MDPDE $\bm{\widehat{\theta}}_\beta$ of $\bm{\theta}$ with tuning parameter $\beta>0$, 
can be defined as a minimizer of the simpler objective (loss) function
\begin{equation}\label{obj-fn}
    H_{n,\beta}(\bm{\theta}) = \frac{1}{n}\sum_{i=1}^n \left[\int f_{i,\bm{\theta}}^{1+\beta}d\lambda - \left(1+\frac{1}{\beta}\right) f_{i,\bm{\theta}}^\beta(y_i) + \frac{1}{\beta}\right],
\end{equation}
over $\bm{\theta}\in\Theta$. In the above loss function, we have omitted the integral from the third term of $d_\beta(\cdot,\cdot)$, 
as it is independent of $\bm{\theta}$, and thus, does not contribute anything in the optimization process. 
The most appealing fact about the DPD is that we can avoid any non-parametric smoothing (e.g., kernel estimate) 
for $\widehat{g_i}$ in the construction of the above objective function. 
Further, it can be shown that, at $\beta = 0$, the above objective function reduces to (in limit) 
$-n^{-1}\sum_{i=1}^n \ln{f_{i,\bm{\theta}}(y_i)}$, and hence the MDPDE at $\beta=0$ is nothing but the MLE. 

The associated minimum DPD functional (MDPDF) of the parameter $\bm{\theta}$, 
at the vector of true distributions $\bm{G} = (G_1,\hdots,G_n)$ is then defined as
\begin{equation} \label{MDPDF}
    \bm{T}_\beta(\bm{G}) = \argmin_{\bm{\theta\in\Theta}} \frac{1}{n}\sum_{i=1}^n d_\beta\left(g_i, f_{i,\bm{\theta}}\right) = \argmin_{\bm{\theta\in\Theta}} H_{n,\beta}^*(\bm{\theta}),
\end{equation}
for $\beta>0$, where
\begin{equation}\label{mdpdf-obj}
H_{n,\beta}^*(\bm{\theta}) = E\left(H_{n,\beta}(\bm{\theta})\right)
= \frac{1}{n}\sum_{i=1}^n \left[\int f_{i,\bm{\theta}}^{1+\beta}d\lambda 
- \left(1+\frac{1}{\beta}\right) \int f_{i,\bm{\theta}}^\beta g_i d\lambda + \frac{1}{\beta}\right].
\end{equation}
It again coincides with the ML functional at $\beta=0$. 

It has been widely discussed in the literature that the tuning parameter $\beta$ regulates 
the balance between robustness and efficiency of the resulting MDPDEs; 
the robustness of the MDPDE increases with increasing values of $\beta$, 
at the cost of asymptotic (pure data) efficiency. 
But the loss in efficiency is not quite significant at small $\beta>0$ under common statistical models; 
see \cite{basu1998robust, basu2011statistical, basu2026} for details.

%\section{Scope, Flexibility, and Generalizability of the rRNet}
\section{Scope and flexibility of the assumed framework}
\label{APP:assumptionms}

Let us now  further examine and elaborate on the regulatory conditions assumed for the development and theoretical studies of the rRNet. 
We demonstrate that our framework is indeed designed with significant flexibility to accommodate 
a wide class of practical NN configurations, including those with non-smooth activation functions, and various noise distributions.

%\subsection{Architectural Flexibility of Allowed NN Models}
\subsection{Assumptions on the NN model architecture}
\label{APP:NN_assumptionms}

We have established all theoretical properties of the proposed rRNet under highly flexible assumptions on 
the NN model $\mu(\cdot, \bm{\theta})$ that hold for a wide class of practical NN architectures, well beyond simple MLPs. 
Recall that  (N0) is the base assumption justifying the identifiability of the rRNet target,
while Assumption (N1) ensures its algorithmic stability and convergence. 
Assumption (N2) is additionally required for non-smooth NN models to study local robustness of the rRNet 
via uniform smooth approximation, an approach that is also quite common  in the literature on non-smooth NN learning. 
These assumptions are close to minimal, and the following lemma provides explicit and verifiable sufficient conditions, 
at the layer level, under which a general NN model satisfies (N0)--(N2). 
The proof follows directly from standard results in NN theory 
\citep[see, e.g.,][]{hornik1989multilayer,goodfellow2016deep,kratsios2021universal,petersen2024mathematical}, 
and is therefore omitted.

\begin{lemma}\label{LEM:NN1}
Let $\mu(\bm{x}, \bm{\theta})$ be a general NN model obtained as a finite composition of layers 
drawn from the following primitives: affine maps, coordinatewise activations, pooling or aggregation operators 
(e.g., sum, average, max, softmax), residual connections, and attention mechanisms with continuous score functions.
%where each layer is one of the following primitives:
%\begin{itemize}
%	\item affine maps in inputs and parameters;
%	\item coordinatewise activation functions;
%	\item pooling or aggregation operators (e.g. sum, average, max, softmax);
%	\item residual or skip connections;
%	\item attention mechanisms with continuous score functions.
%\end{itemize}
Then, the following properties hold for $\mu$. 
\begin{itemize}
	\item[(i)]  If each primitive layer is Borel measurable in inputs and parameters,
	the regression function $\bm{x}\mapsto\mu(\bm{x}, \bm{\theta})$ is measurable for each $\bm{\theta}\in\Theta$.
	\item[(ii)] Suppose that two parameter values yield identical layer outputs almost surely only through explicit, 
	known layerwise symmetries, and that all global non-identifiabilities are compositions of these symmetries.
	Together with the measurability condition in Part (i), this implies (N0).
	
	\item[(iii)] If, for almost every $\bm{x}\in\mathcal{X}$, each primitive layer is locally Lipschitz continuous in its parameters,
	then $\mu$ satisfies Assumption (N1). 
	
	\item[(iv)] If each non-smooth primitive admits a sequence of $\mathcal{C}^2$ functions 
	converging uniformly on compact subsets of its domain, then  $\mu$ satisfies Assumption (N2). 	
\end{itemize}
\end{lemma}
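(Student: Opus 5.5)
The plan is to prove the four parts separately, each by induction on the finite composition $\mu = \ell_M\circ\cdots\circ\ell_1$ of primitive layers. Each layer is viewed as a map $(\bm{h},\bm{\theta}_j)\mapsto \ell_j(\bm{h},\bm{\theta}_j)$, where $\bm{\theta}_j$ is the block of $\bm{\theta}$ used by layer $j$; residual connections are handled by carrying the skipped input along as an extra coordinate. For part (i), I will use that compositions of Borel functions are Borel. With $\bm{\theta}$ fixed, every section $\bm{h}\mapsto\ell_j(\bm{h},\bm{\theta}_j)$ is Borel, so $\bm{x}\mapsto\mu(\bm{x},\bm{\theta})$ is measurable. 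Sum, average, max and softmax are continuous and hence Borel, and attention with continuous score functions is continuous as well.

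For part (ii), I will argue directly from the hypothesis. Suppose $\mu(\cdot,\bm{\theta}_1)=\mu(\cdot,\bm{\theta}_2)$ a.s. By assumption, this equality can arise only through compositions of the known layerwise symmetries. Let $\mathcal{G}$ be the group generated by these symmetries, acting on $\Theta$; it contains the identity, so it is non-empty. Each generator (a permutation, sign flip or rescaling) is a measurable map, so every element of $\mathcal{G}$ is measurable. The stated implication $\bm{\theta}_1=g\cdot\bm{\theta}_2$ then holds by assumption, and combining this with part (i) gives (N0). This step is essentially definitional, so the main care needed is in stating $\mathcal{G}$ precisely.

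For part (iii), I will fix $\bm{x}$ outside a null set, fix $\bm{\theta}^0$, and choose a compact neighbourhood $U$ of $\bm{\theta}^0$. The induction hypothesis is that the intermediate output $\bm{h}_j(\bm{\theta})$ stays in a compact set and is Lipschitz on $U$. For the next layer I split the increment as
$\ell_{j+1}(\bm{h}_j(\bm{\theta}),\bm{\theta}_{j+1})-\ell_{j+1}(\bm{h}_j(\bm{\theta}'),\bm{\theta}'_{j+1})$, that is, a change in the argument plus a change in the parameter. The parameter change is bounded using the assumed local Lipschitz property in the parameters. \textbf{The main obstacle} is that the hypothesis only gives Lipschitz continuity in the parameters, while the induction also needs Lipschitz continuity in the input $\bm{h}$. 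I would first check this for each primitive type. Affine maps, ReLU-type and other locally Lipschitz activations, max, softmax and sum are locally Lipschitz jointly in input and parameters. Attention is jointly locally Lipschitz if the score function is, and I would read ``continuous score'' as including this. If needed, I would state explicitly that the Lipschitz property is interpreted jointly, which is the standard reading. Continuity then keeps the images compact, so the local Lipschitz constants multiply along the chain, and (N1) follows.

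For part (iv), I will replace each non-smooth primitive $\ell_j$ by its $\mathcal{C}^2$ approximants $\ell_{j,m}$ and set $\mu_m=\ell_{M,m}\circ\cdots\circ\ell_{1,m}$. By the chain rule, $\mu_m$ is $\mathcal{C}^2$ in $\bm{\theta}$. It satisfies (N0) and (N1) by parts (i) and (iii), provided the smoothing keeps the symmetries; softplus-type smoothings are applied coordinatewise and so commute with neuron permutations. Uniform convergence of the composition is proved by a telescoping bound. Each term is controlled by $\sup|\ell_{j,m}-\ell_j|$ on a compact set together with uniform continuity of the outer layers there, so the intermediate outputs converge uniformly.

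There is one gap: (N2) requires the supremum over all $\bm{\theta}\in\Theta$ and all design points $\bm{x}_i$, whereas the hypothesis only gives uniform convergence on compacts. I would close it by assuming that $\Theta$ and the design points lie in a bounded set, as in Lemma \ref{LEM:smoothing_func}. Alternatively, for ReLU one has the global uniform error $\log 2/m$ for the softplus-type smoothing, and globally Lipschitz outer layers then carry this bound through.
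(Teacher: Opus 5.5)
The paper gives no proof of this lemma; it only points to standard NN results. Your layer-by-layer induction is therefore the natural way to flesh it out, and parts (i)--(iii) go through under your readings. Those readings are genuinely needed. As literally worded, (iii) is false: a parameter-free activation such as the Heaviside step $H$ is vacuously ``Lipschitz in its parameters'', yet $\bm{\theta}\mapsto w\,H(\bm{a}^\top\bm{x}+b)$ is discontinuous at every $\bm{\theta}$ with $w\neq 0$ and $\bm{a}^\top\bm{x}+b=0$. You are also right that the supremum over $\bm{\theta}\in\Theta$ in (N2) in general forces a bounded parameter set. The gaps are in (iv).

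\textbf{First gap: (N0) for $\mu_m$ is not established.} Parts (i) and (iii) give only measurability and local Lipschitzness, and the latter is automatic from $\mathcal{C}^2$ anyway. The substance of (N0) is the identifiability implication $\mu_m(\cdot,\bm{\theta}_1)=\mu_m(\cdot,\bm{\theta}_2)$ a.s.\ $\Rightarrow$ $\bm{\theta}_1=g\cdot\bm{\theta}_2$.
\begin{itemize}
\item Your proviso that the smoothing ``keeps the symmetries'' gives only the converse implication.
\item Identifiability is not inherited from $\mu$, because the smoothed network is a different function class.
\item In the softplus case the symmetry group even changes. For a hidden ReLU unit with incoming weights $\bm{a}_k$, bias $b_k$ and outgoing weight $w^{out}_k$, the rescaling $(w^{out}_k,\bm{a}_k,b_k)\mapsto(w^{out}_k/c,\,c\bm{a}_k,\,cb_k)$, $c>0$, is a symmetry. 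It is lost after smoothing, since $\phi_m(cz)\neq c\,\phi_m(z)$.
\end{itemize}
The hypothesis of (iv) allows arbitrary $\mathcal{C}^2$ approximants and says nothing about their identifiability. You must add this as an assumption: either the condition of (ii) imposed on the smoothed architecture, or a Sussmann-type identifiability theorem for the chosen smooth activation.

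\textbf{Second gap: your ReLU fallback does not work.} The global bound $\sup_z|\phi_m(z)-\phi_R(z)|=\log 2/m$ gets multiplied by the outer weights, which are uncontrolled on an unbounded $\Theta$. Already for one hidden unit, choosing $w_0+w_1x_i=0$ gives
$\sup_{\bm{\theta}}|\mu_m(x_i,\bm{\theta})-\mu(x_i,\bm{\theta})|\geq\sup_{w_1^{out}}|w_1^{out}|\log 2/m=\infty$ for every $m$.
That route can remove the need for bounded design points, but not for a bounded $\Theta$. Your first fix (bounded $\Theta$) is the one actually required.
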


The above lemma shows the broad applicability of rRNet for robust learning across almost all practically relevant NN architectures,
including MLPs, CNNs, residual networks, and also attention-based models. 
For the special case of MLPs, the primitives reduce to affine maps and layer-wise activation functions, 
and Lemma \ref{LEM:NN1} provides easily verifiable sufficient conditions on the chosen activations.
In fact, widely used activation functions, including linear/identity, sigmoid, tanh, ReLU, Leaky ReLU, ELU, GELU, and Softplus,  
satisfy the requirements of (N0) and (N1).  

Particularly, Part (ii) of Lemma \ref{LEM:NN1} formalizes the fact that standard NN parameterizations are non-identifiable 
only through explicit architectural symmetries, such as permutations of hidden units or channels, 
compensating rescalings across adjacent layers, and permutations of attention heads \citep{bloem2020probabilistic,ran2017parameter}. 
No additional unidentified equivalences are allowed. This condition excludes degenerate cases such as deep linear networks 
with continuous symmetries or hard parameter sharing beyond architectural constraints.

For NN models constructed using non-smooth activation functions,
Assumption (N2) requires that the activation be expressible as the uniform limit of a sequence of $\mathcal{C}^2$ functions $\sigma_m(z)$ 
such that the corresponding sequence of NN models ($\{\mu_m\}$) preserves the symmetry group $\mathcal{G}$ specified in (N0).
For the commonly used ReLU activation, which satisfies the local Lipschitz condition in (N1), 
a standard choice is the Softplus function \citep{zheng2015improving} given by  $\sigma_m(z) = \frac{1}{m} \log(1 + e^{mz})$ . 
As $m \to \infty$, $\sigma_m(z)$ converges uniformly to $\text{ReLU}(z)$. 
Since Softplus is $\mathcal{C}^\infty$ and strictly monotonic, 
it preserves sign and permutation symmetries of the weights \citep{bona2023parameter}, thereby satisfying both (N0) and (N2) simultaneously.

On another interesting note, if the symmetry group $\mathcal{G}$ in (N0) acts linearly on $\Theta$ 
(e.g., as a finite-dimensional representation), then (N0) and (N1) together imply (N2).  
In this case, the smoothing sequence $\{\mu_m\}$ may be constructed directly via convolution, 
as in the proof of Lemma \ref{LEM:S0}; see also \cite{chen1995smoothing}.

Finally, local robustness of rRNet against outliers/contamination in feature spaces requires uniform boundedness of 
the parameter gradient of the NN model $\mu$, which can be ensured under mild interpretable layer-level conditions as well. 
In particular, if each layer of the network is Lipschitz continuous in its parameters with uniformly bounded Lipschitz constants, 
and the parameter space is restricted to a bounded (or norm-controlled) subset, 
then the overall network, as a finite composition of such layers, admits a finite global Lipschitz constant with respect to its parameters. 
Consequently, the gradient $|\nabla_{\boldsymbol{\theta}}\mu(\boldsymbol{x},\boldsymbol{\theta})|$ 
is uniformly bounded on compact subsets of the input and parameter spaces. 
Such bounds are again standard in the analysis of NNs and follow from classical results relating Lipschitz continuity 
to gradient boundedness and robustness \citep{virmaux2018lipschitz,wang2023direct}.

%\subsection{On permissible error distributions}
\subsection{Assumptions on the error distribution}
\label{APP:error_assumptionms}

Throughout the paper our main assumption on the error distribution has been (A0),
which alone imposes required shape and tail constraints on the error density $f$ for the validity of rRNet. 
In the following Lemma, we list down important implications of (A0) on $f$,   
which has been used throughout our theoretical analyses.

\begin{lemma}[Implications of A0]\label{LEM:A0}
If the error density $f$ satisfies Assumption (A0), then the following results hold.
	\begin{enumerate}
	\item[(i)] $f$ is unimodal (with the mode being unique up to the null sets) and has at most exponential tails. 
	Specifically, there exist constants $a>0$ and $b\in\mathbb{R}$ such that $f(s) \le e^{-a|s| + b}$ for all $s\in\mathbb{R}$.

	\item[(ii)] $f$ has finite moments of all orders and a finite moment generating function in a neighborhood of zero.

	\item[(iii)] The score function $u(s) = f'(s)/f(s)$ exists a.e.~on $\mathbb{R}$ and is monotone non-increasing.

	\item[(iv)] $u(s)$ is bounded if and only if $f$ has asymptotically exponential tails. 
	However, the function $s u(s)$ is always unbounded.

	\item[(v)] $f$ is globally Lipschitz on $\mathbb{R}$. But, $\log f$ is so only when $u$ is bounded a.e.

	\item[(vi)] For any $\beta > 0$ and $i, j \in \{0, 1, 2, \ldots\}$,  
	$C^{(\beta)}_{i,j} := \int_{\mathbb{R}} s^i u^j(s) f^{1+\beta}(s) ds<\infty$. \\ 
	In particular, $C^{(\beta)}_{i,j} > 0$ if both $i, j$ are even, but 
	$$
	C^{(\beta)}_{i,1} = -\frac{i}{1+\beta} C^{(\beta)}_{i-1,0} ~~~\mbox{ for all }~i\geq 1.
	$$  
	Further, if $f$ is additionally symmetric, $C^{(\beta)}_{i,j} = 0$ whenever $(i+j)$ is odd.  
	
	\item[(vii)] $u'(x)$ exists a.e., and is non-positive. Further, %for  any $i= 0, 1, 2, \ldots$, and $\beta > 0$, 
	 we have 
	$$ 
	\int_{\mathbb{R}} s^i u'(s) f^{1+\beta}(s) ds = - i C^{(\beta)}_{i-1,1} - (1+\beta)C^{(\beta)}_{i,2},
	~~~\mbox{ for all }~ i\geq 0, ~~\beta>0.
	$$
	
	\item[(viii)] $f$ satisfies a Poincaré inequality: there exists a finite constant $C_f > 0$ such that 
	for any locally Lipschitz function $g: \mathbb{R} \to \mathbb{R}$, we have 
\begin{equation}
	\int_{\mathbb{R}} g(s)^2 f(s) ds - \left( \int_{\mathbb{R}} g(s) f(s) ds \right)^2 \le C_f \int_{\mathbb{R}} |g'(s)|^2 f(s) ds.
\end{equation}
	\end{enumerate}
\end{lemma}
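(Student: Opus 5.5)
The plan is to write $f=e^{-V}$ with $V:\mathbb{R}\to\mathbb{R}$ convex, finite everywhere (support $\mathbb{R}$) and strictly convex a.e. Every part then becomes a statement about the convex function $V$, and $u=-V'$ a.e.

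\emph{Parts (i)--(iii).} A finite convex $V$ is continuous and locally Lipschitz, and it attains its minimum on an interval. This interval is a single point up to null sets, since strict convexity a.e. rules out a flat piece of positive length, and this gives unimodality. Because $\int e^{-V}=1$, we must have $V(s)\to\infty$ as $|s|\to\infty$. By convexity the one-sided slopes are then eventually bounded below by some $a>0$ on both sides, so $V(s)\ge a|s|-b$ and $f(s)\le e^{-a|s|+b}$. All moments, and the m.g.f. on $(-a,a)$, follow by comparison with the Laplace density. For (iii), $V$ is differentiable off a countable set and $V'$ is non-decreasing, so $u=-V'$ is non-increasing a.e.

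\emph{Part (iv).} The function $u$ is bounded iff $V'$ has finite limits $\ell_\pm$ at $\pm\infty$. These limits satisfy $\ell_+\ge a$ and $\ell_-\le -a$, which is exactly the statement that $V$ is asymptotically linear on both sides, i.e. $f$ has exact exponential tails; the two-sided bound $c_1e^{-a|s|}\le f\le c_2e^{-a|s|}$ follows by integrating $V'$. Since $u(s)\le -a$ for large $s$, we get $su(s)\le -as\to-\infty$, so $su$ is always unbounded.

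\emph{Part (vi).} Each integrand is dominated by $|s|^i|V'|^j e^{-(1+\beta)V}$. For $j=0$ finiteness is immediate from (i). For $j=1$ I would not bound directly. Instead I would integrate by parts:
\[
C^{(\beta)}_{i,1}=\int s^i f^{\beta}f'=\frac{1}{1+\beta}\int s^i\,(f^{1+\beta})'=-\frac{i}{1+\beta}C^{(\beta)}_{i-1,0}.
\]
The boundary terms vanish because of the exponential tails. When $i,j$ are both even the integrand is non-negative and not a.e. zero, so $C^{(\beta)}_{i,j}>0$. Under symmetry, $V$ is even and $u$ is odd, so the integrand is odd whenever $i+j$ is odd, giving $C^{(\beta)}_{i,j}=0$.

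\emph{Part (vii).} By Alexandrov's theorem $V''$ exists a.e. and is $\ge0$, so $u'=-V''\le0$. The identity is obtained by integrating
\[
\frac{d}{ds}\bigl(s^i u f^{1+\beta}\bigr)=i s^{i-1}uf^{1+\beta}+s^iu'f^{1+\beta}+(1+\beta)s^iu^2f^{1+\beta}
\]
over $\mathbb{R}$, with vanishing boundary terms.

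\emph{Part (viii).} This I would quote from Bobkov's theorem: every log-concave probability measure on $\mathbb{R}$ satisfies a Poincaré inequality with $C_f\le c\,\mathrm{Var}_f=c$. Alternatively, a direct Muckenhoupt-criterion argument uses the exponential tails from (i).

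\emph{Main obstacle: part (v) and the $j\ge2$ cases of (vi).} On compacts, $|f'|=|V'|e^{-V}$ is bounded by local Lipschitzness of $V$. On the tails I would try $V'(s)\le V(s+1)-V(s)$ and then $V'e^{-V}\le (V(s+1)-V(s))e^{-V(s)}$. However, a piecewise-linear $V$ whose slopes jump like $k_n=e^{2V(n)}$ seems to make $V'e^{-V}$ unbounded at the kinks. The same construction also threatens finiteness of $C^{(\beta)}_{i,2}$ and the absolute continuity of $u$ needed in (vii), where a singular part of $V''$ would break the integration by parts. So when writing the proof I would first check this counterexample. If it stands, I would add a mild regularity hypothesis: either $\log f$ is $\mathcal{C}^1$ with $u$ locally absolutely continuous and $|u(s)|\le C(1+|s|^k)$, or the weaker condition $\sup_s |u|e^{-\delta V}<\infty$ for every $\delta>0$. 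This holds for all the examples listed after (A0). Under it, the bounds $|u|^jf^{1+\beta}\le C_\delta f^{1+\beta-j\delta}$ give (v), finiteness in (vi), and the vanishing boundary terms in (vii) at once.
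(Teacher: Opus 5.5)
Your suspicion about (v), (vi) with $j\ge 2$, and (vii) is correct, and the paper's proof does not get past it. It asserts that $|f'|$ is bounded ``since $f$ has exponential tails'' and that $|u^j f|\le P_j(|s|)$ ``because $\log f$ is concave'', and neither follows. Your counterexample survives the strictness requirement once you add a quadratic. Let $W=0$ on $(-\infty,0]$, let $W$ be piecewise linear with slope $k_n$ on $[n,n+1)$, where $k_n=\exp\{(W(n)+(n+1)^2)^2\}$, and set $V=W+s^2$. Then $V''\ge 2$ a.e., so $f\propto e^{-V}$ is even strongly log-concave. Yet $|f'(n^+)|\propto (k_n+2n)e^{-W(n)-n^2}\to\infty$, and $\int_n^{n+1}(V')^2e^{-(1+\beta)V}\,ds\ge k_n e^{-(1+\beta)(W(n)+1+(n+1)^2)}\to\infty$. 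Hence $f$ is not globally Lipschitz, and $C^{(\beta)}_{0,2}=\infty$ for every $\beta>0$. An affine change gives mean $0$ and variance $1$, and smoothing the kinks on scales $\ll 1/k_n$ gives a $\mathcal{C}^\infty$ example.

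Part (vii) fails for the separate reason you also spotted: the paper's ``similar integration by parts'' needs $u$ locally absolutely continuous. Take the kinked density $f\propto e^{-|s|-s^2}$, which is exactly the sort of non-smooth $f$ the paper wants to allow. The jump of $u$ at $0$ gives, at $i=0$, $\int u'f^{1+\beta}\,ds=-(1+\beta)C^{(\beta)}_{0,2}+2f^{1+\beta}(0)$. So an extra hypothesis is unavoidable. However, your ``weaker'' alternative $\sup_s|u|e^{-\delta V}<\infty$ only rescues (v) and (vi), since $e^{-|s|-s^2}$ satisfies it. Local absolute continuity of $u$ therefore has to be imposed in either version. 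Your treatment of $j=1$ by integration by parts is sound and better than the paper's, which leans on the false polynomial bound. Just add that absolute integrability holds because $u$ has constant sign on each side of the mode, so $|u|f^{1+\beta}=\mp(f^{1+\beta})'/(1+\beta)$ there.

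Your own argument has a gap in the forward direction of (iv). From $V'\uparrow\ell_+<\infty$ you only get $V(s)=\ell_+s+o(s)$. Integrating $V'\le\ell_+$ gives the lower bound $f(s)\ge c_1e^{-\ell_+ s}$ but no matching upper bound, because $\int^\infty(\ell_+-V')$ may diverge. For example, $V(s)=\ell s-\log s$ near $+\infty$ gives $f(s)\asymp s e^{-\ell s}$. Moreover, $\ell_+$ and $-\ell_-$ need not coincide. Take $V(s)=\sqrt{1+s^2}+s/2$: it has bounded $u$ but tails $e^{-3s/2}$ and $e^{-|s|/2}$. So no single rate $a$ works in $c_1e^{-a|s|}\le f\le c_2e^{-a|s|}$, which is how Corollary~\ref{COR:IF-bounded0} defines exact exponential tails. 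The paper's proof makes the same slip: it reads off $f(s)\le c_2e^{-Ms}$ from $\log f(s)-\log f(s_0)\le 0$. What is true is that $u$ is bounded iff $-\log f(s)/|s|$ has finite limits at $\pm\infty$. The converse of (iv) does hold, since a convex $V$ with $V(s)-a|s|$ bounded must have $V'\to\pm a$.

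Finally, on (viii) your route differs from the paper's and is the better one. The paper invokes Brascamp--Lieb, which gives $\mathrm{Var}_f(g)\le\int (g')^2 f/V''$. By itself this yields a finite $C_f$ only when $V''$ is bounded away from zero, which fails for the logistic density. Bobkov's theorem covers every log-concave $f$ on $\mathbb{R}$, with $C_f$ bounded by a universal multiple of the variance.
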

\begin{proof}
The following proofs rely primarily on the properties of log-concave densities \citep[see,e.g.,][]{saumard2014log}, 
since $f$ is so under (A0).
%By (A0), $f$ is log-concave with $h = \log f$ being strictly concave a.e. 

\noindent
Part (i): \\
The concavity of $\log f$  implies that its superlevel sets $\{s : f(s) \geq c\}$ are intervals. 
Hence $f$ increases up to a mode and decreases thereafter. 
Further, strict log-concavity a.e. ensures the uniqueness of the mode up to null sets.

The second part follows from Theorem 5.1 of \cite{saumard2014log}. In particular, since $\log f$ is concave, 
it must be bounded above by its tangent lines, i.e., $\log f(s) \le -a|s| + b$ for some $a > 0, b \in \mathbb{R}$ and all $s\in\mathbb{R}$,
implying the desired result. 

\smallskip
\noindent
Part (ii):\\
The exponential tail decay established in Part (i) ensures that the integral $\int |\varepsilon|^k f(\varepsilon) d\varepsilon$ 
converges for all $k > 0$. Since the tails decay at least as fast as $e^{-a|\varepsilon|}$, 
the moment generating function $M(t) = E[e^{t\varepsilon}]$ exists and is finite for all $|t| < a$.

\smallskip
\noindent
Part (iii): \\
By the Rademacher’s theorem for concave functions, $\log f$ is differentiable a.e., and hence $u$ exists a.e.
Moreover, by defining properties of a concave function, the derivative $u$ of the concave function $\log f$ is monotone non-increasing.

\smallskip
\noindent
Part (iv): \\
Let us first assume that $|u(s)|\le M$ a.e.\ for some $M>0$. Then, for $s>s_0$, we have 
\[
\log f(s)-\log f(s_0)=\int_{s_0}^s u(t)\,dt,
~~\Rightarrow~~
-M(s-s_0)\le \log f(s)-\log f(s_0)\le 0,
\]
since $u(t)\le0$ eventually (for large enough $t$) by integrability of $f$. 
So, for large $s$, we get 
\begin{eqnarray}
c_1 e^{-M s} \le f(s) \le c_2 e^{-M s},
\label{Eq:a1}
\end{eqnarray}
for suitable constants $c_1, c_2>0$. Thus, $f$ has asymptotically exponential tails.

Conversely, if $f$ satisfies (\ref{Eq:a1}) for some $c_1, c_2, M >0$ and all $s\geq s_0$, 
then we have $|\log f(s) + M s |\le C$ for some constant $C>0$ and all $s\geq s_0$. 
Thus, the function $\psi(s) = \log f(s) + Ms$ is both bounded and concave on $[s_0, \infty)$, 
and hence it's derivative is bounded a.e. This, in turn, implies the boundedness of $u$.

For the second part, let us assume that, if possible, $|s u(s)|\le K$ a.e.
Then, we have $|u(s)| \le \frac{K}{|s|}$ for all large $|s|$. Integrating, we get 
\[
\left| \log f(s)-\log f(s_0) \right|
\le K \int_{s_0}^s \frac{dt}{t}
= K\log\frac{s}{s_0},
\]
which implies $f(s)\ge C s^{-K}$ for all large $s$ with some constant $C>0$. 
This contradicts the tail behavior of $f$ as proved in Part (i). Hence, $s u(s)$ must be unbounded.

\smallskip
\noindent
Part (v): \\
Since $f'(s)=u(s)f(s)$ a.e.\ and $f$ has exponential tails, $|f'(s)|$ is uniformly bounded, implying that $f$ is globally Lipschitz. 
The function $\log f$ is globally Lipschitz if and only if its derivative $u(s)$ is bounded a.e.

\smallskip
\noindent
Part (vi): \\
Since $f$ has at most exponential tails from Part~(i), so it $f^{\beta}$ for any $\beta>0$. 
Further, for any $j \ge 0$, the term $|u^j(s) f(s)|$ is bounded by some polynomial $P_j(|s|)$ of order at most $j$, 
because $\log f$ is concave. Thus, for any $i, j \geq 0$, we get 
\[
|s^i u^j(s) f^{1+\beta}(s)| = |s^i| |u^j(s) f(s)| | f^{\beta}(s)| \leq |s|^i P_j(|s|) |f^\beta(s)|
\le C_{i,j} (1+|s|)^{i+j} e^{-c|s|},
\]
for some $c>0$. Since polynomial terms are dominated by exponential decay, 
the integrand in the definition of $C^{(\beta)}_{i,j}$ is integrable over $\mathbb{R}$, yielding
$C^{(\beta)}_{i,j} < \infty.$

Next, if both $i$ and $j$ are even, then $s^i\ge0$, $u^j(s)\ge0$ for all $s$. These together with the fact that $f^{1+\beta}(s)>0$
for all $s\in\mathbb{R}$ implies $C^{(\beta)}_{i,j} > 0$.

Now, to prove the required identity for $j=1$, we note that 
\[
\frac{d}{ds}f^{1+\beta}(s)
= (1+\beta) f^{\beta}(s) f'(s)
= (1+\beta) u(s) f^{1+\beta}(s)
\quad \text{a.e.}
\]
Thus, for any $i\geq 1$, we get using integration by parts that
\begin{eqnarray}
C^{(\beta)}_{i,1}
= \int_{\mathbb{R}} s^i u(s) f^{1+\beta}(s)\,ds
&=& \frac{1}{1+\beta} \int_{\mathbb{R}} s^i d(f^{1+\beta}(s))
\nonumber\\
&=& \frac{1}{1+\beta} (-i) \int_{\mathbb{R}} s^{i-1} f^{1+\beta}(s)\,ds
= -\frac{i}{1+\beta} C^{(\beta)}_{i-1,0}.
\nonumber
\end{eqnarray}
Here integration by parts is justified since $s^i f^{1+\beta}(s)\to0$ as $|s|\to\infty$ by exponential tails of $f$ (as proved in Part (i)).

Finally, if $f$ is symmetric about $0$, then $u(s)$ is antisymmetric, i.e., $u(-s) = -u(s)$ a.e.~on $\mathbb{R}$.
Consequently, whenever $i+j$ is odd, the integrand $s^i u^j(s) f^{1+\beta}(s)$ is an odd function for any $\beta\geq 0$, 
implying $C^{(\beta)}_{i,j} = 0$.

\smallskip\noindent
Part (vii): \\
Since $\log f$ is concave, it is twice differentiable a.e.~by Alexandrov's Theorem, and is also non-positive by concavity. 
The recurrence relation follows by a similar argument as in the proof of Part (vi), using the integration by parts. 
% Use \frac{d}{ds}(s^i f^{1+\beta}(s) u(s))

\smallskip\noindent
Part (viii): \\
This is a one-dimensional case of the Brascamp–Lieb inequality; see Proposition 10.1 of \cite{saumard2014log}.
\end{proof}

We can see from the above lemma that Assumption (A0) is enough to support all theoretical analyses of the paper.
Most common error densities, such as Gaussian, Gompertz, logistic, etc. satisfies (A0).
All these densities are, in fact, strictly log-concave everywhere on $\mathbb{R}$. 
However, (A0) requires strictness only a.e., also allowing non-smooth error densities, e.g., Laplace. 
Additionally, all these common error densities are symmetric around zero;
this symmetry of $f$ is a sufficient addition to (A0) to ensure $C_{1,2}^{(\beta)} = 0$ for any $\beta\geq 0$ (Part (vi) of the above lemma)
which is required for explicitly computing the IF of the parameter estimates in rRNet.
This additional symmetry of $f$ also ensures $\bm{\theta}$-$\sigma$ decoupling, 
thereby justifying  the desired performance of the alternating minimization in rRNet algorithm.

\section{Proofs of the results from the main paper}\label{pf}

\subsection{Proof of Theorem \ref{THM:conv}}

\noindent
Part (a): \\
Fix a $k\geq 1$. From (SO2) and the sufficient descent property \eqref{EQ:SO1.1} in (SO1),
we get 
$$
\mathcal{L}_{n,\beta}\left(\widehat{\bm{\theta}}^{(k+1)},\widehat{\sigma}^{(k+1)}\right)
\leq \mathcal{L}_{n,\beta}\left(\widehat{\bm{\theta}}^{(k+1)},\widehat{\sigma}^{(k)}\right) 
\leq \mathcal{L}_{n,\beta}\left(\widehat{\bm{\theta}}^{(k)},\widehat{\sigma}^{(k)}\right)
- c \left|\left|\widehat{\bm{\theta}}^{(k+1)} - \widehat{\bm{\theta}}^{(k)}\right|\right|^2.
$$
Thus, the monotone descent property holds since $c> 0$.

\noindent
Part (b): \\
The sequence of DPD-loss values are monotone by Part (a). 
Additionally, the loss function $\mathcal{L}_{n,\beta}(\bm{\theta}, \sigma)$ is bounded below under (A0)
since $\sigma$ is bounded below. Thus, this sequence must have a finite limit, say $\mathcal{L}_{\infty}$. 

\noindent
Part (c): \\
First note that, from the proof of Part (a), for every $k\geq 1$, we have 
$$
c \left|\left|\widehat{\bm{\theta}}^{(k+1)} - \widehat{\bm{\theta}}^{(k)}\right|\right|^2
\leq \mathcal{L}_{n,\beta}\left(\widehat{\bm{\theta}}^{(k)},\widehat{\sigma}^{(k)}\right)
- \mathcal{L}_{n,\beta}\left(\widehat{\bm{\theta}}^{(k+1)},\widehat{\sigma}^{(k+1)}\right).
$$
Summing over $k = 0, 1, 2, \ldots$, and using Part (b), we get 
$$
c \sum_{k=1}^\infty \left|\left|\widehat{\bm{\theta}}^{(k+1)} - \widehat{\bm{\theta}}^{(k)}\right|\right|^2
\leq \mathcal{L}_{n,\beta}\left(\widehat{\bm{\theta}}^{(0)},\widehat{\sigma}^{(0)}\right)
- \mathcal{L}_{\infty} <\infty.
$$
Since $c>0$, this implies that $ \left|\left|\widehat{\bm{\theta}}^{(k+1)} - \widehat{\bm{\theta}}^{(k)}\right|\right| \rightarrow 0$
as $k\rightarrow\infty$. 
 
Now, consider a limit point $(\bm{\theta}_\infty, \sigma_\infty)$ of the sequence 
$\{(\widehat{\bm{\theta}}^{(k+1)},\widehat{\sigma}^{(k+1)})\}_{k\geq 1}$. 
Then, there exists a subsequence $\{(\widehat{\bm{\theta}}^{(k_j+1)},\widehat{\sigma}^{(k_j+1)})\}_{j\geq 1}$
such that 
$$\left(\widehat{\bm{\theta}}^{(k_j+1)},\widehat{\sigma}^{(k_j+1)}\right) \rightarrow (\bm{\theta}_\infty, \sigma_\infty)
~~~\mbox{  as} ~j \rightarrow\infty.
$$ 
Working along the subsequence in $\{k_j\}$,  we get from the condition \eqref{EQ:SO1.2} in (SO1): 
$$	
||\bm{g}_{k_j}|| \leq c_k' \left|\left|\widehat{\bm{\theta}}^{(k_j+1)} - \widehat{\bm{\theta}}^{(k_j)}\right|\right|
\rightarrow 0, ~~\mbox{ as } j\rightarrow\infty,  
$$
for $\bm{g}_{k_j} \in \partial_{\bm{\theta}}\mathcal{L}_{n,\beta}(\widehat{\bm{\theta}}^{(k_j)}, \widehat{\sigma}^{(k_j)})$ 
for all $j\geq 1$. Thus, we get $||\bm{g}_{k_j}|| \rightarrow 0$ and hence $\bm{g}_{k_j} \rightarrow 0$ as $j\rightarrow\infty$. 
Since $\mathcal{L}_{n, \beta}$ is locally Lipschitz in its arguments under (A0) and (N1), 
by Proposition 2.6.2 of \cite{clarke1990optimization}, we get 
$\bm{0} \in \partial_{\bm{\theta}}\mathcal{L}_{n,\beta}(\bm{\theta}_\infty, \sigma_\infty)$.

For the second part, note that (SO2) implies 
$0\in \partial_\sigma \mathcal{L}_{n,\beta}(\widehat{\bm{\theta}}^{(k)}, \widehat{\sigma}^{(k)})$ for each $k\geq1$. 
But, since $\mathcal{L}_{n, \beta}$ is locally Lipschitz in its arguments under (A0) and (N1), 
by Proposition 2.1.5 of \cite{clarke1990optimization}, $\partial_\sigma \mathcal{L}_{n,\beta}$ is upper semi-continuous. 
So, taking limit along the subsequence in $\{k_j\}$, we get  
$0 \in {\partial_\sigma}\mathcal{L}_{n,\beta}(\bm{\theta}_\infty, \sigma_\infty)$, completing the proof.

\subsection{Proof of Proposition \ref{PROP:theta-updtae-dg1}}

Let us fix any input to the algorithm and fix $k\geq 0$.
First note that, under Assumption (A0) and (N1), the DPD loss function, 
viewed as a function of $\bm{\theta}$ alone with $\sigma=\widehat{\sigma}^{(k)}$, 
is upper-$\mathcal{C}^2$ around $\widehat{\bm{\theta}}^{(k)}$, as per Definition 10.29 of \cite{rockafellar1998variational}. 
Then, by Proposition 3.2 of \cite{aragon2025nonmonotone}, we get $\bm{g}_{k} \in 
\partial_{\bm{\theta}}\mathcal{L}_{n,\beta}(\widehat{\bm{\theta}}^{(k)}, \widehat{\sigma}^{(k)})$ and a constant $C_\kappa\geq 0$ such that 
\begin{eqnarray}
	\mathcal{L}_{n,\beta}\left(\widehat{\bm{\theta}}^{(k+1)},\widehat{\sigma}^{(k)}\right) 
	\leq \mathcal{L}_{n,\beta}\left(\widehat{\bm{\theta}}^{(k)},\widehat{\sigma}^{(k)}\right)
	+ \big<\bm{g}_k,  \widehat{\bm{\theta}}^{(k+1)} - \widehat{\bm{\theta}}^{(k)}\big>
	+ C_\kappa \left|\left|\widehat{\bm{\theta}}^{(k+1)} - \widehat{\bm{\theta}}^{(k)}\right|\right|^2.
	\label{EQ:pf.1}
\end{eqnarray}
Now, by the sub-gradient descent rule (\ref{EQ:theta-update-gd1}), 
we get $\widehat{\bm{\theta}}^{(k+1)} - \widehat{\bm{\theta}}^{(k)} = -\alpha_k \bm{g}_k$.
Substituting it in (\ref{EQ:pf.1}) and simplifying,  we get
\begin{eqnarray}
	\mathcal{L}_{n,\beta}\left(\widehat{\bm{\theta}}^{(k+1)},\widehat{\sigma}^{(k)}\right) 
	\leq \mathcal{L}_{n,\beta}\left(\widehat{\bm{\theta}}^{(k)},\widehat{\sigma}^{(k)}\right)
	- c \left|\left|\widehat{\bm{\theta}}^{(k+1)} - \widehat{\bm{\theta}}^{(k)}\right|\right|^2,
\nonumber
\end{eqnarray}
where $c = \frac{1}{\alpha_k} - C_\kappa>0$ provided we choose the step-size $\alpha_k < C_\kappa^{-1}$. 
This completes the proof of the first part (\ref{EQ:SO1.1}) of (SO1). 

Further, by the update rule in (\ref{EQ:theta-update-gd1}), it immediately follows that 
$
||\bm{g}_k|| = \frac{1}{\alpha_k}||\widehat{\bm{\theta}}^{(k+1)} - \widehat{\bm{\theta}}^{(k)}||, 
$
and hence \eqref{EQ:SO1.2} of (SO1) holds for any choice of $0< c_k' < \frac{1}{\alpha_k}$.

\subsection{Proof of Theorem \ref{THM:IF-MDPDF}}

We start with the general formula for the IFs as given in (\ref{EQ:IF-MDPDF-gen})
and simplify each terms to get the desired formulas presented in the theorem. 
Let us recall the definition of $\mathcal{I}_\beta(y,\bm{x}|\bm{\eta})$ from \eqref{EQ:I},
which depends on $\psi_{1,\beta}(s)= u(s) f^\beta(s)$, $\psi_{2,\beta} = (1 + s u(s)) f^{\beta}(s)$.
Thus, whenever the model assumption is true, i.e., $G_i = F_{i,\bm{\theta}}$, 
the density of $Y$ given $\bm{x}_i$ is $f_{i, \bm{\theta}}(y) = \frac{1}{\sigma} f\left(\frac{y - \mu(\bm{x},\bm{\theta})}{\sigma}\right)$
for all $i=1, \ldots, n$. Thus, under (A0), we get 
\begin{eqnarray}
E_{F_{i,\bm{\theta}}}\left[\mathcal{I}_\beta(Y,\bm{x}_i|\bm{\eta}) \right] = \left(
%\begin{array}{c}
[\nabla_{\bm{\theta}}\mu(\bm{x}_i, \bm{\theta})]^\top C^{(\beta)}_{0,1} 
%\\\\
~~~
I_0
%\end{array}
\right)^\top,
%= \left(\begin{array}{c}
%	0\\ C^{(\beta)}_{0,0}
%\end{array}\right), 
\end{eqnarray}
where $C^{(\beta)}_{0,1}=0$ since $f$ has sub-exponential tails (Lemma \ref{LEM:A0}(i))
and the second integral is computed as (again using Lemma \ref{LEM:A0}(vi)) 
%$C^{(\beta)}_{0,1} = \int u(s)f^{1+\beta}(s)ds =0$ and 
$$
I_0 = \int (1 + s u(s)) f^{1+\beta}(s) ds = C^{(\beta)}_{0,0} + C^{(\beta)}_{1,1} 
= C^{(\beta)}_{0,0} - \frac{1}{1+\beta}C^{(\beta)}_{0,0} = \frac{\beta}{1+\beta}C^{(\beta)}_{0,0}>0.
$$
Next, in order to find $\boldsymbol{\Psi}_n(\boldsymbol{\eta})$, we note that (through standard differentiation and integration) 
\begin{eqnarray*}
&& E_{F_{i,\bm{\theta}}}\left[\nabla_{\bm{\eta}}\mathcal{I}_\beta(Y,\bm{x}_i|\bm{\eta}) \right]
\\
&=& - \frac{1}{\sigma} \begin{bmatrix}
	\begin{array}{cc}
		[\nabla_{\bm{\theta}}\mu(\bm{x}_i, \bm{\theta})]^\top [\nabla_{\bm{\theta}}\mu(\bm{x}_i, \bm{\theta})] I_1		
		- 	 \sigma[\nabla_{\bm{\theta}}^2 \mu(\bm{x}_i, \bm{\theta})]C^{(\beta)}_{0,1} 
 		&  [\nabla_{\bm{\theta}} \mu(\bm{x}_i, \bm{\theta})] I_2
 		\\\\
 		[\nabla_{\bm{\theta}} \mu(\bm{x}_i, \bm{\theta})]^\top I_2 
 		&  I_3
\end{array}
\end{bmatrix},
\end{eqnarray*}
where we again compute the integrals, under (A0), from Lemma \ref{LEM:A0} as follows:
\begin{eqnarray*}
C^{(\beta)}_{0,1} &=& \int u(s) f^{1+\beta}(s) ds = 0, ~~~~\mbox{ since $f$ has sub-exponential tails, }
\nonumber\\
I_{1} &=& \int (u'(s)+\beta u^2(s)) f^{1+\beta}(s) ds = -(1+\beta)  C^{(\beta)}_{0,2} + \beta C^{(\beta)}_{0,2} = - C^{(\beta)}_{0,2},
~~
\nonumber\\
I_{2} &=&  \int (s u'(s) + \beta s u^2(s))f^{1+\beta}(s) = - C^{(\beta)}_{0,1} - (1+\beta) C^{(\beta)}_{1,2} + \beta C^{(\beta)}_{1,2}
= - C^{(\beta)}_{1,2},
~~
\nonumber\\
I_3 &=& \int (\beta s u(s) + \beta s^2u^2(s) + s u(s) + s^2 u'(s))f^{1+\beta}(s) ds
\nonumber\\
&=& \beta C^{(\beta)}_{1,1} + \beta C^{(\beta)}_{2,2} + C^{(\beta)}_{1,1} - 2C^{(\beta)}_{1,1} - (1+\beta) C^{(\beta)}_{2,2}
\nonumber\\
&=& (\beta -1) C^{(\beta)}_{1,1} - C^{(\beta)}_{2,2} = - \left[C^{(\beta)}_{2,2} - \frac{1-\beta}{1+\beta} C^{(\beta)}_{0,0}\right]
= - \widetilde{C}^{(\beta)}_{2,2}.
\nonumber
\end{eqnarray*}

Now, under the assumption of the theorem, we have $C^{(\beta)}_{1,2}=0$, and hence we can simplify 
$\boldsymbol{\Psi}_n(\boldsymbol{\eta})$ to have the form
\begin{eqnarray*}
\boldsymbol{\Psi}_n(\boldsymbol{\eta})	&=&  \frac{1}{n\sigma} \begin{bmatrix}
		\begin{array}{cc}
			\bm{\dot{\mu}}_n(\bm{\theta})^\top \bm{\dot{\mu}}_n(\bm{\theta}) C^{(\beta)}_{0,2}
			&  \bm{0}
			\\\\
			\bm{0}^\top &  n \widetilde{C}^{(\beta)}_{2,2}
		\end{array}
	\end{bmatrix}.
\end{eqnarray*}
%with $\widetilde{C}^{(\beta)}_{2,2} = \left[C^{(\beta)}_{2,2} - \frac{1-\beta}{1+\beta} C^{(\beta)}_{0,0}\right]$.

Substituting all these in the expression of IF in (\ref{EQ:IF-MDPDF-gen}) and simplifying, we get 
\begin{eqnarray}
\frac{1}{n\sigma} [\bm{\dot{\mu}}_n(\bm{\theta})^\top \bm{\dot{\mu}}_n(\bm{\theta})] C^{(\beta)}_{0,2} 
~IF_i(t,\bm{T}_\beta^{\bm{\theta}}, \overline{F}_{n,\bm{\theta}})   
&=& -\frac{1}{n} \psi_{1,\beta}\left(\frac{t - \mu(\bm{x}_i,\bm{\theta})}{\sigma}\right)  \nabla_{\bm{\theta}}\mu(\bm{x}_i,\bm{\theta}),
\nonumber\\
\frac{1}{\sigma} \widetilde{C}^{(\beta)}_{2,2} ~IF_i(t,{T}_\beta^{\sigma}, \overline{F}_{n,\bm{\theta}})  
&=& - \frac{1}{n} \left[\psi_{2,\beta}\left(\frac{t - \mu(\bm{x}_i,\bm{\theta})}{\sigma}\right)  
- \frac{\beta C^{(\beta)}_{0,0}}{1+\beta}\right].
\nonumber
\end{eqnarray}
Then the theorem follows from the above two equations, by noting that $\bm{v}_n(\bm{\theta})$ must belongs to 
$\in Ker(\bm{\dot{\mu}}_n(\bm{\theta})^\top \bm{\dot{\mu}}_n(\bm{\theta})) = Ker(\bm{\dot{\mu}}_n(\bm{\theta}))$.

\subsection{Proof of Corollary \ref{COR:IF-predictor}}

Let us define $\mu_{n,\beta,\epsilon}^*(\boldsymbol{x}) =  \mu(\bm{x}, \bm{T}_\beta(\overline{G}_{n,\epsilon}))$ 
for any $\bm{x}\in\mathcal{X}$, so that we have 
\begin{eqnarray}
	IF_i(t, \mu_{n, \beta}^\ast(\bm{x}), \overline{F}_{n,\bm{\theta}}) 
	= \lim_{\epsilon\to 0} \frac{\mu_{n, \beta, \epsilon}^\ast(\bm{x})-\mu_{n, \beta, 0}^\ast(\bm{x})}{\epsilon}  
=  \frac{\partial}{\partial\epsilon} \mu_{n, \beta, \epsilon}^\ast(\bm{x})\bigg|_{\epsilon=0}.
	\nonumber
\end{eqnarray}
By the chain rule of differentiation, we then get 
\begin{eqnarray}
	IF_i(t, \mu_{n, \beta}^\ast(\bm{x}), \overline{F}_{n,\bm{\theta}}) 
	& = &  \frac{\partial}{\partial\epsilon} \mu_{n, \beta, \epsilon}^\ast(\bm{x})\bigg|_{\epsilon=0}
	 =   [\nabla_{\bm{\theta}}\mu(\bm{x}, \bm{T}_\beta(\overline{G}_{n}))]^\top 
	\frac{\partial}{\partial\epsilon} \bm{T}_\beta(\overline{G}_{n,\epsilon})\bigg|_{\epsilon=0}
	\nonumber\\
& = &  [\nabla_{\bm{\theta}}\mu(\bm{x}, \bm{\theta})]^\top 		 IF_i(t,\bm{T}_\beta^{\bm{\theta}}, \overline{F}_{n,\bm{\theta}}).
	\nonumber
\end{eqnarray}
Now, using the expression of IF of $\bm{T}_\beta^{\bm{\theta}}$ from Theorem \ref{THM:IF-MDPDF}, 
we get 
\begin{eqnarray}
	IF_i(t, \mu_{n, \beta}^\ast(\bm{x}), \overline{F}_{n,\bm{\theta}}) 
	=  - \frac{\sigma}{C^{(\beta)}_{0,2}} \psi_{1,\beta}\left(\frac{t - \mu(\bm{x}_i,\bm{\theta})}{\sigma}\right) 
	\mathcal{H}_n(\bm{x}, \bm{\theta}) + [\nabla_{\bm{\theta}}\mu(\bm{x},\bm{\theta})]^\top\bm{v}_n(\bm{\theta}),
\nonumber
\end{eqnarray}
for some (possibly non-unique) $\bm{v}_n(\bm{\theta})\in Ker(\bm{\dot{\mu}}_n(\bm{\theta}))$.
The proof of the corollary is then completed by noting that  
$[\nabla_{\bm{\theta}}\mu(\bm{x},\bm{\theta})]^\top\bm{v}_n(\bm{\theta}) = 0$ for any $\bm{x}\in\mathcal{A}$,
which is obvious from the definition of the admissible feature domain $\mathcal{A}$.

\subsection{Proof of Corollary \ref{COR:IF-bounded}}

The proof follows from the expression of the respective IFs 
by noting that both $\psi_{1,\beta}$ and $\psi_{2,\beta}$ are bounded for any error density $f$ satisfying (A0).
And this follows from the fact that, at any given $i, j\geq 0$ and $\beta>0$, (A0) implies
\[
|s^i u^j(s) f^{1+\beta}(s)| = |s^i| |u^j(s) f(s)| | f^{\beta}(s)| 
%\leq |s|^i P_j(|s|) |f^\beta(s)|
\le c (1+|s|)^{i+j} e^{-c|s|},
\]
for some $c>0$,   as shown in the proof of Lemma \ref{LEM:A0}(vi).

%\subsection{Proof of Corollary \ref{COR:IF-bounded0}}

\subsection{Proof of Lemma \ref{LEM:S0}}

This lemma follows from Proposition 5.7 of \cite{saumard2014log}. 
Here we present a constructive proof by defining $\{f_m\}$ explicitly for a given $f$ satisfying (A0). 

For this purpose, let us denote  $\psi_m(s) = \sqrt{\frac{m}{2\pi}} \exp(-\frac{ms^2}{2})$ 
be the density of a Gaussian distribution with mean zero and variance $1/m$ for each $m\geq1$. 
Then, denoting the convolution operator by $\circledast$, we define 
$$
f_m(s) = (f \circledast \psi_m)(s) = \int_{\mathbb{R}} f(s-t) \psi_m(t) dt,~~~m\geq 1.
$$

It is evident from the property of convolution that, since $\psi_m \in \mathcal{C}^\infty$, 
so is $f_m = f \circledast \psi_m$ for each $m\geq 1$, even if $f$ is non-differentiable at some points. 
Further, since $f$ is strictly log-concave a.e. by (A0) and $\psi_m$ is strictly log-concave everywhere,
their convolution $f_m$ is strongly log-concave on $\mathbb{R}$ by the preservation property of log-concavity \citep{saumard2014log}. 
Finally, under (A0), $f$ is continuous and vanishes at infinity (Lemma \ref{LEM:A0}),
and hence the sequence $f_m = f \circledast \psi_m$ converges uniformly to $f$ as the variance $1/m \to 0$, i.e., as $m\rightarrow\infty$.

For the second part, we note that $\psi_m$ is symmetric around its mean for all $m\geq 1$.
Thus, if $f$ is symmetric around zero, then so is its convolution $f_m$ for all $m\geq 1$,
since the convolution of two symmetric functions is known to be symmetric.

\subsection{Proof of Lemma \ref{LEM:smoothing_func}}\label{APP:smoothing_func}

Let us fix $n\geq 1$ and $\beta> 0$, and denote the compact parameter space for $\bm{\eta}$ as 
$\Theta_\eta = \Theta \times [\sigma_0, \sigma^0]$ with $\Theta$ being the compact parameter space for $\bm{\theta}$.    
Then, using the form of the DPD-loss function, we  get 
\begin{eqnarray}
&&\sup_{\bm{\eta}\in\Theta_\eta}\left|\mathcal{L}_{m,n,\beta}^*(\bm{\eta}) - \mathcal{L}_{n,\beta}^*(\bm{\eta}) \right|
%\leq  \frac{1}{n} \sum_{i=1}^n \int \left|V_{m, \beta}(y,\bm{x}_i|\bm{\eta}) - V_{\beta}(y,\bm{x}_i|\bm{\eta})\right|g_i(y)dy,
\nonumber\\
&\leq&  \frac{1}{\sigma_0^\beta}\Delta_{m, \beta} + \left(1 + \frac{1}{\beta}\right) \frac{1}{\sigma_0^\beta} 
\frac{1}{n} \sum_{i=1}^n \int  \sup_{\bm{\eta}\in\Theta_\eta}\left|f_m^\beta\left(r_{m,i}(y|\bm{\eta})\right)  - 
f^\beta\left(r_i(y|\bm{\eta})\right)\right|g_i(y)dy,~~~~~
\label{EQ:1}
\end{eqnarray}
where $r_i(y|\bm{\eta}) = \frac{y - \mu(\bm{x}_i,\bm{\theta})}{\sigma}$, 
$r_{m,i}(y|\bm{\eta}) = \frac{y - \mu_m(\bm{x}_i,\bm{\theta})}{\sigma}$ and 
$\Delta_{m, \beta} = \left|\int f_m^{1+\beta} - \int f^{1+\beta}\right|$ for each $m\geq 1$ and $i\geq 1$.
%$V_{m, \beta}(y,\bm{x}_i|\bm{\eta})$ is defined as $V_{\beta}(y,\bm{x}_i|\bm{\eta})$ with $f$ and $\mu$ being replaced by $f)m$ and $\mu_m$, respectively. 

Now, note that $\Delta_{m, \beta}\rightarrow 0 $ as $m\rightarrow\infty$ for any $\beta>0$ under (A0),
by an application of the dominated convergence theorem (DCT) along with Lemma \ref{LEM:S0}. 
Further, for the second term in (\ref{EQ:1}), we have 
\begin{eqnarray}
&&	\frac{1}{n} \sum_{i=1}^n \int  \sup_{\bm{\eta}\in\Theta_\eta}\left|f_m^\beta\left(r_{m,i}(y|\bm{\eta})\right)  - 
	f^\beta\left(r_i(y|\bm{\eta})\right)\right|g_i(y)dy
	\nonumber\\
&\leq&	\frac{1}{n} \sum_{i=1}^n \int  \sup_{\bm{\eta}\in\Theta_\eta}\left|f_m^\beta\left(r_{m,i}(y|\bm{\eta})\right)  - 
f^\beta\left(r_{m,i}(y|\bm{\eta})\right)\right|g_i(y)dy
\nonumber\\
&& ~~~~
+ \frac{1}{n} \sum_{i=1}^n \int  \sup_{\bm{\eta}\in\Theta_\eta}\left|f^\beta\left(r_{m,i}(y|\bm{\eta})\right)  - 
f^\beta\left(r_i(y|\bm{\eta})\right)\right|g_i(y)dy
\nonumber\\
&\leq& \sup\limits_{z\in \mathbb{R}}\left|f_m^\beta(z) - f^\beta\left(z\right)\right| +
 \int  \sup_{\bm{\eta}\in\Theta_\eta, i\geq 1}\left|f_m^\beta\left(r_{m,i}(y|\bm{\eta})\right)  - 
f^\beta\left(r_i(y|\bm{\eta})\right)\right|g_i(y)dy
\nonumber\\
&\leq& \sup\limits_{z\in \mathbb{R}}\left|f_m^\beta(z) - f^\beta\left(z\right)\right| +
\frac{K^*}{\sigma_0}  \sup_{\bm{\theta}\in\Theta, i\geq 1}\left|\mu_m(\bm{x}_i, \bm{\theta})  - \mu_m(\bm{x}_i, \bm{\theta})\right|,
	\label{EQ:2}
\end{eqnarray}
for some finite constant $K^*>0$, 
where this last step follows from local Lipschitz property of $f^\beta$ under (A0); see Lemma \ref{LEM:A0}.
Then, as $m \rightarrow \infty$, the first term in (\ref{EQ:2}) goes to zero by Lemma \ref{LEM:S0}
while the second term in (\ref{EQ:2}) goes to zero by Assumption (N2). 
Thus, combining it with \eqref{EQ:1}, we get the desired results on the uniform convergence of the DPD-loss functions, 
proving the first statement in Part (a). 

The proof for the case $\beta=0$ follows in a similar manner, and hence it is skipped. 

The second statement of Part (a), i.e., the convergence of the MDPDFs follows directly
from the uniform convergence of the DPD-loss function by the argmin continuity theorem.
It uses the facts that the parameter space is compact and, under (A0) and (N0), 
the MDPDFs exist uniquely up to symmetry groups by Theorem \ref{THM:identifiability}.

For Part (b), let us note that 
\begin{eqnarray}
\sup_{i\geq 1} |\mu_{m,n,\beta}^*(\bm{x}_i) - \mu_{n,\beta}^*(\bm{x}_i)|
&\leq & \sup_{i\geq 1} |\mu_m(\bm{x}_i, \bm{T}_{m,\beta}(\overline{G}_n)) - \mu_m(\bm{x}_i, \bm{T}_{\beta}(\overline{G}_n))|
\nonumber\\
&&+ \sup_{i\geq 1} |\mu_m(\bm{x}_i, \bm{T}_{\beta}(\overline{G}_n)) - \mu(\bm{x}_i, \bm{T}_{\beta}(\overline{G}_n))|.
~~
\label{EQ:3}
\end{eqnarray}
But, the second term in (\ref{EQ:3}) converges to zero as $m\rightarrow\infty$ by Assumption (N2). 
Also, by the Lipschitz continuity of $\mu_m$ by (N1)--(N2) and the convergence of the MDPDFs from Part (a),
we get that the first term in (\ref{EQ:3}) also goes to zero as $m\rightarrow\infty$. 
This completes the proof of the desired convergence result in (b).

\subsection{Proof of Theorem \ref{THM:BP}}

Let us first consider the case $\beta\in(0, 1]$, and define 
$D_\beta(g, f) = f^{1+\beta} + \left(1+\frac{1}{\beta}\right)f^\beta g + \frac{1}{\beta} g^{1+\beta}$ for any two densities $g$ and $f$, 
so that $d_\beta(g, f) = \int D_\beta(g, f) d\lambda$. 

By the definition of the MDPDFs from (\ref{EQ:MDPDF}), at any $m, n \geq 1$, we know that 
$\bm{\eta}_{m, \beta} =(\bm{\theta}_{m, \beta}^\top, \sigma_{m, \beta})^\top = \bm{T}_\beta(\overline{G}_{n,m, \epsilon})$ 
is a minimizer of the average DPD measure, given by $\frac{1}{n} \sum\limits_{i=1}^n d_\beta(g_{i,m, \epsilon}, f_{i, \bm{\eta}}),$
where $g_{i,m, \epsilon} = (1-\epsilon) f_{i, \bm{\eta}_*} + \epsilon k_{i,m}$ is the density of as $G_{i,m, \epsilon}$ for each $i\geq 1$.
It implies that, for any $m, n \geq 1$,  
\begin{eqnarray}
\frac{1}{n} \sum_{i=1}^n d_\beta(g_{i,m, \epsilon}, f_{i, \bm{\eta}_{m,\beta}}) \leq 
\frac{1}{n} \sum_{i=1}^n d_\beta(g_{i,m, \epsilon}, f_{i, \bm{\eta}_*}).
\label{EQ:bp1}
\end{eqnarray}
We now assume that such a sequence of MDPDFs $\{\bm{\eta}_{m, \beta}\}_{m\geq 1}$, satisfying (\ref{EQ:bp1}), 
exists for which the rRNet functional breaks down at a given $\epsilon>0$, 
i.e., $\{\bm{\eta}_{m, \beta}\}_{m\geq 1}$ satisfies (\ref{EQ:BP_def0}).
Then, our target is to derive a contradiction for $\epsilon<0.5$, proving that the breakdown cannot occur as long as $\epsilon<0.5$,
and hence $\epsilon_\beta^\ast=1/2$.

For this purpose, let us fix an $i\geq 1$ and study the limit of 
$$
d_\beta(g_{i,m, \epsilon}, f_{i, \bm{\eta}_{m,\beta}}) = \int_{A_{i,m,n}} D_\beta(g_{i,m, \epsilon}, f_{i, \bm{\eta}_{m,\beta}}) d\lambda
 + \int_{A_{i,m.n}^c} D_\beta(g_{i,m, \epsilon}, f_{i, \bm{\eta}_{m,\beta}})d\lambda, 
$$
where we define $A_{i,m,n} = \left\{ y: f_{i, \bm{\eta}_*} > \max(k_{i,m}(y), f_{i, \bm{\eta}_{m,\beta}}(y) )\right\}$.
Then, under Assumptions (A0) and (A1), using the DCT suitably, one can prove that \citep[see, e.g.,][]{jana2025bp}

\begin{eqnarray*}
\lim\limits_{m\rightarrow\infty} \left| \int_{A_{i,m,n}} D_\beta(g_{i,m, \epsilon}, f_{i, \bm{\eta}_{m,\beta}}) d\lambda 
- \frac{(1-\epsilon)^{1+\beta}}{\beta}\int f_{i, \bm{\eta}_*}^{1+\beta} d\lambda\right| &=& 0, ~~\mbox{ for all } n\geq N^*,
%\end{eqnarray*}
%and 
\\
%\begin{eqnarray*}
\lim\limits_{m\rightarrow\infty} \left| \int_{A_{i,m,n}^c} D_\beta(g_{i,m, \epsilon}, f_{i, \bm{\eta}_{m,\beta}}) d\lambda 
	- \int D_\beta(\epsilon k_{i,m}, f_{i, \bm{\eta}_{m,\beta}}) d\lambda \right| &=& 0, ~~\mbox{ for all } n\geq N^*,
\end{eqnarray*}
for some large enough $N^*\geq 1$. But, at any $i, m, n \geq1$, we get  
$\int f_{i, \bm{\eta}_*}^{1+\beta}d\lambda = \sigma_*^{-\beta} C^{(\beta)}_{0,0}, $
and 
\begin{eqnarray*}
\int D_\beta(\epsilon k_{i,m}, f_{i, \bm{\eta}_{m,\beta}}) d\lambda 
&=& \frac{1}{\sigma_{m, \beta}^\beta} C^{(\beta)}_{0,0} - \left(1+\frac{1}{\beta}\right) \epsilon 
\int  f_{i, \bm{\eta}_{m,\beta}}^\beta k_{i,m} d\lambda + \frac{\epsilon^{1+\beta}}{\beta} \int k_{i,m}^{1+\beta}d\lambda.
\end{eqnarray*}
Combining all these and averaging over $1=1, \ldots, n$, 
we get for sufficiently large $n\geq N^*$, 
\begin{eqnarray*}
\lim\limits_{m\rightarrow\infty} \frac{1}{n} \sum_{i=1}^n d_\beta(g_{i,m, \epsilon}, f_{i, \bm{\eta}_{m,\beta}})
&=& \frac{(1-\epsilon)^{1+\beta}}{\beta \sigma_*^\beta} C^{(\beta)}_{0,0} + 
\frac{1}{\sigma_{*, \beta}^\beta} C^{(\beta)}_{0,0} %- \left(1+\frac{1}{\beta}\right) \epsilon C_k
 + \frac{\epsilon^{1+\beta}}{\beta} C_{k,0},
\end{eqnarray*}
where $\sigma_{*,\beta} = \lim\limits_{m\rightarrow\infty} \sigma_{m, \beta}$ (may be infinite) and 
$C_{k, 0} = \lim\limits_{m\rightarrow\infty} \frac{1}{n} \sum_{i=1}^n \int k_{i,m}^{1+\beta}d\lambda$, 
which is finite by (A1). In the above, we have also used the fact that, under (A0)--(A1), 
for any fixed $n\geq 1$ and sequence $\{\bm{\eta}_{m,\beta}\}_{m\geq 1}$ satisfying (\ref{EQ:BP_def0}), we have
$$
\lim\limits_{m\rightarrow\infty} \frac{1}{n} \sum_{i=1}^n \int f_{i, \bm{\eta}_{m,\beta}}^\beta k_{i,m} d\lambda = 0.
$$
The proof of this last limit follows by breaking the integral over a suitable compact $B\subset \mathbb{R}$ and $B^c$, 
where each  of them goes to zero by (A1) and (A0), respectively.

Similarly, breaking the integrals over the events $B_{i,m,n} = \left\{ y : k_{i,m}(y) > f_{i, \bm{\eta}_*}(y)\right\}$ and its complement, 
and proceeding similarly, we get 
\begin{eqnarray*}
	\lim\limits_{m\rightarrow\infty} \frac{1}{n} \sum_{i=1}^n d_\beta(g_{i,m, \epsilon}, f_{i, \bm{\eta}_{*}})
	&=&  \left[\frac{(1-\epsilon)^{1+\beta}}{\beta} + 1 - (1-\epsilon) \left(1+\frac{1}{\beta}\right)\right] \frac{C^{(\beta)}_{0,0}}{\sigma_{*}^\beta} + \frac{\epsilon^{1+\beta}}{\beta} C_{k,0}.
\end{eqnarray*}

Therefore, we get a contradiction to (\ref{EQ:bp1}), i.e., breakdown cannot occur, if  
\begin{eqnarray}
	&&
	 \lim\limits_{m\rightarrow\infty} \frac{1}{n} \sum_{i=1}^n d_\beta(g_{i,m, \epsilon}, f_{i, \bm{\eta}_{m,\beta}})>
	\lim\limits_{m\rightarrow\infty} \frac{1}{n} \sum_{i=1}^n d_\beta(g_{i,m, \epsilon}, f_{i, \bm{\eta}_{*}})
	\nonumber\\
\Leftrightarrow && 
\frac{1}{\sigma_{*, \beta}^\beta} C^{(\beta)}_{0,0}  > 
\left[1 - (1-\epsilon) \left(1+\frac{1}{\beta}\right)\right] \frac{C^{(\beta)}_{0,0}}{\sigma_{*}^\beta}
	\nonumber\\
\Leftrightarrow && 
\frac{\sigma_*^\beta}{\sigma_{*, \beta}^\beta}  > 
\left[1 - (1-\epsilon) \left(1+\frac{1}{\beta}\right)\right]
\label{EQ:bp2}
\end{eqnarray}
But, since $0<\beta\leq 1$, we get $\left[1 - (1-\epsilon) \left(1+\frac{1}{\beta}\right)\right]<0$ for all $\epsilon<\frac{1}{2}$.
Then, since the LHS is always non-negative (it can be zero if $\sigma_{m, \beta} \rightarrow\infty$), \eqref{EQ:bp2} holds, 
producing a contadiction to (\ref{EQ:bp1}) for all $\epsilon<\frac{1}{2}$.
In other words, no sequence of rRNet functionals can have breakdown as per (\ref{EQ:BP_def0}) whenever $\epsilon<\frac{1}{2}$,
proving $\epsilon_\beta^* = \frac{1}{2}$ for all $\beta \in(0,1]$.

For the case $\beta=0$, let us note that 
$\bm{\eta}_{m, 0} =(\bm{\theta}_{m, 0}^\top, \sigma_{m, 0})^\top = \bm{T}_0(\overline{G}_{n,m, \epsilon})$ 
is a minimizer of $	\frac{1}{n} \sum_{i=1}^n d_0(g_{i,m, \epsilon}, f_{i, \bm{\eta}_{m,\beta}})$  at any $m, n \geq 1$.
By (\ref{dpd-def}), this minimization problem is equivalent to the minimization of   
\begin{eqnarray}
- \int \log f_{i, \bm{\eta}_{m,\beta}} g_{i,m, \epsilon}
= - (1-\epsilon)\int \log f_{i, \bm{\eta}_{m,\beta}} f_{i,\bm{\eta}_0} - \epsilon \int \log f_{i, \bm{\eta}_{m,\beta}} k_{i,m}.
\end{eqnarray}
Now, consider the contaminating sequence $\{K_{i,m}\}$ to be the sequence of degenerate distributions $\{\wedge_{y_m}\}$
with $y_m \rightarrow\infty$ as $m\rightarrow\infty$. 
Then, for this particular contaminating sequence and any $\epsilon>0$, we have 
\begin{eqnarray}
	- \int \log f_{i, \bm{\eta}_{m,\beta}} g_{i,m, \epsilon}
	= - (1-\epsilon)\int \log f_{i, \bm{\eta}_{m,\beta}} f_{i,\bm{\eta}_0} - \epsilon \log f_{i, \bm{\eta}_{m,\beta}}(y_m)
	\rightarrow - \infty, 
%	~~~\mbox{ as } m \rightarrow\infty.  
\end{eqnarray}
as $m \rightarrow\infty$. 
Thus, the resulting minimizer must break down for this choice of contaminating sequence whenever $\epsilon>0$, 
implying that its ABP is zero.

\section{Additional Figures and Tables}\label{add-results}

We now present all additional figures and tables showing further empirical illustrations, 
in the order they are referenced in the main paper.
\bigskip

\begin{figure}[!h]
	\centering
	\includegraphics[width=\textwidth]{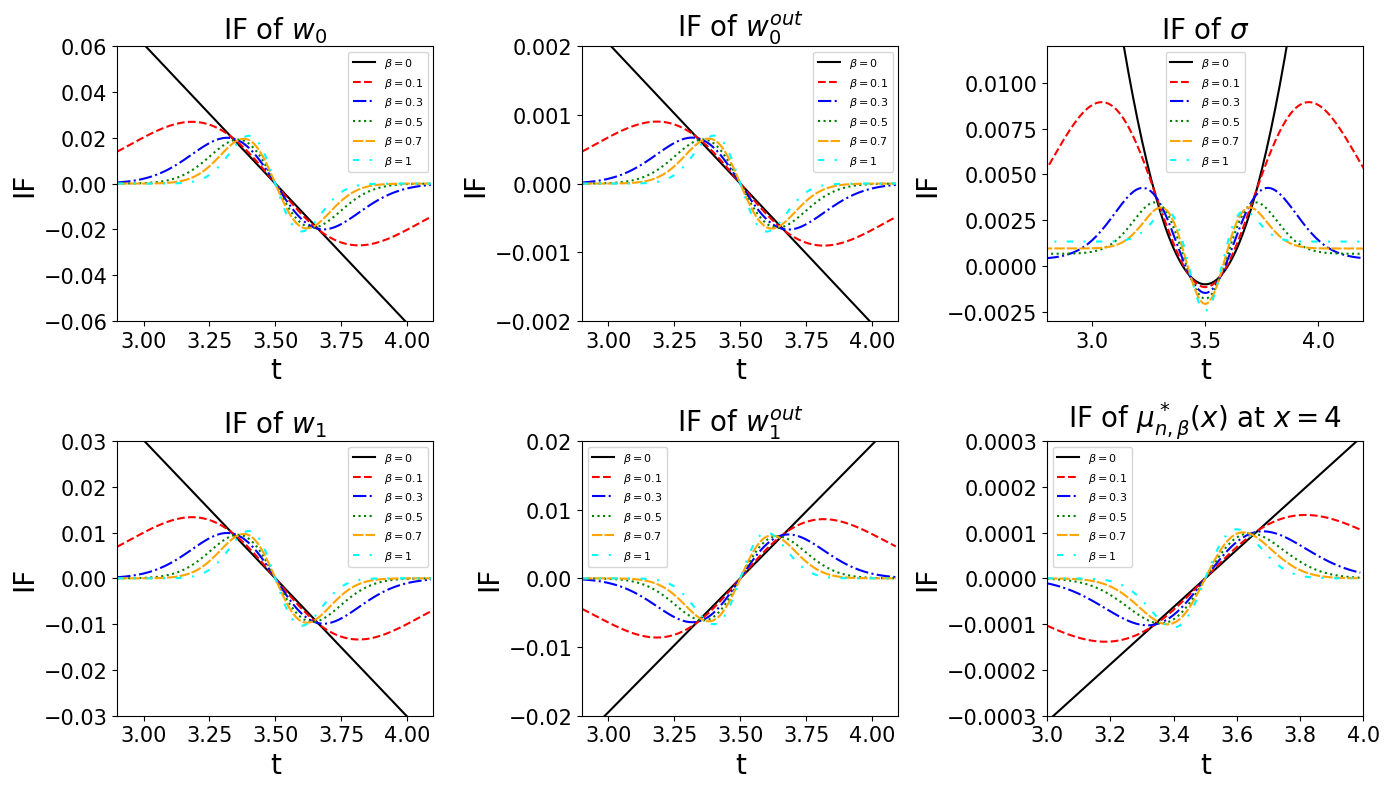}
	\caption{IFs of the MDPDFs and the rRNet predictors for a simple MLP, with sigmoid activation and Gaussian error, 
		under contamination in the 49-th observation [The case $\beta=0$ represents the standard LSE based training]}
	\label{fig:IF-NN-49}
\end{figure}

\begin{figure}
	\centering
	\includegraphics[width=\textwidth]{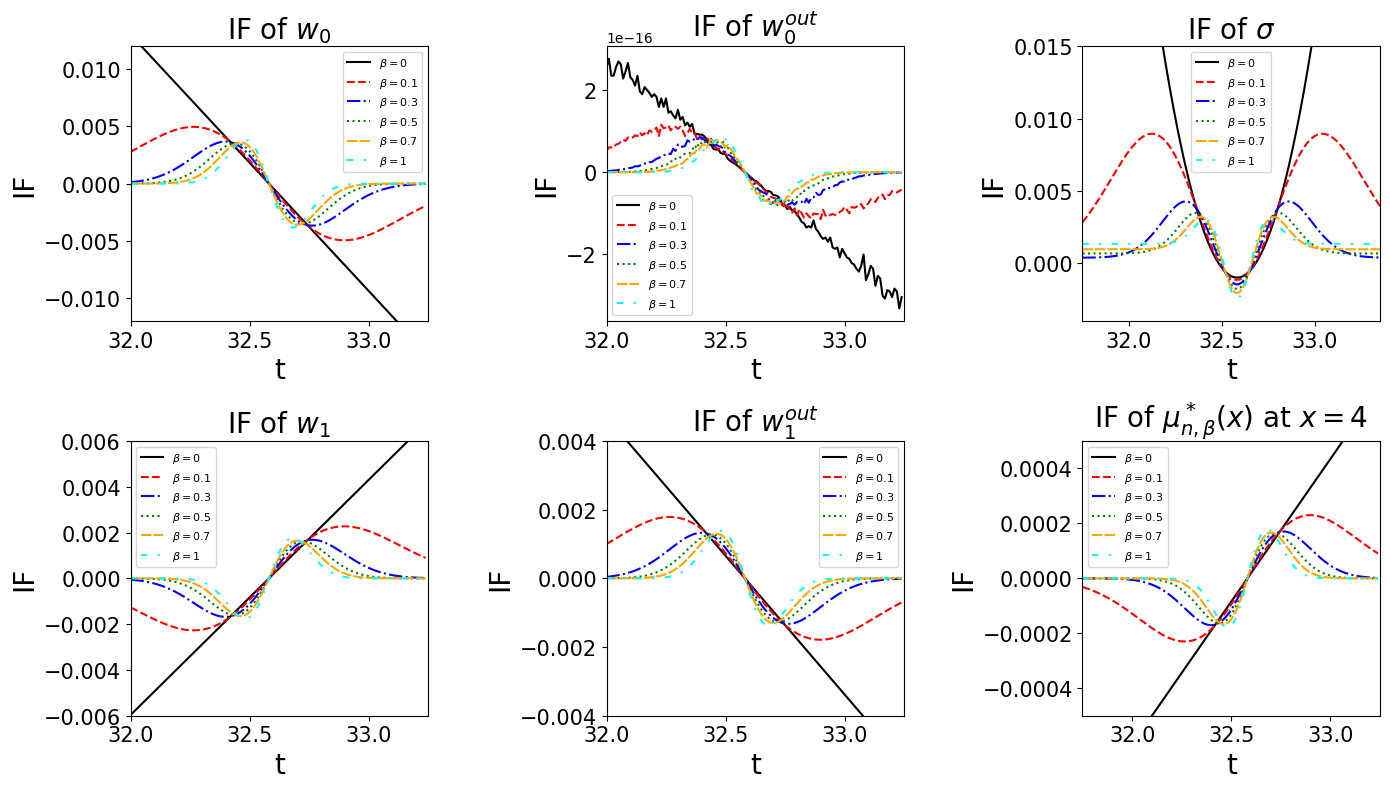}
	\caption{IFs of the MDPDFs and the rRNet predictors for a simple MLP, with ReLU activation and Gaussian error, 
		under contamination in the 49-th observation [The case $\beta=0$ represents the standard LSE based training]}
	\label{fig:IF-relu-10-49}
\end{figure}

\begin{table}
	\centering
	\caption{Average train TMSE and test MSE obtained while approximating the function $\varphi_1$ based on sampled data with contamination proportion $\delta$ (the minimum is highlighted in bold font)}
	\begin{tabular}{l|rrrr|rrrr}
		\hline
		NN training & \multicolumn{4}{c}{Avg TMSE (on training data)} & \multicolumn{4}{|c}{Avg MSE (on test data)} \\
		\multicolumn{1}{c|}{methods } & $\delta \rightarrow~~$0\% & 10\% & 20\% & 30\% & $\delta \rightarrow~~$0\% & 10\% & 20\% & 30\% \\ \hline
		LSE &\textbf{ 0.0114} & 0.0530 & 0.1626 & 0.3178 & \textbf{0.0118} & 0.0562 & 0.1810 & 0.3872 \\[.3em]
		\multicolumn{9}{l}{\underline{Proposed rRNet: DPD-loss with tuning parameter $\beta$}}\\
		$\beta$ = 0.1 & 0.0115 & \textbf{0.0114} & 0.0899 & 0.2425 & \textbf{0.0118} & \textbf{0.0121} & 0.0994 & 0.2914 \\
		$\beta$ = 0.3 & 0.0117 & 0.0120 & \textbf{0.0118} & 0.0191 & 0.0121 & 0.0126 & \textbf{0.0128} & 0.0218 \\
		$\beta$ = 0.5 & 0.0124 & 0.0122 & 0.0119 & \textbf{0.0115} & 0.0127 & 0.0128 & \textbf{0.0128} & \textbf{0.0128} \\
		$\beta$ = 0.7 & 0.0125 & 0.0122 & 0.0119 & 0.0117 & 0.0127 & 0.0128 & \textbf{0.0128} & 0.0130 \\
		$\beta$ = 1 & 0.0129 & 0.0130 & 0.0129 & 0.0129 & 0.0131 & 0.0136 & 0.0140 & 0.0145 \\
		\multicolumn{9}{l}{\underline{Existing robust losses}}\\ 
		MAE & 0.0121 & 0.0121 & 0.0128 & 0.0148 & 0.0124 & 0.0128 & 0.0140 & 0.0170 \\
		LTA & 0.0121 & 0.0121 & 0.0128 & 0.0149 & 0.0124 & 0.0128 & 0.0140 & 0.0171 \\
		LTS & 0.\textbf{0114} & 0.0530 & 0.1627 & 0.3181 & 0.0118 & 0.0561 & 0.1811 & 0.3876 \\
		LMLS & 0.0115 & 0.0162 & 0.0330 & 0.0727 & 0.0118 & 0.0172 & 0.0363 & 0.0841 \\ 
		Huber's M & 0.0117 & 0.0120 & 0.0154 & 0.0328 & 0.0120 & 0.0128 & 0.0169 & 0.0376 \\
		Tukey's M & 0.0126 & 0.0122 & 0.0119 & 0.0124 & 0.0128 & 0.0128 & \textbf{0.0128} & 0.0139 \\
		\hline
	\end{tabular}
	\label{tab:f1}
\end{table}

\begin{table}
	\centering
	\caption{Average train TMSE and test MSE obtained while approximating the function $\varphi_2$ based on sampled data with contamination proportion $\delta$ (the minimum is highlighted in bold font)}
	\begin{tabular}{l|rrrr|rrrr}
		\hline
		NN training & \multicolumn{4}{c}{Avg TMSE (on training data)} & \multicolumn{4}{|c}{Avg MSE (on test data)} \\
		\multicolumn{1}{c|}{methods } & $\delta \rightarrow~~$0\% & 10\% & 20\% & 30\% & $\delta \rightarrow~~$0\% & 10\% & 20\% & 30\% \\ \hline
		LSE & 0.0095 & 0.0729 & 0.1791 & 0.3120 & \textbf{0.0108} & 0.0903 & 0.2404 & 0.4632 \\[.3em]
		\multicolumn{9}{l}{\underline{Proposed rRNet: DPD-loss with tuning parameter $\beta$}}\\
		$\beta=0.1$ & \textbf{0.0093} & \textbf{0.0089} & 0.0531 & 0.1916 & 0.0109 & \textbf{0.0110} & 0.0726 & 0.2935 \\
		$\beta=0.3$ & 0.0094 & \textbf{0.0089} & \textbf{0.0085} & \textbf{0.0080} & 0.0109 & \textbf{0.0110} & \textbf{0.0112} & \textbf{0.0114} \\
		$\beta=0.5$ & 0.0095 & 0.0090 & \textbf{0.0085} & \textbf{0.0080} & 0.0110 & 0.0111 & 0.0113 & 0.0115 \\
		$\beta=0.7$ & 0.0096 & 0.0091 & 0.0086 & \textbf{0.0080} & 0.0111 & 0.0112 & 0.0114 & 0.0115 \\
		$\beta=1$ & 0.0099 & 0.0093 & 0.0087 & 0.0081 & 0.0114 & 0.0115 & 0.0115 & 0.0116 \\
		\multicolumn{9}{l}{\underline{Existing robust losses}}\\
		MAE & 0.0204 & 0.0195 & 0.0196 & 0.0212 & 0.0209 & 0.0214 & 0.0229 & 0.0270 \\
		LTA & 0.0098 & 0.0096 & 0.0097 & 0.0108 & 0.0113 & 0.0117 & 0.0126 & 0.0161 \\
		LTS & 0.0095 & 0.0729 & 0.1791 & 0.3118 & \textbf{0.0108} & 0.0903 & 0.2404 & 0.4629 \\
		LMLS & 0.0094 & 0.0131 & 0.0219 & 0.0383 & \textbf{0.0108} & 0.0157 & 0.0285 & 0.0584 \\ 
		Huber's M & 0.0094 & 0.0102 & 0.0174 & 0.0416 & 0.0109 & 0.0124 & 0.0224 & 0.0621 \\
		Tukey's M & 0.0202 & 0.0090 & 0.0094 & 0.0134 & 0.0217 & 0.0111 & 0.0122 & 0.0193 \\
		\hline
	\end{tabular}
	\label{tab:f2}
\end{table} 

\begin{table}
	\centering
	\caption{Average train TMSE and test MSE obtained while approximating the function $\varphi_4$ based on sampled data with contamination proportion $\delta$ (the minimum is highlighted in bold font)}
	\begin{tabular}{l|rrrr|rrrr}
		\hline
		NN training & \multicolumn{4}{c}{Avg TMSE (on training data)} & \multicolumn{4}{|c}{Avg MSE (on test data)} \\
		\multicolumn{1}{c|}{methods } & $\delta \rightarrow~~$0\% & 10\% & 20\% & 30\% & $\delta \rightarrow~~$0\% & 10\% & 20\% & 30\% \\ \hline
		LSE & 0.0156 & 0.0649 & 0.1717 & 0.3168 & 0.0169 & 0.0725 & 0.2016 & 0.4019 \\[.3em]
		\multicolumn{9}{l}{\underline{Proposed rRNet: DPD-loss with tuning parameter $\beta$}}\\
		$\beta$ = 0.1 & \textbf{0.0121} & \textbf{0.0119} & 0.0571 & 0.1902 & \textbf{0.0130} & \textbf{0.0136} & 0.0681 & 0.2404 \\
		$\beta$ = 0.3 & 0.0124 & 0.0120 & \textbf{0.0118} & \textbf{0.0120} & 0.0134 & 0.0138 & \textbf{0.0145} & \textbf{0.0160} \\
		$\beta$ = 0.5 & 0.0132 & 0.0127 & 0.0123 & 0.0121 & 0.0143 & 0.0147 & 0.0154 & 0.0163 \\
		$\beta$ = 0.7 & 0.0145 & 0.0138 & 0.0132 & 0.0126 & 0.0158 & 0.0161 & 0.0166 & 0.0172 \\
		$\beta$ = 1 & 0.0172 & 0.0157 & 0.0145 & 0.0134 & 0.0188 & 0.0185 & 0.0184 & 0.0185 \\
		\multicolumn{9}{l}{\underline{Existing robust losses}}\\
		MAE & 0.0158 & 0.0161 & 0.0172 & 0.0190 & 0.0171 & 0.0190 & 0.0219 & 0.0264 \\
		LTA & 0.0161 & 0.0167 & 0.0179 & 0.0195 & 0.0176 & 0.0197 & 0.0227 & 0.0272 \\
		LTS & 0.0156 & 0.0648 & 0.1717 & 0.3171 & 0.0169 & 0.0724 & 0.2016 & 0.4023 \\
		LMLS & 0.0139 & 0.0209 & 0.0283 & 0.0410 & 0.0151 & 0.0241 & 0.0348 & 0.0539 \\ 
		Huber's M & 0.0132 & 0.0149 & 0.0211 & 0.0337 & 0.0143 & 0.0174 & 0.0262 & 0.0447 \\
		Tukey's M & 0.0128 & 0.0126 & 0.0137 & 0.0164 & 0.0139 & 0.0146 & 0.0170 & 0.0218 \\
		\hline
	\end{tabular}
	\label{tab:f3}
\end{table}

\begin{table}
	\centering
	\caption{Average train TMSE and test MSE obtained while approximating the function $\varphi_7$ based on sampled data with contamination proportion $\delta$ (the minimum is highlighted in bold font)}
	\begin{tabular}{l|rrrr|rrrr}
		\hline
		NN training & \multicolumn{4}{c}{Avg TMSE (on training data)} & \multicolumn{4}{|c}{Avg MSE (on test data)} \\
		\multicolumn{1}{c|}{methods } & $\delta \rightarrow~~$0\% & 10\% & 20\% & 30\% & $\delta \rightarrow~~$0\% & 10\% & 20\% & 30\% \\ \hline
        LSE & \textbf{0.023} & 0.033 & 0.040 & 0.041 & 2.778 & 8.292 & 14.865 & 23.305 \\
\multicolumn{9}{l}{\underline{DPD loss with tuning parameter $\beta$}}\\
$\beta = 0.1$ & 0.105 & \textbf{0.025} & 0.031 & 0.043 & 2.552 & 3.121 & 9.856 & 19.205 \\
$\beta = 0.3$ & 0.590 & 0.296 & 0.174 & 0.138 & 2.156 & \textbf{2.293} & \textbf{2.602} & \textbf{3.098} \\
$\beta = 0.5$ & 1.196 & 0.817 & 0.904 & 0.526 & 2.377 & 2.615 & 3.437 & 4.002 \\
$\beta = 0.7$ & 1.443 & 1.248 & 1.032 & 0.695 & 2.559 & 2.999 & 3.478 & 3.928 \\
$\beta = 1  $ & 1.705 & 1.503 & 1.157 & 1.299 & 2.834 & 3.237 & 3.486 & 4.832 \\
\multicolumn{9}{l}{\underline{Existing robust losses}}\\
MAE & 0.152 & 0.037 & \textbf{0.027} & \textbf{0.025} & \textbf{2.124} & 4.441 & 8.679 & 14.414 \\
LTA & 0.152 & 0.037 & \textbf{0.027} & \textbf{0.025} & \textbf{2.124} & 4.441 & 8.679 & 14.414 \\
LTS & \textbf{0.023} & 0.033 & 0.040 & 0.041 & 2.778 & 8.292 & 14.865 & 23.305 \\
LMLS & 0.029 & 0.026 & 0.029 & 0.031 & 2.735 & 6.178 & 10.466 & 15.249 \\
Huber's M & 0.118 & 0.029 & 0.029 & 0.028 & 2.237 & 5.314 & 10.366 & 16.593 \\
Tukey's M & 13.681 & 9.775 & 7.404 & 4.851 & 13.674 & 10.876 & 9.290 & 7.170 \\ \hline
	\end{tabular}
	\label{tab:f7}
\end{table}

\begin{figure}
     \centering
     \begin{subfigure}[b]{0.32\textwidth}
         \centering
         \includegraphics[width=\textwidth]{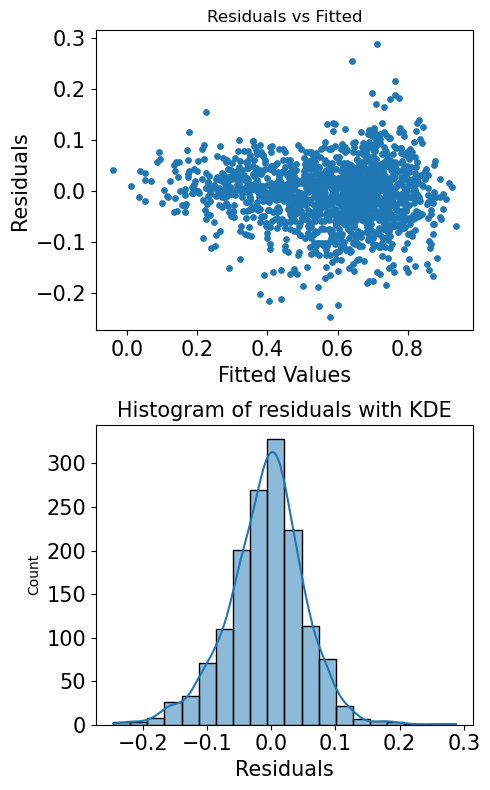}
         \caption{Standard LSE-based training}
         \label{fig:Airfoil-LSE}
     \end{subfigure}
%     \hfill
     \begin{subfigure}[b]{0.32\textwidth}
         \centering
         \includegraphics[width=\textwidth]{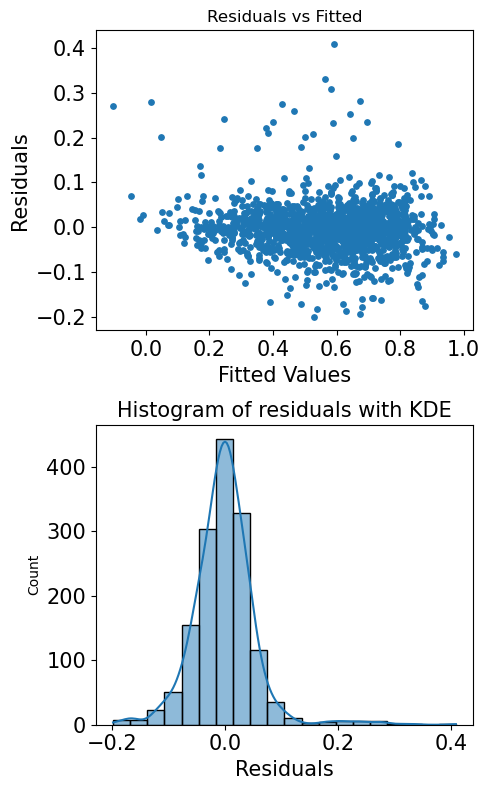}
         \caption{rRNet training with $\beta = 0.3$}
         \label{fig:Airfoil-DPD-0.3}
     \end{subfigure}
%     \hfill
     \begin{subfigure}[b]{0.32\textwidth}
         \centering
         \includegraphics[width=\textwidth]{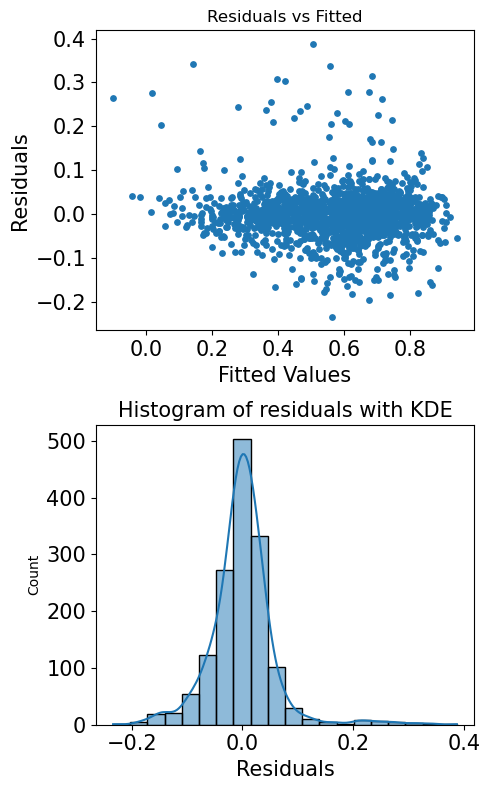}
         \caption{rRNet training with $\beta = 0.5$}
         \label{fig:Airfoil-DPD-0.5}
     \end{subfigure}
     \caption{Residual plots and histograms of the residuals obtained by different NN learning algorithms for the Airfoil Self-noise data}
     \label{fig:Airfoil}
\end{figure}

\begin{figure}
     \centering
     \begin{subfigure}[b]{0.32\textwidth}
         \centering
         \includegraphics[width=\textwidth]{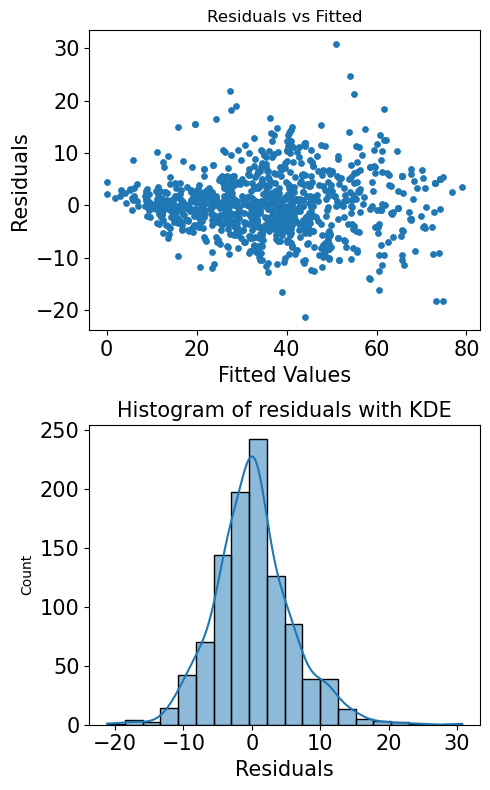}
         \caption{Standard LSE-based training}
         \label{fig:Concrete-LSE}
     \end{subfigure}
     \hfill
     \begin{subfigure}[b]{0.32\textwidth}
         \centering
         \includegraphics[width=\textwidth]{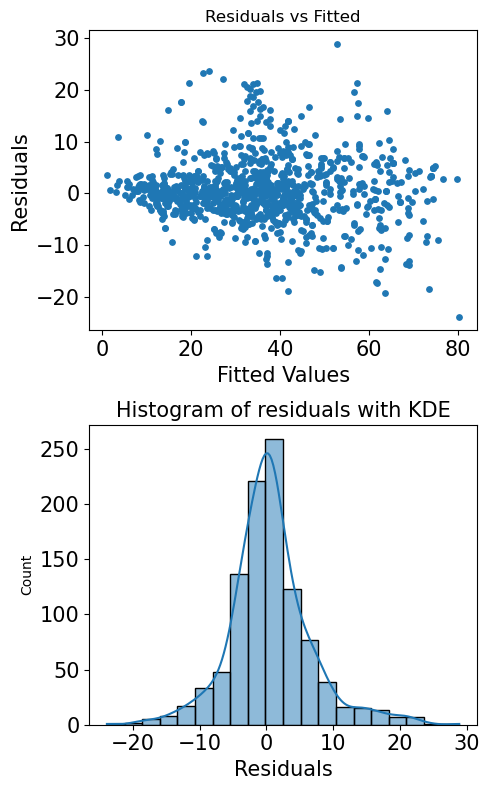}
         \caption{rRNet training with $\beta = 0.3$}
         \label{fig:Concrete-DPD-0.3}
     \end{subfigure}
     \hfill
     \begin{subfigure}[b]{0.32\textwidth}
         \centering
         \includegraphics[width=\textwidth]{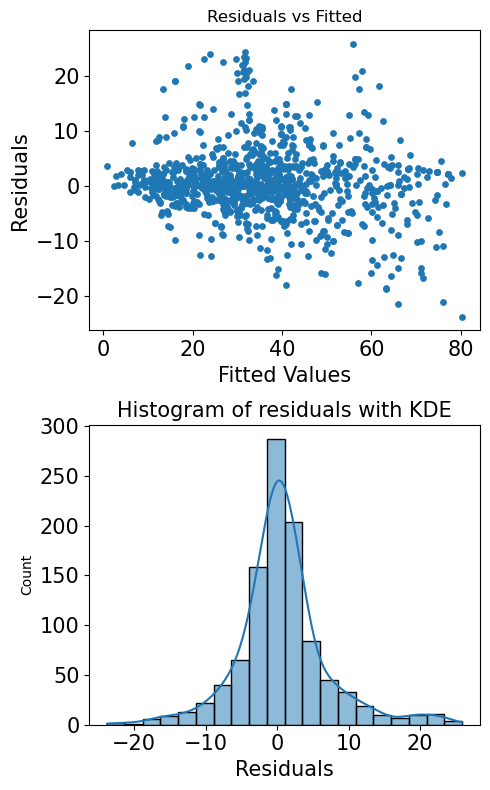}
         \caption{rRNet training with $\beta = 0.5$}
         \label{fig:Concrete-DPD-0.5}
     \end{subfigure}
     \caption{Residual plots and histograms of the residuals obtained by different NN learning algorithms for the Concrete Compressive Strength data}
     \label{fig:Concrete}
\end{figure}

\begin{figure}
     \centering
     \begin{subfigure}[b]{0.32\textwidth}
         \centering
         \includegraphics[width=\textwidth]{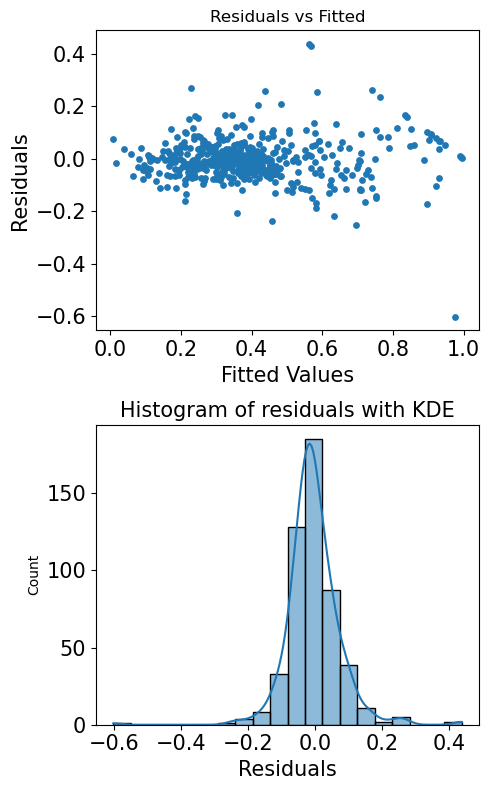}
         \caption{Standard LSE-based training}
         \label{fig:Boston-medv-LSE}
     \end{subfigure}
     \hfill
     \begin{subfigure}[b]{0.32\textwidth}
         \centering
         \includegraphics[width=\textwidth]{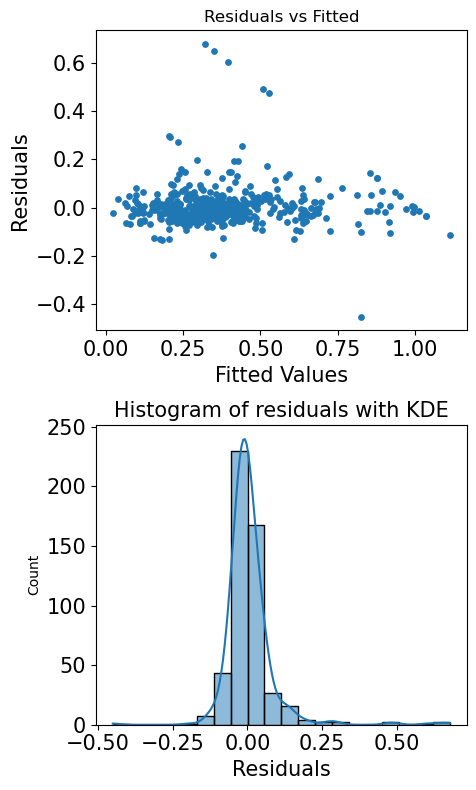}
         \caption{rRNet training with $\beta = 0.3$}
         \label{fig:Boston-medv-DPD-0.1}
     \end{subfigure}
     \hfill
     \begin{subfigure}[b]{0.32\textwidth}
         \centering
         \includegraphics[width=\textwidth]{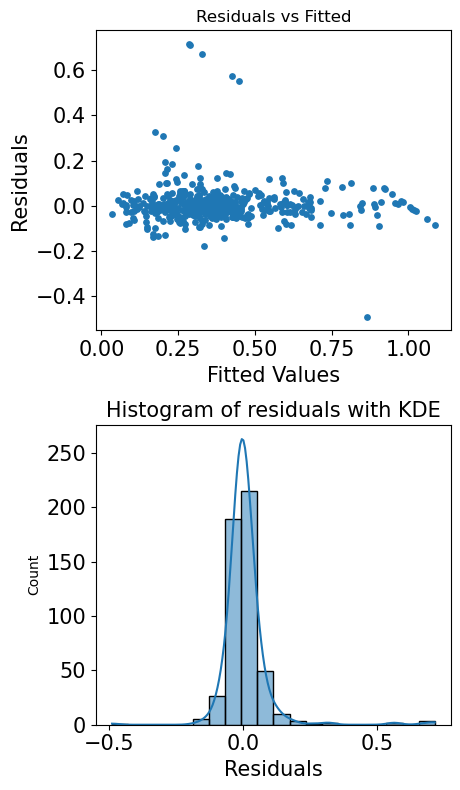}
         \caption{rRNet training with $\beta = 0.5$}
         \label{fig:Boston-medv-DPD-0.3}
     \end{subfigure}
     \caption{Residual plots and histograms of the residuals obtained by different NN learning algorithms for the Boston Housing data}
     \label{fig:Boston-medv}
\end{figure}

\clearpage
\newpage
\bibliography{Reference.bib}

\end{document}